\documentclass{article}

\usepackage{microtype}
\usepackage{graphicx}
\usepackage{subcaption}
\usepackage{booktabs} 
\usepackage{graphicx}

\usepackage{hyperref}

\usepackage[preprint]{anonymousconference}

\usepackage{amsmath}
\usepackage{amssymb}
\usepackage{mathtools}
\usepackage{amsthm}
\usepackage{bm}
\usepackage{comment}
\allowdisplaybreaks[3]
\usepackage{enumitem}

\usepackage[capitalize,noabbrev]{cleveref}

\theoremstyle{plain}
\newtheorem{theorem}{Theorem}[section]

\newtheorem{lemma}[theorem]{Lemma}

\theoremstyle{definition}
\newtheorem{definition}[theorem]{Definition}

\theoremstyle{remark}

\usepackage[textsize=tiny]{todonotes}

\anonymousconferencetitlerunning{Learning from Similarity-Confidence and Confidence-Difference}

\begin{document}

\twocolumn[
  \anonymousconferencetitle{Learning from Similarity-Confidence and Confidence-Difference}

  \anonymousconferencesetsymbol{equal}{*}

  \begin{anonymousconferenceauthorlist}
    \anonymousconferenceauthor{Tomoya Tate}{sch}
    \anonymousconferenceauthor{Kosuke Sugiyama}{sch}
    \anonymousconferenceauthor{Masato Uchida}{sch}
  \end{anonymousconferenceauthorlist}

  \anonymousconferenceaffiliation{sch}{Waseda University,
  3-4-1 Okubo, Shinjuku, Tokyo 169-8555, Japan}

  \anonymousconferencecorrespondingauthor{Tomoya Tate}{tomot203@toki.waseda.jp}
  \anonymousconferencecorrespondingauthor{Kosuke Sugiyama}{kohsuke0322@asagi.waseda.jp}
  \anonymousconferencecorrespondingauthor{Masato Uchida}{m.uchida@waseda.jp}

  \anonymousconferencekeywords{statistical learning theory, weakly supervised learning, pairwise label, unbiased risk estimator, Rademacher complexity, anonymousconference}

  \vskip 0.3in
]

\printAffiliationsAndNotice{}  

\begin{abstract}
In practical machine learning applications, it is often challenging to assign accurate labels, and increasing the number of labeled instances is often limited. In such cases, Weakly Supervised Learning (WSL), which enables training with incomplete supervision, is effective. However, most existing WSL methods focus on leveraging a single type of weak supervision, and the framework on improving learning performance by assigning multiple weak labels to a limited instances remains underdeveloped. In this paper, we propose \textit{SconfConfDiff classification}, a method that integrates two distinct forms of weak labels: similarity-confidence and confidence-difference, which are assigned to unlabeled data pairs. To implement this method, we derive two types of unbiased risk estimators for classification: one based on a convex combination of existing estimators, and another newly designed by modeling the interaction between two weak labels. We prove that both estimators achieve optimal convergence rates with respect to estimation error bounds. Furthermore, we introduce a risk correction approach to mitigate overfitting, and provide theoretical analysis on the robustness against inaccurate class prior probability and label noise. Experimental results demonstrate the effectiveness of the proposed methods across a variety of settings.  
\end{abstract}

\section{Introduction}
In recent years, machine learning models have achieved remarkable performance across a wide range of applications. However, training high-performance models typically requires large amounts of accurately labeled data. In many real-world scenarios, acquiring such labels is often expensive, time-consuming, or even impractical, thereby limiting the availability of sufficient training data. To address this challenge, various Weakly Supervised Learning (WSL) methods~\cite{sugiyama2022machine} have been proposed to exploit incomplete supervision, including Positive-Unlabeled Learning~\cite{PU-learning,PU_analysis,PU_convex,PU_NN_risk,PNU-learning}, Partial-Label Learning~\cite{partial}, Unlabeled-Unlabeled Learning~\cite{UU}, and Complementary-Label Learning~\cite{complementary,multi-complementary}. More recently, research has shifted toward learning from labeled pairs of instances rather than individual samples~\cite{SU, SD, pcomp}, demonstrating promising results. Notable examples include Similarity-Confidence (Sconf) learning~\cite{sconf} and Confidence-Difference (ConfDiff) classification~\cite{confdiff}.

In addition to labeling challenges, there are scenarios where increasing the number of instances is simply not feasible -- not due to cost or effort, but because the data pertains to phenomena that are inherently rare, unobservable, or access-restricted. For instance, in predicting the risk of rare diseases, obtaining labels may be challenging due to the need for expert knowledge. Similarly, in domains such as security and privacy, data availability may be constrained by ethical, legal, or technical barriers. When the number of instances is fundamentally limited, increasing the sample size becomes difficult regardless of intent. In such scenarios, there is a critical need for learning frameworks that enhance model performance by enriching limited data with information derived from multiple diverse perspectives.

Conventional WSL methods typically assume that each training instance is annotated with a single type of weak label, and accordingly design risk estimators specific to that label type. In contrast, we investigate a novel setting in which multiple forms of weak supervision are simultaneously available and can be utilized jointly for training. Our objective is to derive risk estimators that aggregate these labels in a unified manner. Our key hypothesis is that leveraging multiple sources of weak supervision can lead to more accurate models, even when the number of labeled instances is limited.

In this paper, we propose an effective training method, \textit{SconfConfDiff classification}, which learns a binary classifier using two types of weak labels: similarity-confidence, introduced in Sconf learning~\cite{sconf}, and confidence-difference, introduced in ConfDiff classification~\cite{confdiff}. Similarity-confidence is defined as the probability that a pair of data points belongs to the same class, while confidence-difference represents the difference between the probabilities that each point in the pair belongs to the positive class. By leveraging them, we can construct richer supervision that facilitates more effective learning. We have also analyzed and developed a framework for effectively utilizing hard weak labels such as similar/dissimilar or which one is more positive, named \textit{SD-Pcomp classification}. The results of SD-Pcomp are presented in a separate paper~\cite{SD-Pcomp}.

The main findings of this study are as follows.
First, we derive two unbiased risk estimators for binary classification, both of which can be computed from similarity-confidence and confidence-difference. In Section~\ref{sec:convex_unbiased}, we construct the first estimator as a convex combination of the unbiased estimators individually derived from Sconf learning and ConfDiff classification. While this approach can be effective, as demonstrated in the experiments, it treats similarity-confidence and confidence-difference in isolation, ignoring how they jointly reflect the relationship between labels. To address this limitation, we develop a second estimator in Section~\ref{sec:risk estimator} that directly models the interaction between the two weak supervision signals, thereby capturing their combined influence on label prediction, and we analyze the variance of the estimator in Section~\ref{sec:minimum_var}.  Furthermore, in Section~\ref{sec:bound}, 
we analyze the estimation error of our proposed method, and we theoretically investigate the impact of class prior shift and label noise on the estimation error bound in Section~\ref{sec:robust}.
Additionally, to mitigate overfitting resulting from negative empirical risk values, we introduce a risk correction approach and derive the estimation error bound under risk correction in Section~\ref{sec:risk correction}.

We conduct numerical experiments to demonstrate that the proposed method achieves higher classification performance than the individual Sconf and ConfDiff.
We describe the settings in Section~\ref{exp_setup}, and assess the classification accuracy in Section~\ref{exp:acc}.
Section~\ref{exp:few data} and~\ref{exp:robust} present experiments examining the behavior of classification performance under a reduced number of training samples and noisy settings.

\section{Related Works and Preliminaries}

In this section, we describe the problem setting and present the formulations of binary classification, Sconf learning, and ConfDiff classification.

\subsection{Ordinary Binary Classification Setting}

In a binary classification problem, let $d$-dimensional feature space be $\mathcal{X}\subseteq\mathbb{R}^d$ and the label space be $\mathcal{Y}=\{-1,+1\}$. We assume that an instance and its class label $(\bm{x},y)$ are drawn from an unknown distribution $p(\bm{x},y)$. The goal is to find a binary classifier $g:\mathcal{X}\rightarrow\mathbb{R}$ that minimizes the following classification risk~\cite{mohri}:
\begin{align}
R(g)=\mathbb{E}_{p(\bm{x},y)}[\ell(g(\bm{x}),y)],\label{classification risk}
\end{align}
where $\ell:\mathbb{R}\times\mathcal{Y}\rightarrow\mathbb{R}^+$ is a non-negative binary classification loss function, such as 0-1 loss and logistic loss. 
Under this setting, the classification risk $R(g)$ defined in Eq.~\eqref{classification risk} can also be written as follows.
\begin{align}
R(g)=\pi_+\mathbb{E}&_{p(\bm{x} \mid y=+1)}[\ell(g(\bm{x}),+1)]\notag\\
&+\pi_-\mathbb{E}_{p(\bm{x} \mid y=-1)}[\ell(g(\bm{x}),-1)],\label{classification risk 2}
\end{align}
where $\pi_+:=p(y=+1)$ and $\pi_-:=p(y=-1)$ are the class prior probabilities of the positive and negative classes, respectively.

\subsection{Similarity-Confidence (Sconf) Learning}\label{sconf setting}

In Sconf learning~\cite{sconf}, we are given unlabeled data pairs annotated with similarity-confidence, $\mathcal{D}_n=\{(\bm{x}_i,\bm{x}_i'),s_i\}_{i=1}^{n}$, without ordinary class labels. Here, $s_i=s(\bm{x}_i,\bm{x}_i')=p(y_i=y_i' \mid \bm{x}_i,\bm{x}_i')$ denotes the probability that two instances $\bm{x}_i$ and $\bm{x}_i'$ are the same class. 
An unbiased estimator of the classification risk for Sconf learning is as follows:
\begin{align}
\textstyle \widehat{R}_{\mathrm{Sconf}}(g)=\frac{1}{2n}&\textstyle\sum_{i=1}^{n}\Bigl\{\frac{(\pi_--s_i)(\ell(g(\bm{x}_i),+1)+\ell(g(\bm{x}_i'),+1))}{\pi_--\pi_+}\notag\\
&\textstyle +\frac{(\pi_+-s_i)(\ell(g(\bm{x}_i),-1)+\ell(g(\bm{x}_i'),-1))}{\pi_+-\pi_-}\Bigr\}.\label{sconf risk}
\end{align}
Here, it is assumed that the unlabeled data pair $(\bm{x}_i,\bm{x}_i')$ independently follows a probability distribution with density $p(\bm{x},\bm{x}')=p(\bm{x})p(\bm{x}')$.

\subsection{Confidence-Difference (ConfDiff) Classification}\label{confdiff setting}

In ConfDiff classification~\cite{confdiff}, we are given unlabeled data pairs annotated with confidence-difference, $\mathcal{D}_n=\{(\bm{x}_i,\bm{x}_i'),c_i\}_{i=1}^{n}$, without ordinary class labels. Here, $c_i=c(\bm{x}_i,\bm{x}_i')=p(y_i'=+1 \mid \bm{x}_i')-p(y_i=+1 \mid \bm{x}_i)$ denotes the difference in the posterior probabilities between the unlabeled data pair. 
An unbiased estimator of the classification risk for ConfDiff classification is as follows:
\begin{align}
\textstyle \widehat{R}_{\mathrm{CD}}(g)=\frac{1}{2n}\sum_{i=1}^{n}\{\mathcal{L}_\mathrm{CD}(\bm{x}_i,\bm{x}_i')+\mathcal{L}_\mathrm{CD}(\bm{x}_i',\bm{x}_i)\},\label{confdiff risk}
\end{align}
where
\begin{align*}
&\mathcal{L}_{\mathrm{CD}}(\bm{x}_i,\bm{x}_i')\\
&=(\pi_+-c_i)\ell(g(\bm{x}_i),+1)+(\pi_--c_i)\ell(g(\bm{x}_i'),-1)
\end{align*}
The assumption regarding the unlabeled data pair $(\bm{x}_i,\bm{x}_i')$ is the same as that in Section~\ref{sconf setting}.

We remark that similarity-confidence and confidence-difference encode different aspects of the relationships between data points. In particular, they can be interpreted as capturing their relative positioning from distinct geometric perspectives, as illustrated in Appendix~\ref{appendix:geometry}, which offers additional intuition about the nature of these labels.

\section{The Proposed Method and Analysis}\label{proposed}

\begin{figure*}[ht]
\vskip 0.2in
\begin{center}
\centerline{\includegraphics[width=1.0\linewidth]{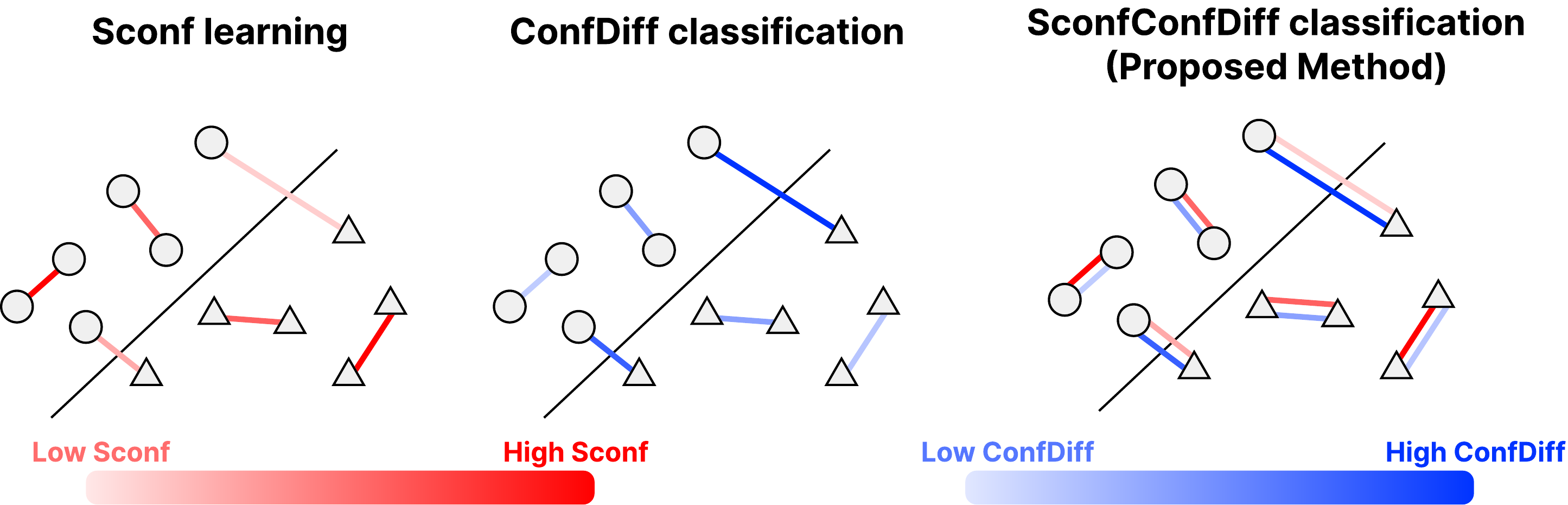}}
\caption{Illustration of Sconf learning, ConfDiff classification, and SconfConfDiff classification}
\label{fig:sconfconfdiff}
\end{center}
\vskip -0.2in
\end{figure*}

Under the above setting, we consider the problem of learning a binary classifier from unlabeled data pairs annotated with similarity-confidence and confidence-difference, $\mathcal{D}_n=\{(\bm{x}_i,\bm{x}_i'),s_i,c_i\}_{i=1}^{n}$. In this section, we formally define the proposed method and provide its theoretical guarantees. 
Furthermore, we analyze the impact of an noisy class prior probability and noise in similarity-confidence and confidence-difference, and we describe a risk correction approach to mitigate overfitting.
Theoretical proofs are provided in the supplementary materials.

\subsection{Simple Convex Combination for Unbiased Risk Estimation}\label{sec:convex_unbiased}

As previously discussed, unbiased risk estimators based solely on either similarity-confidence or confidence-difference can be expressed by Eq.~\eqref{sconf risk} and Eq.~\eqref{confdiff risk}, respectively. Given these prior results, it follows naturally that an unbiased estimator for the classification risk in Eq.~\eqref{classification risk} can be constructed as a convex combination of the two estimators. We formally present this result in the following theorem.

\begin{theorem}\label{thm:unbiased risk sum}

The following is an unbiased risk estimator of Eq.~\eqref{classification risk}.
\begin{align}
\widehat{R}_{\mathrm{SCD_{convex}}}(g)=\gamma\widehat{R}_{\mathrm{Sconf}}(g)+(1-\gamma)\widehat{R}_{\mathrm{CD}}(g),\label{unbiased risk sum}
\end{align}
where $\gamma\in[0,1]$ is an arbitrary weight.
\end{theorem}

Figure~\ref{fig:convex_failure} shows the classification performance based on empirical risk minimization as defined by Eq.~\eqref{unbiased risk sum}, where the effects of applying the ReLU correction function and the absolute value correction function are compared. The absolute value correction retains the loss even when the estimated risk is negative, thus allowing the method to leverage both similarity-confidence and confidence-difference information. As a result, a simple convex combination using the absolute value correction achieves higher classification accuracy than the original unbiased estimator without correction. In contrast, the ReLU correction replaces negative risk values with zero, thereby inappropriately ignoring the corresponding losses. This may cause the loss associated with either $s_i$ or $c_i$ to be omitted for certain instances, thereby failing to account for the inherent correlation between the two signals, which are potentially complementary and may benefit from being used together. Consequently, we observe a degradation in classification performance when using the ReLU correction, as compared to using either similarity-confidence ($\gamma = 1.0$) or confidence-difference ($\gamma = 0.0$) alone. The theoretical analysis of the convex combination approach and the associated risk correction techniques is based on existing work and is provided in Appendix~\ref{appendix_convex}.

Although the design of the risk correction function influences the estimation error, a more fundamental limitation lies in the structure of the estimator based on a simple convex combination. Specifically, the current approach computes the losses for similarity-confidence $s_i$ and confidence-difference $c_i$ independently for each data pair, and aggregates them without coherently utilizing both sources of information for individual instances. As a result, the estimator fails to fully exploit the combined effects of similarity-confidence and confidence-difference, which may structurally limit classification performance. To address this issue, we propose a novel unbiased risk estimator that directly incorporates both similarity-confidence and confidence-difference for each instance, rather than relying on a simple convex combination.

\begin{figure*}
\begin{center}
\vskip 0.2in
\centerline{\includegraphics[width=1.0\linewidth]{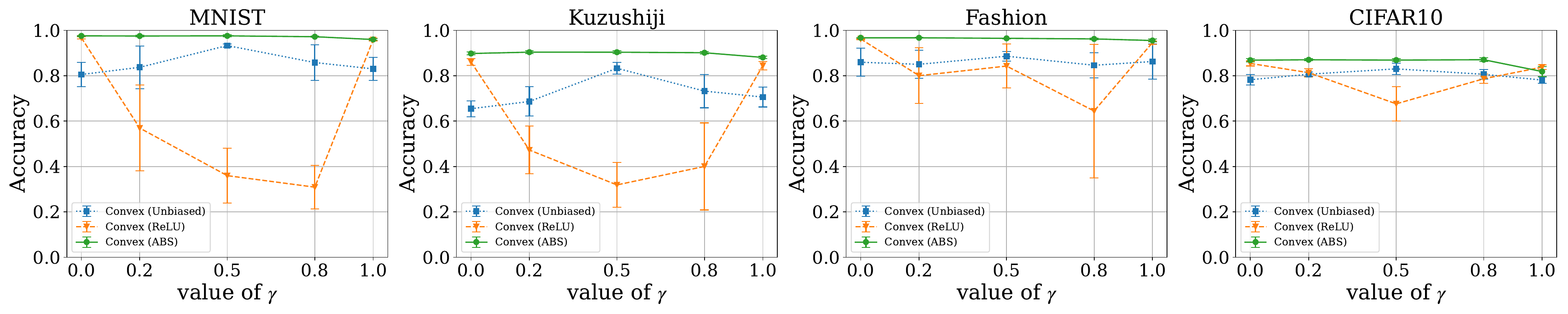}}
\caption{Experimental results for based on empirical risk minimization in Eq.~\eqref{unbiased risk sum}. $\gamma=0$ means ConfDiff Classificaiton and $\gamma=1$ means Sconf learning.}
\label{fig:convex_failure}
\end{center}
\vskip -0.2in
\end{figure*}

\subsection{Unbiased Risk Estimator of SconfConfDiff}\label{sec:risk estimator}

In this section, we show that the classification risk defined in Eq.~\eqref{classification risk} can be equivalently expressed using Sconf and ConfDiff.

\begin{theorem}\label{thm:unbiased risk}
The classification risk $R(g)$ in Eq.~\eqref{classification risk} can be equivalently expressed as
\begin{align}
\textstyle R_{\mathrm{SCD}}(g)=\mathbb{E}_{p(\bm{x},\bm{x}')}\left[\frac{1}{2}(\mathcal{L}(\bm{x},\bm{x}')+\mathcal{L}(\bm{x}',\bm{x}))\right],\label{expected risk}
\end{align}

where
\begin{align*}
\mathcal{L}&(\bm{x},\bm{x}')\\
=&(2\pi_+(\pi_+-c(\bm{x},\bm{x}'))+\pi_--s(\bm{x},\bm{x}'))\ell(g(\bm{x}),+1)\\
&+(2\pi_-(\pi_--c(\bm{x},\bm{x}'))+\pi_+-s(\bm{x},\bm{x}'))\ell(g(\bm{x}'),-1).
\end{align*}
\end{theorem}
Therefore, an unbiased risk estimator for SconfConfDiff classification can be derived as follows.
\begin{align}
\textstyle \widehat{R}_{\mathrm{SCD}}(g)=\frac{1}{2n}\sum_{i=1}^{n}&(\mathcal{L}(\bm{x}_i,\bm{x}_i')+\mathcal{L}(\bm{x}_i',\bm{x}_i)).\label{unbiased risk}
\end{align}

\subsection{Minimum Variance of The Risk Estimator}\label{sec:minimum_var}

Equation.~\eqref{unbiased risk lambda}, which replaces the 1/2 in Eq.~\eqref{unbiased risk} with a weighted sum using $\lambda\in[0,1]$, is also an unbiased risk estimator.

\begin{lemma}\label{lemma:unbiased risk lambda}
The following expression also serves as an unbiased risk estimator.
\begin{align}
\textstyle \frac{1}{n}\sum_{i=1}^{n}(\lambda\mathcal{L}(\bm{x}_i,\bm{x}_i')+(1-\lambda)\mathcal{L}(\bm{x}_i',\bm{x}_i)),\label{unbiased risk lambda}
\end{align}
where $\lambda\in[0,1]$ is an arbitrary weight.
\end{lemma}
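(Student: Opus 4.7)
The plan is to show that the expectation of the estimator in Eq.~\eqref{unbiased risk lambda} equals the classification risk $R(g)$ for every $\lambda\in[0,1]$, by reducing the claim to Theorem~\ref{thm:unbiased risk} via a symmetry argument. Since the pairs $(\bm{x}_i,\bm{x}_i')$ are drawn i.i.d.\ from $p(\bm{x},\bm{x}')=p(\bm{x})p(\bm{x}')$, linearity of expectation reduces the problem to verifying that
\begin{equation*}
\mathbb{E}_{p(\bm{x},\bm{x}')}\bigl[\lambda\mathcal{L}(\bm{x},\bm{x}')+(1-\lambda)\mathcal{L}(\bm{x}',\bm{x})\bigr]=R(g).
\end{equation*}

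The key observation is that the sampling density $p(\bm{x})p(\bm{x}')$ is invariant under the swap $\bm{x}\leftrightarrow\bm{x}'$. Relabeling dummy variables inside the integral therefore yields
\begin{equation*}
\mathbb{E}_{p(\bm{x},\bm{x}')}[\mathcal{L}(\bm{x}',\bm{x})]=\mathbb{E}_{p(\bm{x},\bm{x}')}[\mathcal{L}(\bm{x},\bm{x}')].
\end{equation*}
Denoting this common value by $E$, Theorem~\ref{thm:unbiased risk} gives $R(g)=\tfrac{1}{2}E+\tfrac{1}{2}E=E$. Consequently, for any $\lambda\in[0,1]$,
\begin{equation*}
\lambda\,\mathbb{E}[\mathcal{L}(\bm{x},\bm{x}')]+(1-\lambda)\,\mathbb{E}[\mathcal{L}(\bm{x}',\bm{x})]=\lambda E+(1-\lambda)E=E=R(g),
\end{equation*}
and averaging over the $n$ i.i.d.\ pairs preserves this equality, proving unbiasedness.

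The only subtlety worth flagging is that $\mathcal{L}$ itself is \emph{not} a symmetric function of its two arguments: the similarity-confidence $s(\bm{x},\bm{x}')$ is symmetric, but the confidence-difference $c(\bm{x},\bm{x}')$ satisfies $c(\bm{x}',\bm{x})=-c(\bm{x},\bm{x}')$, and the positive/negative loss terms $\ell(g(\bm{x}),+1)$ and $\ell(g(\bm{x}'),-1)$ play asymmetric roles. The argument therefore hinges on the symmetry of the \emph{density}, not of the integrand, so the main thing to get right in the write-up is to apply the change of variables carefully to the product $p(\bm{x})p(\bm{x}')$ rather than attempting to manipulate $\mathcal{L}(\bm{x},\bm{x}')$ pointwise. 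Beyond that, no further obstacle is expected.
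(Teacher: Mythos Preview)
Your argument is correct. Both you and the paper establish $\mathbb{E}_{p(\bm{x},\bm{x}')}[\mathcal{L}(\bm{x},\bm{x}')]=\mathbb{E}_{p(\bm{x},\bm{x}')}[\mathcal{L}(\bm{x}',\bm{x})]=R(g)$ and then take the convex combination, but the route to that equality differs. The paper recomputes each expectation from scratch using Lemma~\ref{lemma of thm1}, expanding $\mathcal{L}(\bm{x},\bm{x}')$ and $\mathcal{L}(\bm{x}',\bm{x})$ term by term and matching each piece to $\pi_+\mathbb{E}_{p(\bm{x}|y=+1)}[\ell(g(\bm{x}),+1)]$ or $\pi_-\mathbb{E}_{p(\bm{x}|y=-1)}[\ell(g(\bm{x}),-1)]$. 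You instead observe that the product density $p(\bm{x})p(\bm{x}')$ is invariant under swapping the two coordinates, so a change of dummy variables immediately gives $\mathbb{E}[\mathcal{L}(\bm{x}',\bm{x})]=\mathbb{E}[\mathcal{L}(\bm{x},\bm{x}')]$, and then Theorem~\ref{thm:unbiased risk} pins this common value at $R(g)$. Your route is shorter and avoids re-invoking the detailed identities of Lemma~\ref{lemma of thm1}; the paper's route is more explicit and makes the connection to the underlying class-conditional risks visible a second time. Your remark that the symmetry lives in the density and not in the integrand $\mathcal{L}$ is exactly the right caveat; since $s$ and $c$ are deterministic functions of the pair, $\mathcal{L}$ is a well-defined function on $\mathcal{X}\times\mathcal{X}$ and the change of variables goes through without issue.
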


Following the analysis of~\cite{confdiff}, we now present the following theorem, which characterizes the relationship between Eqs.~\eqref{unbiased risk} and \eqref{unbiased risk lambda}.

\begin{theorem}\label{thm:minimum variance}
The variance of Eq.~\eqref{unbiased risk} is minimized among the candidates for unbiased risk estimators represented by Eq.~\eqref{unbiased risk lambda} w.r.t. $\lambda\in[0,1]$.
\end{theorem}

Based on the result of Theorem~\ref{thm:minimum variance}, we adopt the risk estimator in Eq.~\eqref{unbiased risk} in the following analysis and experiments.

\subsection{Estimation Error Analysis}\label{sec:bound}

In this section, we discuss a theoretical investigation into the convergence of $\widehat{R}_{\mathrm{SCD}}(g)$ by providing an estimation error bound.
As we will show, the bound exhibits non-trivial behavior, reflecting the interplay between similarity-confidence and confidence-difference information.

Following the notation in~\cite{confdiff}, let $\mathcal{G}$ be a function class of binary classifiers. We assume that there exist a constant $C_g$ such that $\sup_{g\in\mathcal{G}}\lVert{g}\rVert_\infty\le{C_g}$ and a constant $C_{\ell}$ such that $\sup_{|z|\le{C_g}}\ell(z,y)\le{C_\ell}$. Furthermore, we assume that the binary loss function $\ell(z,y)$ is Lipschitz continuous w.r.t. $z$ and $y$ with a Lipschitz constant $L_{\ell}$. 
Here, we define $g^*=\arg\min_{g\in\mathcal{G}}R(g)$ and $\hat{g}_{\mathrm{SCD}}=\arg\min_{g\in\mathcal{G}}\widehat{R}_{\mathrm{SCD}}(g)$, and
$\mathfrak{R}_n(\mathcal{G})$ is the \textit{Rademacher complexity} of $\mathcal{G}$ for unlabeled data pair with size $n$ (Its definition is in Appendix~\ref{def:rademacher}).
Under these definitions, the following theorem holds.

\begin{theorem}\label{thm:estimation error}
For any $\delta>0$, the following inequality holds with probability at least $1-\delta$:
\begin{align}
&\textstyle R(\hat{g}_{\mathrm{SCD}})-R(g^*)\notag\\
&\textstyle \le12L_{\ell}\mathfrak{R}_n(\mathcal{G})+(2\pi_+^2+2\pi_-^2+5)C_{\ell}\sqrt{\frac{\log2/\delta}{2n}},\label{error bound}
\end{align}
\end{theorem}

Theorem~\ref{thm:estimation error} implies that
$R(\hat{g}_{\mathrm{SCD}}) \to R(g^*)$ as $n \to \infty$, because $\mathfrak{R}_n(\mathcal{G}) = O(1/\sqrt{n}) \to 0$ as $n \to \infty$ for all parametric models with a bounded norm~\cite{sugiyama2022machine}, such as deep neural networks trained with weight decay~\cite{Rademacher_bound}. 
Furthermore, the convergence rate of the bound in Eq~\eqref{error bound} is $\mathcal{O}_p(1/\sqrt{n})$, where $\mathcal{O}_p$ is the order in probability.
This rate is known to be typical for empirical risk minimization over a norm-bounded and finite-dimensional hypothesis class, and is considered optimal in the absence of additional smoothness or margin conditions~\cite{optimal_param}.

\subsection{Class Prior Estimation from Pairwise Label}
In our SconfConfDiff classification, the class prior probability $\pi_+$ is important in calculating the risk estimators. In this section, we describe the estimation method of class prior.

For the sample average of similarity-confidence, the following theorem holds:
\begin{theorem}[cf. Theorem 5 in~\cite{sconf}]\label{thm:class prior estimation}
The sample average of similarity-confidence is an unbiased estimator of $\pi_+^2+\pi_-^2$:
\begin{align*}
\textstyle \mathbb{E}[\frac{\sum_{i=1}^{n}s(\bm{x}_i,\bm{x}_i')}{n}]=\pi_+^2+\pi_-^2.
\end{align*}
Furthermore, according to McDiarmid's inequality~\cite{McDiarmid}, the sample average of similarity-confidence is consistent and converges to $\pi_+^2+\pi_-^2$ in the rate of $\mathcal{O}_p(\exp(-n))$, which is the optimal rate according to the central limit theorem.
\end{theorem}
Assuming $\pi_+>\pi_-$, we can calculate the class prior probability by $\pi_+=\frac{\sqrt{2(\pi_+^2+\pi_-^2)-1}+1}{2}$. According to Theorem~\ref{thm:class prior estimation}, we can approximate $\pi_+$ with the average of similarity-confidence and the formulation above.

\subsection{
Analysis of Noisy Class Prior and Label Noise
}\label{sec:robust}

In the preceding discussion, the class prior probability, the ground-truth similarity-confidence and the ground-truth confidence-difference were available.
However, as discussed in~\cite{sconf} and~\cite{confdiff}, these settings do not necessarily hold in realistic scenarios. In this section, we provide a theoretical analysis regarding the impact of the noise in class prior probability, similarity-confidence, and confidence-difference on SconfConfDiff classification.

Based on the analysis of~\cite{sconf} and~\cite{confdiff}, let $\bar{\mathcal{D}}_n=\{(\bm{x}_i,\bm{x}_i'),\bar{s}_i,\bar{c}_i\}_{i=1}^{n}$ denote noisy SconfConfDiff dataset. $\bar{s}_i$ and $\bar{c}_i$ are noisy similarity-confidence, and noisy confidence-difference, respectively. Furthermore, let $\bar{\pi}_+$ and $\bar{\pi}_-$ be the noisy class prior probabilities, and let $\bar{R}_{\mathrm{SCD}}$ denote the empirical risk computed using these noisy class priors together with noisy similarity-confidence, and noisy confidence-difference. We define $\bar{g}_{\mathrm{SCD}}=\arg\min_{g\in\mathcal{G}}\bar{R}_{\mathrm{SCD}}(g)$. Then, the following theorem holds.

\begin{theorem}\label{thm:robust estimation error}
Under the same assumptions as in Section~\ref{sec:bound}, for any $\delta>0$, the following inequality holds with probability at least $1-\delta$:
\begin{align}
\textstyle R&(\bar{g}_{\mathrm{SCD}})-R(g^*)\notag\\
\le& \textstyle 24L_{\ell}\mathfrak{R}_n(\mathcal{G})+(4\pi_+^2+4\pi_-^2+10)C_{\ell}\sqrt{\frac{\log2/\delta}{2n}}\notag\\
+& \textstyle \frac{4C_{\ell}}{n}\sum_{i=1}^{n}(|\bar{\pi}_+\bar{c}_i-\pi_+c_i|+|\bar{\pi}_-\bar{c}_i-\pi_-c_i|+|\bar{s}_i-s_i|)\notag\\
+& \textstyle 4C_{\ell}(|\bar{\pi}_+^2-\pi_+^2|+|\bar{\pi}_-^2-\pi_-^2|+|\bar{\pi}_+-\pi_+|).\label{robustness}
\end{align}
\end{theorem}

From this inequality, while we observe that the estimation error bound is influenced by both the mean absolute error of the noisy labels and the absolute error in the class prior probability. 
If the total noise grows slower than linearly with respect to the sample size with high probability and the class prior probability is consistently estimated, $\bar{R}_{\mathrm{SCD}}(g)$ is also consistent.

\subsection{Risk Correction Approach}\label{sec:risk correction}

Similar to Sconf learning~\cite{sconf} and ConfDiff classification~\cite{confdiff}, the risk estimator in Eq.~\eqref{unbiased risk} may take negative values, violating the non-negativity property of loss functions and potentially causing overfitting with complex models. 
To mitigate this, following ~\cite{sconf} and~\cite{confdiff}, we apply simple correction functions, such as the ReLU function $f(z)=\max(0,z)$ and the absolute value function $f(z)=|z|$. 
The corrected risk estimator for SconfConfDiff classification is then given by:
\begin{align}
\widetilde{R}_{\mathrm{SCD}}(g) = f(\widehat{A}(g)) + f(\widehat{B}(g)) + f(\widehat{C}(g)) + f(\widehat{D}(g)),\label{corrected risk}
\end{align}
where $\widehat{A}(g)=\sum_{i=1}^{n}((2\pi_+(\pi_+-c_i)+\pi_--s_i)\ell(g(\bm{x}_i),+1))/2n$, $\widehat{B}(g)=\sum_{i=1}^{n}((2\pi_-(\pi_--c_i)+\pi_+-s_i)\ell(g(\bm{x}_i'),-1))/2n$, $\widehat{C}(g)=\sum_{i=1}^{n}((2\pi_+(\pi_++c_i)+\pi_--s_i)\ell(g(\bm{x}_i'),+1))/2n$, and $\widehat{D}(g)=\sum_{i=1}^{n}((2\pi_-(\pi_-+c_i)+\pi_+-s_i)\ell(g(\bm{x}_i),-1))/2n$. 
We assume that $f(z)$ is Lipschitz continuous and denote its Lipschitz constant by $L_f$.
By the proof of Theorem~\ref{thm:unbiased risk} in Appendix~\ref{appendix:proof}, since $\mathbb{E}[\widehat{A}(g)]$, $\mathbb{E}[\widehat{B}(g)]$, $\mathbb{E}[\widehat{C}(g)]$, and $\mathbb{E}[\widehat{D}(g)]$ are non-negative, there exist non-negative constants $a,b,c,d$ such that $\mathbb{E}[\widehat{A}(g)]\ge{a}$, $\mathbb{E}[\widehat{B}(g)]\ge{b}$, $\mathbb{E}[\widehat{C}(g)]\ge{c}$, and $\mathbb{E}[\widehat{D}(g)]\ge{d}$. 
Moreover, we define $\tilde{g}_{\mathrm{SCD}}=\arg\min_{g\in\mathcal{G}}\widetilde{R}_{\mathrm{SCD}}(g)$. Then, the following theorem provides the bias and consistency of $\widetilde{R}_{\mathrm{SCD}}(g)$. 

\begin{theorem}\label{thm:consistency}
Based on the assumptions above, the risk estimator $\widetilde{R}_{\mathrm{SCD}}(g)$ decays exponentially as $n\rightarrow{\infty}$:
\begin{align}
0\le\mathbb{E}[\widetilde{R}_{\mathrm{SCD}}(g)]-R(g)\le3(L_f+1)C_{\ell}\Delta,
\end{align}
where $\Delta=\exp(-\frac{4a^2n}{(4\pi_++1)^2C_{\ell}^2})+\exp(-\frac{4b^2n}{(4\pi_-+1)^2C_{\ell}^2})+\exp(-\frac{4c^2n}{(4\pi_++1)^2C_{\ell}^2})+\exp(-\frac{4d^2n}{(4\pi_-+1)^2C_{\ell}^2})$. Furthermore, the following inequality holds with probability at least $1-\delta$:
\begin{align}
\textstyle |\widetilde{R}_{\mathrm{SCD}}(g)-R(g)|\le6L_fC_{\ell}\sqrt{\frac{\log2/\delta}{2n}}+3(L_f+1)C_{\ell}\Delta.
\end{align}
\end{theorem}

Theorem~\ref{thm:consistency} indicates that $\widetilde{R}_{\mathrm{SCD}}(g)\rightarrow{R(g)}$ with probabilistic order $\mathcal{O}_p(1/\sqrt{n})$, implying that while $\widetilde{R}_{\mathrm{SCD}}(g)$ is no longer unbiased, it remains consistent. 
Next, we derive an estimation error bound for $\tilde{g}_{\mathrm{SCD}}$.

\begin{table*}[t]
\caption{
Classification accuracy on the benchmark test set with $\pi_+=0.2, 0.5$ averaged over five random seeds, with mean and standard deviation (mean$\pm$std). The highest score among the compared methods, excluding supervised learning, is shown in bold.
}
\label{tab:benchmark_acc_0.2}
\begin{center}
\begin{small}
\begin{tabular}{clcccc}
\toprule
Class Prior & \multicolumn{1}{c}{Method} & MNIST & Kuzushiji & Fashion & CIFAR-10 \\
\midrule
&SconfConfDiff-Unbiased & 0.894 $\pm$ 0.043 & 0.706 $\pm$ 0.016 & 0.851 $\pm$ 0.085 & 0.865 $\pm$ 0.006 \\
&SconfConfDiff-ReLU & 0.975 $\pm$ 0.002 & 0.890 $\pm$ 0.010 & 0.965 $\pm$ 0.002 & 0.871 $\pm$ 0.012 \\
&SconfConfDiff-ABS & \textbf{0.978 $\pm$ 0.002} & \textbf{0.906 $\pm$ 0.002} & \textbf{0.969 $\pm$ 0.003} & \textbf{0.884 $\pm$ 0.006} \\
\cmidrule(lr){2-6}
&ConfDiff-ABS & 0.975 $\pm$ 0.001 & 0.898 $\pm$ 0.006 & 0.967 $\pm$ 0.001 & 0.868 $\pm$ 0.007 \\
$\pi_+=0.2$ &Convex($\gamma=0.2$)-ABS & 0.975 $\pm$ 0.003 & 0.904 $\pm$ 0.007 & 0.967 $\pm$ 0.002 & 0.870 $\pm$ 0.005 \\
&Convex($\gamma=0.5$)-ABS & 0.976 $\pm$ 0.004 & 0.903 $\pm$ 0.006 & 0.965 $\pm$ 0.002 & 0.869 $\pm$ 0.006 \\
&Convex($\gamma=0.8$)-ABS & 0.972 $\pm$ 0.003 & 0.901 $\pm$ 0.005 & 0.962 $\pm$ 0.004 & 0.870 $\pm$ 0.008 \\
&Sconf-ABS & 0.960 $\pm$ 0.004 & 0.881 $\pm$ 0.006 & 0.955 $\pm$ 0.006 & 0.818 $\pm$ 0.022 \\
\cmidrule(lr){2-6}
&Supervised & 0.990 $\pm$ 0.000 & 0.936 $\pm$ 0.003 & 0.979 $\pm$ 0.002 & 0.895 $\pm$ 0.003 \\
\cmidrule(lr){1-6}
&SconfConfDiff-Unbiased & \textbf{0.977 $\pm$ 0.004} & \textbf{0.901 $\pm$ 0.005} & 0.963 $\pm$ 0.002 & 0.837 $\pm$ 0.004 \\
&SconfConfDiff-ReLU & \textbf{0.977 $\pm$ 0.004} & \textbf{0.901 $\pm$ 0.005} & 0.963 $\pm$ 0.002 & 0.837 $\pm$ 0.004 \\
$\pi_+=0.5$ &SconfConfDiff-ABS & \textbf{0.977 $\pm$ 0.004} & \textbf{0.901 $\pm$ 0.005} & 0.963 $\pm$ 0.002 & 0.837 $\pm$ 0.004 \\
\cmidrule(lr){2-6}
&ConfDiff-ABS & 0.965 $\pm$ 0.001 & 0.866 $\pm$ 0.004 & \textbf{0.968 $\pm$ 0.001} & \textbf{0.842 $\pm$ 0.002} \\
\cmidrule(lr){2-6}
&Supervised & 0.987 $\pm$ 0.001 & 0.928 $\pm$ 0.002 & 0.976 $\pm$ 0.001 & 0.876 $\pm$ 0.003 \\
\bottomrule
\end{tabular}
\end{small}
\end{center}
\vskip -0.1in
\end{table*}

\begin{table*}[t]
\caption{
Classification accuracy on the UCI test set with $\pi_+=0.2, 0.5$ averaged over five random seeds, with mean and standard deviation (mean$\pm$std). The highest score among the compared methods, excluding supervised learning, is shown in bold.
}
\label{tab:uci_acc_0.2}
\begin{center}
\begin{small}
\begin{tabular}{clcccc}
\toprule
Class Prior & \multicolumn{1}{c}{Method} & Optdigits & Pendigits & Letter & PMU-UD \\
\midrule
&SconfConfDiff-Unbiased & 0.917 $\pm$ 0.067 & 0.954 $\pm$ 0.041 & 0.746 $\pm$ 0.044 & 0.841 $\pm$ 0.058 \\
&SconfConfDiff-ReLU & 0.955 $\pm$ 0.023 & 0.987 $\pm$ 0.003 & 0.934 $\pm$ 0.007 & 0.966 $\pm$ 0.007 \\
&SconfConfDiff-ABS & \textbf{0.969 $\pm$ 0.005} & \textbf{0.991 $\pm$ 0.002} & \textbf{0.951 $\pm$ 0.004} & 0.975 $\pm$ 0.005 \\
\cmidrule(lr){2-6}
&ConfDiff-ABS & 0.964 $\pm$ 0.010 & 0.988 $\pm$ 0.005 & 0.945 $\pm$ 0.004 & 0.978 $\pm$ 0.008 \\
$\pi_+=0.2$ &Convex($\gamma=0.2$)-ABS & 0.969 $\pm$ 0.009 & 0.987 $\pm$ 0.004 & 0.947 $\pm$ 0.005 & \textbf{0.979 $\pm$ 0.006} \\
&Convex($\gamma=0.5$)-ABS & 0.969 $\pm$ 0.005 & 0.987 $\pm$ 0.004 & 0.946 $\pm$ 0.004 & 0.979 $\pm$ 0.008 \\
&Convex($\gamma=0.8$)-ABS & 0.966 $\pm$ 0.006 & 0.986 $\pm$ 0.004 & 0.941 $\pm$ 0.004 & 0.976 $\pm$ 0.004 \\
&Sconf-ABS & 0.944 $\pm$ 0.014 & 0.978 $\pm$ 0.006 & 0.915 $\pm$ 0.006 & 0.954 $\pm$ 0.010 \\
\cmidrule(lr){2-6}
&Supervised & 0.989 $\pm$ 0.003 & 0.997 $\pm$ 0.001 & 0.978 $\pm$ 0.004 & 0.994 $\pm$ 0.003 \\
\cmidrule(lr){1-6}
&SconfConfDiff-Unbiased & \textbf{0.971 $\pm$ 0.008} & \textbf{0.992 $\pm$ 0.002} & \textbf{0.935 $\pm$ 0.006} & \textbf{0.986 $\pm$ 0.005} \\
&SconfConfDiff-ReLU & \textbf{0.971 $\pm$ 0.008} & \textbf{0.992 $\pm$ 0.002} & \textbf{0.935 $\pm$ 0.006} & \textbf{0.986 $\pm$ 0.005} \\
$\pi_+=0.5$ &SconfConfDiff-ABS & \textbf{0.971 $\pm$ 0.008} & \textbf{0.992 $\pm$ 0.002} & \textbf{0.935 $\pm$ 0.006} & \textbf{0.986 $\pm$ 0.005} \\
\cmidrule(lr){2-6}
&ConfDiff-ABS & 0.964 $\pm$ 0.007 & 0.989 $\pm$ 0.001 & 0.922 $\pm$ 0.007 & 0.982 $\pm$ 0.005 \\
\cmidrule(lr){2-6}
&Supervised & 0.991 $\pm$ 0.003 & 0.996 $\pm$ 0.002 & 0.976 $\pm$ 0.002 & 0.993 $\pm$ 0.002 \\
\bottomrule
\end{tabular}
\end{small}
\end{center}
\vskip -0.1in
\end{table*}

\begin{theorem}\label{thm:corrected estimation error}
Under the above assumptions, for any $\delta>0$, the following inequality holds with probability at least $1-\delta$:
\begin{align}
\textstyle R(\tilde{g}_{\mathrm{SCD}})&-R(g^*)\le6(L_f+1)C_{\ell}\Delta+12L_{\ell}\mathfrak{R}_n(\mathcal{G})\notag\\
& \textstyle +(12L_f+2\pi_+^2+2\pi_-^2+5)C_{\ell}\sqrt{\frac{\log4/\delta}{2n}}.
\end{align}
\end{theorem}

From Theorem~\ref{thm:corrected estimation error}, we can observe that $R(\tilde{g}_{\mathrm{SCD}})\rightarrow{R(g^*)}$ as $n\rightarrow\infty$, because $\mathfrak{R}_n(\mathcal{G})=O(1/\sqrt{n})\rightarrow{0}$ as $n\rightarrow\infty$ for all parametric models with a bounded norm~\cite{sugiyama2022machine}. Furthermore, from Theorem~\ref{thm:consistency}, $\Delta$ associated with the risk correction approach decays exponentially as $n\rightarrow{\infty}$. 
Thus, as a whole, it is shown
that the estimation error bound converges at the rate $\mathcal{O}_p(1/\sqrt{n})$. 
This rate is known to be typical for empirical risk minimization over a norm-bounded, finite-dimensional hypothesis class, and is considered optimal in the absence of additional smoothness or margin assumptions~\cite{optimal_param}.

\begin{figure*}[t]
\vskip 0.2in
\begin{center}
\centerline{\includegraphics[width=1.0\linewidth]{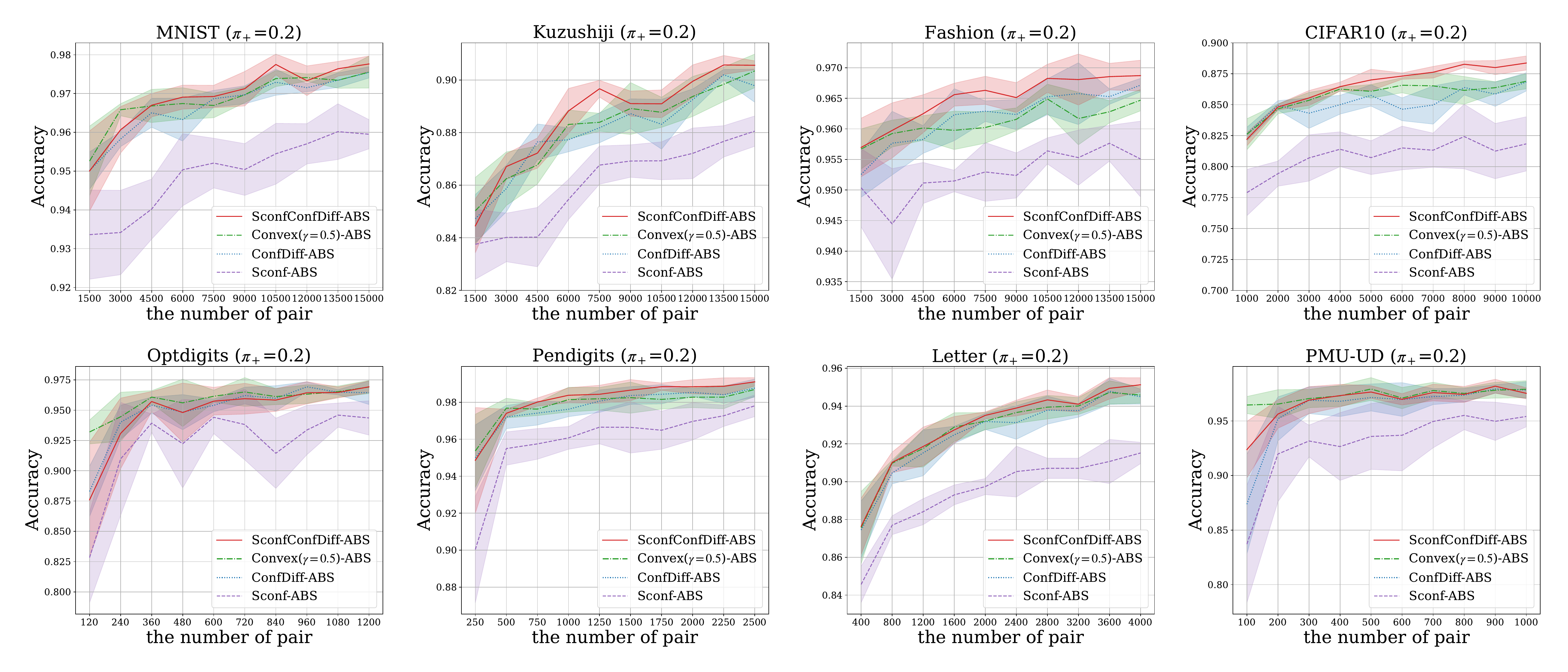}}
\caption{
Classification performance of SconfConfDiff-ABS, SconfConfDiff-Convex($\gamma=0.5$)-ABS, Sconf-ABS, and ConfDiff-ABS with varying numbers of training samples ($\pi_+=0.2$).
}
\label{fig:num_0.2}
\end{center}
\vskip -0.2in
\end{figure*}

\section{Numerical Experiments}\label{experiment}

In this section, we experimentally evaluate the superiority of the proposed method.

\subsection{Experimental Setup}\label{exp_setup}

Following the experimental setup of~\cite{confdiff}, we conducted experiments on the benchmark datasets MNIST~\cite{mnist}, Kuzushiji-MNIST~\cite{kmnist}, Fashion-MNIST~\cite{fashion}, and CIFAR-10~\cite{cifar10}. Additionally, we used four UCI datasets including Optdigits~\cite{optdigits}, Pendigits~\cite{pendigits}, Letter\cite{letter}, and PMU-UD~\cite{pmu-ud}. Since these datasets were originally designed for multi-class classification, we redesigned for binary classification. 
For CIFAR-10, ResNet-34~\cite{resnet} was used as the model architecture. For other datasets, we used a multilayer perceptron (MLP) with three hidden layers of width 300, ReLU~\cite{relu} activation functions and batch normalization~\cite{batch_norm}. We adopt the logistic loss for the binary loss function $\ell$.

We assess under various class prior probabilities ($\pi_+\in\{0.2, 0.5, 0.8\}$).
For all compared methods, we use the true class prior $\pi_+$. We repeated data sampling and training five times and recorded the mean and standard deviation of classification accuracy.

We evaluate the following variants of the proposed method: 1) SconfConfDiff-Unbiased (the unbiased risk estimator based on Eq.~\eqref{unbiased risk}), 2) SconfConfDiff-ReLU (the corrected risk estimator with the ReLU function), 3) SconfConfDiff-ABS (the corrected risk estimator with the absolute value function). Similarly, we conducted experiments on convex combination approach (unbiased) in Eq.~\eqref{unbiased risk sum}, the risk correction with the ReLU function, and the risk correction with the absolute value function. We compared the proposed approaches with Sconf learning and ConfDiff classification. Since SconfConfDiff-Convex and Sconf learning cannot be executed when $\pi_+=0.5$, we compared only ConfDiff classification in the case of $\pi_+=0.5$. 
Furthermore, we compared supervised learning using ground-truth hard labels. 

While Sconf and ConfDiff are provided by annotators, we synthetically generated them in this study to conduct comprehensive experiments. Details of the data generation process and hyperparameters are provided in Appendix~\ref{appendix_exp_setting}.

\begin{figure*}[t]
\vskip 0.2in
\begin{center}
\centerline{\includegraphics[width=1.0\linewidth]{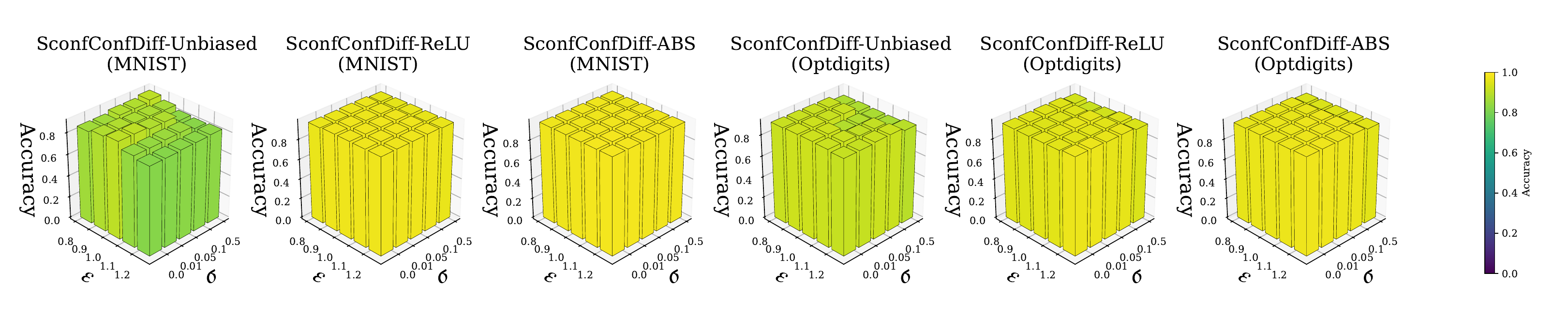}}
\caption{Classification performance on MNIST (left) and Optdigits (right) with $\pi_+=0.2$ given an noisy class prior probability, noisy similarity-confidence, and noisy confidence-difference. Both the height and color of the bars represent accuracy.}
\label{fig:robust_result}
\end{center}
\vskip -0.2in
\end{figure*}

\subsection{Experimental Results}\label{exp:acc}

Based on the experimental setting in Section~\ref{exp_setup}, we verify the proposed method derived in Section~\ref{proposed}. The experimental results demonstrated that the proposed method outperforms the comparison methods on many datasets, thereby validating its effectiveness. In the tables, SconfConfDiff-Convex is denoted as Convex($\gamma = \text{value}$), with varying values of $\gamma$.

Table~\ref{tab:benchmark_acc_0.2} shows the classification accuracy results of all compared methods on four benchmark datasets. 
When $\pi_+=0.2$, the proposed approach, either SconfConfDiff-ABS or SconfConfDiff-Convex-ABS, achieves superior performance compared to other methods. In addition, SconfConfDiff-ABS has smaller variances than SconfConfDiff-Unbiased and SconfConfDiff-ReLU. 
This result indicates that the risk correction improves learning performance of SconfConfDiff classification. 
In the case of $\pi_+=0.5$, ConfDiff-ABS achieves high accuracy on some datasets. This is likely because, when there is no difference in the class prior probability, the similarity-confidence information does not contribute significantly to the learning process.

Tables~\ref{tab:uci_acc_0.2} reports experimental results of classification accuracy for all the compared methods on four UCI datasets. 
It can be observed that for all UCI datasets with different class prior probabilities, the proposed approach, either SconfConfDiff-ABS or SconfConfDiff-Convex-ABS, achieves the best performance compared to other methods.

When $\pi_+ = 0.5$, the risk estimator assigned equal weight to the positive and negative class terms. This balance helped stabilize the overall estimate, making it less likely for the risk to take negative values. As a result, applying a risk correction function such as ReLU or the absolute value produced identical experimental results. Note that in Tables~\ref{tab:benchmark_acc_0.2} and~\ref{tab:uci_acc_0.2}, only the absolute value correction results are shown for Sconf learning, ConfDiff classification, and SconfConfDiff-Convex. Detailed results, including those for the unbiased risk estimator and 
the ReLU-based corrected risk estimator, as well as the results in the case of $\pi_+=0.8$, are provided in Appendix~\ref{appendix_exp}.

\subsection{Experimental Results with Limited Data}\label{exp:few data}

We also investigated the classification performance by varying the proportion of training data for SconfConfDiff-ABS and SconfConfDiff-Convex-ABS. For comparison, we provide the results of Sconf learning and ConfDiff classification in Fig.~\ref{fig:num_0.2}. Figure.~\ref{fig:num_0.2} shows the results for $\pi_+=0.2$, 
and additional experimental results are provided in the Appendix~\ref{appendix_exp}.
It can be observed that providing two types of weak supervision leads to improved classification performance.

\subsection{Experimental Results with Noisy Data}\label{exp:robust}

In this section, we experimentally investigate the impact of a noisy class prior probability, noisy similarity-confidence, and noisy confidence-difference on the classification performance of SconfConfDiff classification. 
Specifically, let $\epsilon$ be a real number close to 1 and define $\bar{\pi}_+=\epsilon\pi_+$ as an noisy class prior probability. we also define $\bar{s}_i=\epsilon_i's_i$ and $\bar{c}_i=\epsilon_i''c_i$ as the noisy similarity-confidence and the noisy confidence-difference, respectively, where $\epsilon_i'$ and $\epsilon_i''$ are sampled from a normal distribution ${\mathcal{N}(1,\sigma^2)}$.
Figure.~\ref{fig:robust_result} presents the classification accuracy (only mean accuracy) of SconfConfDiff classification on MNIST and Optdigits ($\pi_+=0.2$) with different $\epsilon$ and $\sigma$. From Fig.~\ref{fig:robust_result}, it can be observed that the proposed method is robust to noise of class prior probability, similarity-confidence, and confidence-difference. More experimental results (including the standard deviation of accuracy) can be found in Appendix~\ref{appendix_exp}.

\section{Conclusion}
In this paper, we proposed a novel learning framework that integrates two types of weak supervision. Specifically, we considered a setting where unlabeled data pairs are annotated with both similarity-confidence and confidence-difference, and derived two unbiased risk estimators: SconfConfDiff-Convex classification, which combines the risks from Sconf learning and ConfDiff classification via a convex formulation, and SconfConfDiff classification, which integrates the two types of supervision in a unified manner to reflect their interaction. To mitigate overfitting, we incorporate a risk correction approach that improves learning performance. Furthermore, we provided estimation error bounds for the classifier obtained by empirical risk minimization and proved its statistical consistency. Experimental results demonstrate that our method achieves performance comparable to or better than learning with a single weak supervision. Future directions include extending the proposed framework to handle other forms of weak supervision, scenarios with partially missing labels, and a learning framework that effectively leverages both ordinary supervised labels and weakly labels. Since our current method is limited to binary classification, extending it to multi-class problems is also a promising direction for future work.

\section*{Impact Statement}
This paper contributes to the theoretical foundations of machine learning by analyzing the statistical properties of learning from multiple weak supervision sources and by establishing rigorous guarantees for the proposed method.
As the contribution focuses on fundamental aspects of learning, this work does not introduce application-specific risks beyond those commonly associated with machine learning research. Any potential societal impact therefore depends on downstream applications and deployment choices.

\section*{Acknowledgements}
This work was supported in part by the Japan Society for the
 Promotion of Science through Grants-in-Aid for Scientific 
Research (C) (23K11111).

\bibliography{references}
\bibliographystyle{anonymousconference}

\newpage
\appendix
\onecolumn

\section{Proof of Section~\ref{proposed}}\label{appendix:proof}
\subsection{Proof of Theorem~\ref{thm:unbiased risk}}
\begin{lemma}[cf. Lemma 3 in~\cite{confdiff}]\label{lemma:confdiff}
The following equations hold:
\begin{align}
&\mathbb{E}_{p(\bm{x},\bm{x}')}[(\pi_+-c(\bm{x},\bm{x}'))\ell(g(\bm{x}),+1)]=\pi_+\mathbb{E}_{p(\bm{x} \mid y=+1)}[\ell(g(\bm{x}),+1)],\label{confdiff-1}\\
&\mathbb{E}_{p(\bm{x},\bm{x}')}[(\pi_-+c(\bm{x},\bm{x}'))\ell(g(\bm{x}),-1)]=\pi_-\mathbb{E}_{p(\bm{x} \mid y=-1)}[\ell(g(\bm{x}),-1)],\label{confdiff-2}\\
&\mathbb{E}_{p(\bm{x},\bm{x}')}[(\pi_++c(\bm{x},\bm{x}'))\ell(g(\bm{x}'),+1)]=\pi_+\mathbb{E}_{p(\bm{x}' \mid y=+1)}[\ell(g(\bm{x}'),+1)],\label{confdiff-3}\\
&\mathbb{E}_{p(\bm{x},\bm{x}')}[(\pi_--c(\bm{x},\bm{x}'))\ell(g(\bm{x}'),-1)]=\pi_-\mathbb{E}_{p(\bm{x}' \mid y=-1)}[\ell(g(\bm{x}'),-1)].\label{confdiff-4}
\end{align}
\end{lemma}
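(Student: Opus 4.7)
\textbf{Proof Proposal for Lemma~\ref{lemma:confdiff}.} All four identities have the same structure, so the plan is to prove the first one in detail and note that the remaining three follow by symmetric arguments (swapping the roles of $\bm{x}$ and $\bm{x}'$, and of $+1$ and $-1$, changes the sign in front of $c$ in exactly the way recorded on the left-hand sides). The proof is essentially a direct computation using three ingredients: (i) the factorization $p(\bm{x},\bm{x}') = p(\bm{x})p(\bm{x}')$ given in Section~\ref{confdiff setting}; (ii) the definition $c(\bm{x},\bm{x}') = p(y'=+1\mid\bm{x}') - p(y=+1\mid\bm{x})$; and (iii) the marginalization identity $\mathbb{E}_{p(\bm{x}')}[p(y'=+1\mid\bm{x}')] = \pi_+$.

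The first step is to substitute the definition of $c$ into the left-hand side of Eq.~\eqref{confdiff-1} and split the resulting expectation into three pieces, corresponding to the constant $\pi_+$, the term $-p(y'=+1\mid\bm{x}')$ (which depends only on $\bm{x}'$), and the term $+p(y=+1\mid\bm{x})$ (which depends only on $\bm{x}$). The second step is to exploit independence: the first two pieces each factor as a product of a $\bm{x}'$-integral with a $\bm{x}$-integral, and by ingredient (iii) the $\bm{x}'$-integral of $p(y'=+1\mid\bm{x}')$ equals $\pi_+$. Consequently those two pieces cancel, leaving only
\begin{equation*}
\mathbb{E}_{p(\bm{x})}\bigl[p(y=+1\mid\bm{x})\,\ell(g(\bm{x}),+1)\bigr].
\end{equation*}
The third step is a single application of Bayes' rule, $p(\bm{x})p(y=+1\mid\bm{x}) = \pi_+\, p(\bm{x}\mid y=+1)$, which converts the remaining expectation into $\pi_+\,\mathbb{E}_{p(\bm{x}\mid y=+1)}[\ell(g(\bm{x}),+1)]$, as desired.

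For the remaining three identities, the same three-step recipe works verbatim with only bookkeeping changes. In Eq.~\eqref{confdiff-2} the sign in front of $c$ flips, so the surviving term becomes $\mathbb{E}_{p(\bm{x})}[(1-p(y=+1\mid\bm{x}))\ell(g(\bm{x}),-1)] = \mathbb{E}_{p(\bm{x})}[p(y=-1\mid\bm{x})\ell(g(\bm{x}),-1)]$, and Bayes' rule yields $\pi_-\,\mathbb{E}_{p(\bm{x}\mid y=-1)}[\ell(g(\bm{x}),-1)]$. Equations~\eqref{confdiff-3} and~\eqref{confdiff-4} are the mirror images obtained by integrating $\bm{x}$ out first rather than $\bm{x}'$; here one uses $\mathbb{E}_{p(\bm{x})}[p(y=+1\mid\bm{x})] = \pi_+$ to kill the constant-plus-$\bm{x}$-term pair, leaving the $\bm{x}'$-conditioned expectation.

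There is no real obstacle here. The only point that requires a touch of care is the sign bookkeeping: one must track when $\pi_+ \pm c$ combines with $p(y=+1\mid\cdot)$ versus $p(y=-1\mid\cdot) = 1 - p(y=+1\mid\cdot)$ on the surviving variable, and match it against whether the expectation is against $p(\bm{x}\mid y=+1)$ or $p(\bm{x}\mid y=-1)$ on the right-hand side. Since the result is essentially Lemma 3 of~\cite{confdiff} and already used in their derivation of Eq.~\eqref{confdiff risk}, citing it and writing out the case of Eq.~\eqref{confdiff-1} in full should be sufficient.
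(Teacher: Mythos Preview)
Your proposal is correct. The paper itself does not spell out a proof of this lemma; it simply quotes it from~\cite{confdiff} (hence the ``cf.'' in the label) and uses it as an input to the proof of Lemma~\ref{lemma of thm1}. Your direct computation---expand $c(\bm{x},\bm{x}')$, use independence to cancel the $\pi_+$ against $\mathbb{E}_{p(\bm{x}')}[p(y'=+1\mid\bm{x}')]$, then apply Bayes' rule to the surviving term---is exactly the standard argument behind that cited result, and the symmetry remarks for Eqs.~\eqref{confdiff-2}--\eqref{confdiff-4} are accurate.
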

\begin{lemma}[cf. Lemma 2 in~\cite{sconf}]\label{lemma:sconf}
The following equations hold:
\begin{align}
&\mathbb{E}_{p(\bm{x},\bm{x}')}[s(\bm{x},\bm{x}')\ell(g(\bm{x}),+1)]-\frac{\pi_-}{\pi_+}\mathbb{E}_{p(\bm{x},\bm{x}')}[(1-s(\bm{x},\bm{x}'))\ell(g(\bm{x}),+1)]\notag\\
&=(\pi_+-\pi_-)\mathbb{E}_{p(\bm{x} \mid y=+1)}[\ell(g(\bm{x}),+1)].\label{sconf-1}
\end{align}
\end{lemma}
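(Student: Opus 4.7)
The plan is to compute each expectation on the left-hand side by first integrating out $\bm{x}'$ using conditional independence of the labels. Writing $\eta(\bm{x})$ for $p(y=+1\mid\bm{x})$, the assumption $s(\bm{x},\bm{x}')=p(y=y'\mid\bm{x},\bm{x}')$ combined with the fact that the two labels are drawn independently given the inputs yields
\begin{align*}
s(\bm{x},\bm{x}') &= \eta(\bm{x})\eta(\bm{x}')+(1-\eta(\bm{x}))(1-\eta(\bm{x}')), \\
1-s(\bm{x},\bm{x}') &= \eta(\bm{x})(1-\eta(\bm{x}'))+(1-\eta(\bm{x}))\eta(\bm{x}').
\end{align*}
This factorization into separable factors in $\bm{x}$ and $\bm{x}'$ is the key structural observation and allows the $\bm{x}'$ integral to be carried out in closed form.

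Next I would exploit $p(\bm{x},\bm{x}')=p(\bm{x})p(\bm{x}')$ together with the marginal identities $\int\eta(\bm{x}')p(\bm{x}')d\bm{x}'=\pi_+$ and $\int(1-\eta(\bm{x}'))p(\bm{x}')d\bm{x}'=\pi_-$. Integrating out $\bm{x}'$ in each expectation gives
\begin{align*}
\mathbb{E}_{p(\bm{x},\bm{x}')}[s(\bm{x},\bm{x}')\ell(g(\bm{x}),+1)] &= \mathbb{E}_{p(\bm{x})}\bigl[(\pi_+\eta(\bm{x})+\pi_-(1-\eta(\bm{x})))\,\ell(g(\bm{x}),+1)\bigr], \\
\mathbb{E}_{p(\bm{x},\bm{x}')}[(1-s(\bm{x},\bm{x}'))\ell(g(\bm{x}),+1)] &= \mathbb{E}_{p(\bm{x})}\bigl[(\pi_-\eta(\bm{x})+\pi_+(1-\eta(\bm{x})))\,\ell(g(\bm{x}),+1)\bigr].
\end{align*}
Forming the weighted combination with multiplier $\pi_-/\pi_+$, the contributions proportional to $1-\eta(\bm{x})$ are both equal to $\pi_-$ and cancel exactly, while the coefficient of $\eta(\bm{x})$ becomes $\pi_+-\pi_-^2/\pi_+=(\pi_+^2-\pi_-^2)/\pi_+$. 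Using $\pi_++\pi_-=1$, this simplifies to $(\pi_+-\pi_-)/\pi_+$. Finally, Bayes' rule $\eta(\bm{x})p(\bm{x})=\pi_+\,p(\bm{x}\mid y=+1)$ absorbs the remaining $1/\pi_+$, yielding exactly $(\pi_+-\pi_-)\mathbb{E}_{p(\bm{x}\mid y=+1)}[\ell(g(\bm{x}),+1)]$, which is the claimed right-hand side.

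There is no serious obstacle here; the argument is essentially algebraic bookkeeping driven by conditional independence, Fubini's theorem, and Bayes' rule. The only delicate point worth highlighting is that the specific weight $\pi_-/\pi_+$ in the statement is precisely what causes the $1-\eta(\bm{x})$ contributions to cancel in the weighted combination; this also explains why the identity degenerates when $\pi_+=\pi_-$, consistent with the vanishing of the right-hand side in that symmetric case, and clarifies why Sconf Learning requires the class prior to be away from $1/2$.
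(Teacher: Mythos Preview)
Your proof is correct. The paper does not actually supply its own proof of this lemma: it is stated as a citation of Lemma~2 in the Sconf Learning paper and then used as an input to the proof of the subsequent Lemma~\ref{lemma of thm1}. Your argument---expanding $s(\bm{x},\bm{x}')$ via the conditional-independence factorization, integrating out $\bm{x}'$ to reduce both expectations to marginal expectations in $\bm{x}$ with explicit coefficients in $\eta(\bm{x})$ and $1-\eta(\bm{x})$, observing that the $(1-\eta(\bm{x}))$ contributions cancel under the weight $\pi_-/\pi_+$, and then applying Bayes' rule---is exactly the standard derivation that underlies the cited result, and every step checks out.
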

\begin{lemma}\label{lemma of thm1}
The following equations hold:
\begin{align}
\pi_+\mathbb{E}_{p(\bm{x} \mid y=+1)}[\ell(g(\bm{x}),+1)]=&\mathbb{E}_{p(\bm{x},\bm{x}')}[(2\pi_+(\pi_+-c(\bm{x},\bm{x}'))+\pi_--s(\bm{x},\bm{x}'))\ell(g(\bm{x}),+1)],\\
\pi_-\mathbb{E}_{p(\bm{x} \mid y=-1)}[\ell(g(\bm{x}),-1)]=&\mathbb{E}_{p(\bm{x},\bm{x}')}[(2\pi_-(\pi_-+c(\bm{x},\bm{x}'))+\pi_+-s(\bm{x},\bm{x}'))\ell(g(\bm{x}),-1)],\\
\pi_+\mathbb{E}_{p(\bm{x}' \mid y=+1)}[\ell(g(\bm{x}'),+1)]=&\mathbb{E}_{p(\bm{x},\bm{x}')}[(2\pi_+(\pi_++c(\bm{x},\bm{x}'))+\pi_--s(\bm{x},\bm{x}'))\ell(g(\bm{x}'),+1)],\\
\pi_-\mathbb{E}_{p(\bm{x}' \mid y=-1)}[\ell(g(\bm{x}'),-1)]=&\mathbb{E}_{p(\bm{x},\bm{x}')}[(2\pi_-(\pi_--c(\bm{x},\bm{x}'))+\pi_+-s(\bm{x},\bm{x}'))\ell(g(\bm{x}'),-1)].
\end{align}
\end{lemma}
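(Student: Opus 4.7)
The plan is to derive each of the four identities as a linear combination of the ConfDiff identities in Lemma~\ref{lemma:confdiff} and the Sconf identity in Lemma~\ref{lemma:sconf}, together with the elementary relation $\pi_+ + \pi_- = 1$.

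For the first equation I would expand the right-hand side by linearity of expectation into three pieces:
\begin{align*}
& 2\pi_+\,\mathbb{E}_{p(\bm{x},\bm{x}')}[(\pi_+-c(\bm{x},\bm{x}'))\ell(g(\bm{x}),+1)] \\
& + \pi_-\,\mathbb{E}_{p(\bm{x},\bm{x}')}[\ell(g(\bm{x}),+1)] - \mathbb{E}_{p(\bm{x},\bm{x}')}[s(\bm{x},\bm{x}')\ell(g(\bm{x}),+1)].
\end{align*}
The first piece is handled immediately by Eq.~\eqref{confdiff-1}, yielding $2\pi_+^2\,\mathbb{E}_{p(\bm{x}|y=+1)}[\ell(g(\bm{x}),+1)]$. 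For the remaining two, I would rearrange Eq.~\eqref{sconf-1}: expanding $(1-s)$, multiplying through by $\pi_+$, and using $\pi_++\pi_-=1$ gives
\begin{align*}
\mathbb{E}_{p(\bm{x},\bm{x}')}[s(\bm{x},\bm{x}')\ell(g(\bm{x}),+1)] = \pi_-\,\mathbb{E}_{p(\bm{x},\bm{x}')}[\ell(g(\bm{x}),+1)] + \pi_+(\pi_+-\pi_-)\,\mathbb{E}_{p(\bm{x}|y=+1)}[\ell(g(\bm{x}),+1)].
\end{align*}
Substituting this back causes the $\pi_-\,\mathbb{E}[\ell]$ contributions to cancel, leaving $(2\pi_+^2-\pi_+(\pi_+-\pi_-))\,\mathbb{E}_{p(\bm{x}|y=+1)}[\ell] = \pi_+(\pi_++\pi_-)\,\mathbb{E}_{p(\bm{x}|y=+1)}[\ell] = \pi_+\,\mathbb{E}_{p(\bm{x}|y=+1)}[\ell(g(\bm{x}),+1)]$, as desired.

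For the remaining three identities I would follow the identical recipe. The $(\pi_\pm \pm c)$ factor is absorbed by the appropriate ConfDiff identity (Eqs.~\eqref{confdiff-2}--\eqref{confdiff-4}), and the $s$-dependent piece is eliminated by the Sconf identity. The $-1$-labeled cases require the negative-class analogue of Lemma~\ref{lemma:sconf}, obtained by swapping the roles of $+1$ and $-1$ throughout its derivation; this yields $\mathbb{E}[s\,\ell(g(\bm{x}),-1)] = \pi_+\,\mathbb{E}[\ell(g(\bm{x}),-1)] + \pi_-(\pi_--\pi_+)\,\mathbb{E}_{p(\bm{x}|y=-1)}[\ell(g(\bm{x}),-1)]$. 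The two equations involving $\bm{x}'$ follow by the same argument using the symmetry $p(\bm{x},\bm{x}')=p(\bm{x})p(\bm{x}')$.

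The main obstacle is arithmetic rather than conceptual: one must choose the coefficients $(2\pi_\pm,\,\pi_\mp,\,-1)$ in front of the three contributing terms so that after invoking Lemmas~\ref{lemma:confdiff} and \ref{lemma:sconf} the spurious unconditional $\mathbb{E}[\ell]$ contribution cancels exactly and the coefficient of the class-conditional expectation collapses to $\pi_\pm$. The identity $\pi_++\pi_-=1$ is what makes this cancellation line up cleanly, and explains why the somewhat unusual combination $2\pi_\pm(\pi_\pm\mp c)+\pi_\mp-s$ appears in the statement of the lemma.
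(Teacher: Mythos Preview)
Your proposal is correct and uses essentially the same ingredients as the paper's proof: both rely on combining the ConfDiff identities (Lemma~\ref{lemma:confdiff}) with the Sconf identity (Lemma~\ref{lemma:sconf}) and its negative-class analogue, together with $\pi_++\pi_-=1$. The only cosmetic difference is direction: the paper starts from the Sconf identity, rewrites $(\pi_+-\pi_-)=2\pi_+-1$, substitutes the ConfDiff identity for the $2\pi_+\mathbb{E}_{p(\bm{x}|y=+1)}[\ell]$ term, multiplies through by $\pi_+$, and solves for $\pi_+\mathbb{E}_{p(\bm{x}|y=+1)}[\ell]$, whereas you start from the right-hand side of the target identity and simplify toward the left; the algebra is the same.
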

\begin{proof}
Based on Lemma~\ref{lemma:sconf}, 
\begin{align}
\mathbb{E}&_{p(\bm{x},\bm{x}')}[s(\bm{x},\bm{x}')\ell(g(\bm{x}),+1)]-\frac{\pi_-}{\pi_+}\mathbb{E}_{p(\bm{x},\bm{x}')}[(1-s(\bm{x},\bm{x}'))\ell(g(\bm{x}),+1)]\notag\\
=&2\pi_+\mathbb{E}_{p(\bm{x} \mid y=+1)}[\ell(g(\bm{x}),+1)]-\mathbb{E}_{p(\bm{x} \mid y=+1)}[\ell(g(\bm{x}),+1)]\notag\\
=&2\mathbb{E}_{p(\bm{x},\bm{x}')}[(\pi_+-c(\bm{x},\bm{x}'))\ell(g(\bm{x}),+1)]-\mathbb{E}_{p(\bm{x} \mid y=+1)}[\ell(g(\bm{x}),+1)].\label{a-1}
\end{align}
By multiplying both sides of Eq.~\eqref{a-1} by $\pi_+$ and solving for $\pi_+\mathbb{E}_{p(\bm{x} \mid y=+1)}[\ell(g(\bm{x}),+1)]$, we obtain
\begin{align}
\pi&_+\mathbb{E}_{p(\bm{x} \mid y=+1)}[\ell(g(\bm{x}),+1)]\notag\\
=&2\pi_+\mathbb{E}_{p(\bm{x},\bm{x}')}[(\pi_+-c(\bm{x},\bm{x}'))\ell(g(\bm{x}),+1)]-\pi_+\mathbb{E}_{p(\bm{x},\bm{x}')}[s(\bm{x},\bm{x}')\ell(g(\bm{x}),+1)]\notag\\
&\hspace{1.5cm}+\pi_-\mathbb{E}_{p(\bm{x},\bm{x}')}[(1-s(\bm{x},\bm{x}'))\ell(g(\bm{x}),+1)]\notag\\
=&\mathbb{E}_{p(\bm{x},\bm{x}')}[2\pi_+(\pi_+-c(\bm{x},\bm{x}'))\ell(g(\bm{x}),+1)-\pi_+s(\bm{x},\bm{x}')\ell(g(\bm{x}),+1)\notag\\
&\hspace{1.5cm}+\pi_-(1-s(\bm{x},\bm{x}'))\ell(g(\bm{x}),+1)]\notag\\
=&\mathbb{E}_{p(\bm{x},\bm{x}')}[(2\pi_+(\pi_+-c(\bm{x},\bm{x}'))+\pi_--s(\bm{x},\bm{x}'))\ell(g(\bm{x}),+1)].\notag
\end{align}
Similarly, 
\begin{align}
\mathbb{E}&_{p(\bm{x},\bm{x}')}[s(\bm{x},\bm{x}')\ell(g(\bm{x}),-1)]-\frac{\pi_+}{\pi_-}\mathbb{E}_{p(\bm{x},\bm{x}')}[(1-s(\bm{x},\bm{x}'))\ell(g(\bm{x}),-1)]\notag\\
=&(\pi_--\pi_+)\mathbb{E}_{p(\bm{x} \mid y=-1)}[\ell(g(\bm{x}),+1)]\notag\\
=&2\pi_-\mathbb{E}_{p(\bm{x} \mid y=-1)}[\ell(g(\bm{x}),-1)]-\mathbb{E}_{p(\bm{x} \mid y=-1)}[\ell(g(\bm{x}),-1)]\notag\\
=&2\mathbb{E}_{p(\bm{x},\bm{x}')}[(\pi_-+c(\bm{x},\bm{x}'))\ell(g(\bm{x}),-1)]-\mathbb{E}_{p(\bm{x} \mid y=-1)}[\ell(g(\bm{x}),-1)].\label{a-3}
\end{align}
By multiplying both sides of Eq.~\eqref{a-3} by $\pi_-$ and solving for $\pi_-\mathbb{E}_{p(\bm{x} \mid y=-1)}[\ell(g(\bm{x}),-1)]$, we obtain
\begin{align}
\pi&_-\mathbb{E}_{p(\bm{x} \mid y=-1)}[\ell(g(\bm{x}),-1)]\notag\\
=&2\pi_-\mathbb{E}_{p(\bm{x},\bm{x}')}[(\pi_-+c(\bm{x},\bm{x}'))\ell(g(\bm{x}),-1)]-\pi_-\mathbb{E}_{p(\bm{x},\bm{x}')}[s(\bm{x},\bm{x}')\ell(g(\bm{x}),-1)]\notag\\
&\hspace{1.5cm}+\pi_+\mathbb{E}_{p(\bm{x},\bm{x}')}[(1-s(\bm{x},\bm{x}'))\ell(g(\bm{x}),-1)]\notag\\
=&\mathbb{E}_{p(\bm{x},\bm{x}')}[2\pi_-(\pi_-+c(\bm{x},\bm{x}'))\ell(g(\bm{x}),-1)-\pi_-s(\bm{x},\bm{x}')\ell(g(\bm{x}),-1)\notag\\
&\hspace{1.5cm}+\pi_+(1-s(\bm{x},\bm{x}'))\ell(g(\bm{x}),-1)]\notag\\
=&\mathbb{E}_{p(\bm{x},\bm{x}')}[(2\pi_-(\pi_-+c(\bm{x},\bm{x}'))+\pi_+-s(\bm{x},\bm{x}'))\ell(g(\bm{x}),-1)].\notag
\end{align}
Similarly for $\bm{x}'$, 
\begin{align}
\mathbb{E}&_{p(\bm{x},\bm{x}')}[s(\bm{x},\bm{x}')\ell(g(\bm{x}'),+1)]-\frac{\pi_-}{\pi_+}\mathbb{E}_{p(\bm{x},\bm{x}')}[(1-s(\bm{x},\bm{x}'))\ell(g(\bm{x}'),+1)]\notag\\
=&(\pi_+-\pi_-)\mathbb{E}_{p(\bm{x}' \mid y=+1)}[\ell(g(\bm{x}'),+1)]\notag\\
=&2\pi_+\mathbb{E}_{p(\bm{x}' \mid y=+1)}[\ell(g(\bm{x}'),+1)]-\mathbb{E}_{p(\bm{x}' \mid y=+1)}[\ell(g(\bm{x}'),+1)]\notag\\
=&2\mathbb{E}_{p(\bm{x},\bm{x}')}[(\pi_++c(\bm{x},\bm{x}'))\ell(g(\bm{x}'),+1)]-\mathbb{E}_{p(\bm{x}' \mid y=+1)}[\ell(g(\bm{x}'),+1)].\label{a-4}
\end{align}
By multiplying both sides of Eq.~\eqref{a-4} by $\pi_+$ and solving for $\pi_+\mathbb{E}_{p(\bm{x}' \mid y=+1)}[\ell(g(\bm{x}'),+1)]$, we obtain
\begin{align}
\pi&_+\mathbb{E}_{p(\bm{x}' \mid y=+1)}[\ell(g(\bm{x}'),+1)]\notag\\
=&2\pi_+\mathbb{E}_{p(\bm{x},\bm{x}')}[(\pi_++c(\bm{x},\bm{x}'))\ell(g(\bm{x}'),+1)]-\pi_+\mathbb{E}_{p(\bm{x},\bm{x}')}[s(\bm{x},\bm{x}')\ell(g(\bm{x}'),+1)]\notag\\
&\hspace{1.5cm}+\pi_-\mathbb{E}_{p(\bm{x},\bm{x}')}[(1-s(\bm{x},\bm{x}'))\ell(g(\bm{x}'),+1)]\notag\\
=&\mathbb{E}_{p(\bm{x},\bm{x}')}[2\pi_+(\pi_++c(\bm{x},\bm{x}'))\ell(g(\bm{x}'),+1)-\pi_+s(\bm{x},\bm{x}')\ell(g(\bm{x}'),+1)\notag\\
&\hspace{1.5cm}+\pi_-(1-s(\bm{x},\bm{x}'))\ell(g(\bm{x}'),+1)]\notag\\
=&\mathbb{E}_{p(\bm{x},\bm{x}')}[(2\pi_+(\pi_++c(\bm{x},\bm{x}'))+\pi_--s(\bm{x},\bm{x}'))\ell(g(\bm{x}'),+1)].\notag
\end{align}
Similarly, 
\begin{align}
\mathbb{E}&_{p(\bm{x},\bm{x}')}[s(\bm{x},\bm{x}')\ell(g(\bm{x}'),-1)]-\frac{\pi_+}{\pi_-}\mathbb{E}_{p(\bm{x},\bm{x}')}[(1-s(\bm{x},\bm{x}'))\ell(g(\bm{x}'),-1)]\notag\\
=&(\pi_--\pi_+)\mathbb{E}_{p(\bm{x}' \mid y=-1)}[\ell(g(\bm{x}'),-1)]\notag\\
=&2\pi_-\mathbb{E}_{p(\bm{x}' \mid y=-1)}[\ell(g(\bm{x}'),-1)]-\mathbb{E}_{p(\bm{x}' \mid y=-1)}[\ell(g(\bm{x}'),-1)]\notag\\
=&2\mathbb{E}_{p(\bm{x},\bm{x}')}[(\pi_--c(\bm{x},\bm{x}'))\ell(g(\bm{x}'),-1)]-\mathbb{E}_{p(\bm{x}' \mid y=-1)}[\ell(g(\bm{x}'),-1)].\label{a-6}
\end{align}
By multiplying both sides of Eq.~\eqref{a-6} by $\pi_-$ and solving for $\pi_-\mathbb{E}_{p(\bm{x}' \mid y=-1)}[\ell(g(\bm{x}'),-1)]$, we obtain
\begin{align}
\pi&_-\mathbb{E}_{p(\bm{x}' \mid y=-1)}[\ell(g(\bm{x}'),-1)]\notag\\
=&2\pi_-\mathbb{E}_{p(\bm{x},\bm{x}')}[(\pi_--c(\bm{x},\bm{x}'))\ell(g(\bm{x}'),-1)]-\pi_-\mathbb{E}_{p(\bm{x},\bm{x}')}[s(\bm{x},\bm{x}')\ell(g(\bm{x}'),-1)]\notag\\
&\hspace{1.5cm}+\pi_+\mathbb{E}_{p(\bm{x},\bm{x}')}[(1-s(\bm{x},\bm{x}'))\ell(g(\bm{x}'),-1)]\notag\\
=&\mathbb{E}_{p(\bm{x},\bm{x}')}[2\pi_-(\pi_--c(\bm{x},\bm{x}'))\ell(g(\bm{x}'),-1)-\pi_-s(\bm{x},\bm{x}')\ell(g(\bm{x}'),-1)\notag\\
&\hspace{1.5cm}+\pi_+(1-s(\bm{x},\bm{x}'))\ell(g(\bm{x}'),-1)]\notag\\
=&\mathbb{E}_{p(\bm{x},\bm{x}')}[(2\pi_-(\pi_--c(\bm{x},\bm{x}'))+\pi_+-s(\bm{x},\bm{x}'))\ell(g(\bm{x}'),-1)].\notag
\end{align}
\end{proof}
Then, we prove Theorem~\ref{thm:unbiased risk}.
\begin{proof}
Firstly, 
\begin{align*}
\mathbb{E}_{p(\bm{x} \mid y=+1)}[\ell(g(\bm{x}),+1)]=\mathbb{E}_{p(\bm{x}' \mid y=+1)}[\ell(g(\bm{x}'),+1)], \\
\mathbb{E}_{p(\bm{x} \mid y=-1)}[\ell(g(\bm{x}),-1)]=\mathbb{E}_{p(\bm{x}' \mid y=-1)}[\ell(g(\bm{x}'),-1)].
\end{align*}
Based on Lemma~\ref{lemma of thm1}, 
\begin{align*}
\mathbb{E}&_{p(\bm{x},\bm{x}')}[(2\pi_+(\pi_+-c(\bm{x},\bm{x}'))+\pi_--s(\bm{x},\bm{x}'))\ell(g(\bm{x}),+1)\\
&\hspace{1cm}+(2\pi_-(\pi_-+c(\bm{x},\bm{x}'))+\pi_+-s(\bm{x},\bm{x}'))\ell(g(\bm{x}),-1)\\
&\hspace{1cm}+(2\pi_+(\pi_++c(\bm{x},\bm{x}'))+\pi_--s(\bm{x},\bm{x}'))\ell(g(\bm{x}'),+1)\\
&\hspace{1cm}+(2\pi_-(\pi_--c(\bm{x},\bm{x}'))+\pi_+-s(\bm{x},\bm{x}'))\ell(g(\bm{x}'),-1)]\\
=&2\pi_+\mathbb{E}_{p(\bm{x} \mid y=+1)}[\ell(g(\bm{x}),+1)]+2\pi_-\mathbb{E}_{p(\bm{x} \mid y=-1)}[\ell(g(\bm{x}),-1)]\\
=&2R(g).
\end{align*}
By dividing both sides by 2, we can obtain Theorem~\ref{thm:unbiased risk}.
\end{proof}

\subsection{Proof of Theorem~\ref{thm:minimum variance}}
We first prove Lemma~\ref{lemma:unbiased risk lambda}:
\begin{proof}
Based on Lemma~\ref{lemma of thm1}, the following equations hold:
\begin{align*}
&\mathbb{E}_{p(\bm{x},\bm{x}')}[\mathcal{L}(\bm{x},\bm{x}')]\\
=&\mathbb{E}_{p(\bm{x},\bm{x}')}[(2\pi_+(\pi_+-c(\bm{x},\bm{x}'))+\pi_--s(\bm{x},\bm{x}'))\ell(g(\bm{x}),+1)\\
&\hspace{1.5cm}+(2\pi_-(\pi_--c(\bm{x},\bm{x}'))+\pi_+-s(\bm{x},\bm{x}'))\ell(g(\bm{x}'),-1)]\\
=&\pi_+\mathbb{E}_{p(\bm{x} \mid y=+1)}[\ell(g(\bm{x}),+1)]+\pi_-\mathbb{E}_{p(\bm{x}' \mid y=-1)}[\ell(g(\bm{x}'),-1)]\\
=&\pi_+\mathbb{E}_{p(\bm{x} \mid y=+1)}[\ell(g(\bm{x}),+1)]+\pi_-\mathbb{E}_{p(\bm{x} \mid y=-1)}[\ell(g(\bm{x}),-1)]\\
=&R(g),
\end{align*}
and
\begin{align*}
\mathbb{E}&_{p(\bm{x},\bm{x}')}[\mathcal{L}(\bm{x}',\bm{x})]\\
=&\mathbb{E}_{p(\bm{x},\bm{x}')}[(2\pi_+(\pi_++c(\bm{x},\bm{x}'))+\pi_--s(\bm{x},\bm{x}'))\ell(g(\bm{x}'),+1)\\
&\hspace{1.5cm}+(2\pi_-(\pi_-+c(\bm{x},\bm{x}'))+\pi_+-s(\bm{x},\bm{x}'))\ell(g(\bm{x}),-1)]\\
=&\pi_+\mathbb{E}_{p(\bm{x}' \mid y=+1)}[\ell(g(\bm{x}'),+1)]+\pi_-\mathbb{E}_{p(\bm{x} \mid y=-1)}[\ell(g(\bm{x}),-1)]\\
=&\pi_+\mathbb{E}_{p(\bm{x} \mid y=+1)}[\ell(g(\bm{x}),+1)]+\pi_-\mathbb{E}_{p(\bm{x} \mid y=-1)}[\ell(g(\bm{x}),-1)]\\
=&R(g).
\end{align*}
Therefore, for an arbitrary weight $\lambda\in[0,1]$, 
\begin{align*}
R(g)=&\lambda{R(g)}+(1-\lambda)R(g)\\
=&\lambda\mathbb{E}_{p(\bm{x},\bm{x}')}[\mathcal{L}(\bm{x},\bm{x}')]+(1-\lambda)\mathbb{E}_{p(\bm{x},\bm{x}')}[\mathcal{L}(\bm{x}',\bm{x})].
\end{align*}
This indicates that 
\begin{align*}
\frac{1}{n}\sum_{i=1}^{n}(\lambda\mathcal{L}(\bm{x}_i,\bm{x}'_i)+(1-\lambda)\mathcal{L}(\bm{x}_i',\bm{x}_i))
\end{align*}
is also an unbiased risk estimator of the classification risk.
\end{proof}
We now prove Theorem~\ref{thm:minimum variance}. We introduce the following notations:
\[S(g;\lambda)=\frac{1}{n}\sum_{i=1}^{n}(\lambda\mathcal{L}(\bm{x}_i,\bm{x}_i')+(1-\lambda)\mathcal{L}(\bm{x}_i',\bm{x}_i)),\]
\[\mu_1\triangleq\mathbb{E}_{p(\bm{x},\bm{x}')}\left[\left(\frac{1}{n}\sum_{i=1}^{n}\mathcal{L}(\bm{x}_i,\bm{x}_i')\right)^2\right]=\mathbb{E}_{p(\bm{x},\bm{x}')}\left[\left(\frac{1}{n}\sum_{i=1}^{n}\mathcal{L}(\bm{x}_i',\bm{x}_i)\right)^2\right],\]
\[\mu_2\triangleq\mathbb{E}_{p(\bm{x},\bm{x}')}\left[\frac{1}{n^2}\sum_{i=1}^{n}\mathcal{L}(\bm{x}_i,\bm{x}_i')\sum_{i=1}^{n}\mathcal{L}(\bm{x}_i',\bm{x}_i)\right].\]
Based on Lemma~\ref{lemma:unbiased risk lambda}, 
\[\mathbb{E}_{p(\bm{x},\bm{x}')}[S(g;\lambda)]=R(g).\]
\begin{proof}
\begin{align*}
\mathrm{Var}(S(g;\lambda))=&\mathbb{E}_{p(\bm{x},\bm{x}')}[(S(g;\lambda)-R(g))^2]\\
=&\mathbb{E}_{p(\bm{x},\bm{x}')}[S(g;\lambda)^2]-R(g)^2\\
=&\lambda^2\mathbb{E}_{p(\bm{x},\bm{x}')}\left[\left(\frac{1}{n}\sum_{i=1}^{n}\mathcal{L}(\bm{x}_i,\bm{x}_i')\right)^2\right]+(1-\lambda)^2\mathbb{E}_{p(\bm{x},\bm{x}')}\left[\left(\frac{1}{n}\sum_{i=1}^{n}\mathcal{L}(\bm{x}_i',\bm{x}_i)\right)^2\right]\\
&+2\lambda(1-\lambda)\mathbb{E}_{p(\bm{x},\bm{x}')}\left[\frac{1}{n^2}\sum_{i=1}^{n}\mathcal{L}(\bm{x}_i,\bm{x}_i')\sum_{i=1}^{n}\mathcal{L}(\bm{x}_i',\bm{x}_i)\right]-R(g)^2\\
=&\mu_1\lambda^2+\mu_1(1-\lambda)^2+2\mu_2\lambda(1-\lambda)-R(g)^2\\
=&(2\mu_1-2\mu_2)\lambda^2-(2\mu_1-2\mu_2)\lambda+\mu_1-R(g)^2\\
=&(2\mu_1-2\mu_2)(\lambda-\frac{1}{2})^2+\frac{1}{2}(\mu_1+\mu_2)-R(g)^2.
\end{align*}
Besides, it can be observed that
\begin{align*}
2\mu_1-2\mu_2=&\mathbb{E}_{p(\bm{x},\bm{x}')}\left[\left(\frac{1}{n}\sum_{i=1}^{n}\mathcal{L}(\bm{x}_i,\bm{x}_i')\right)^2\right]+\mathbb{E}_{p(\bm{x},\bm{x}')}\left[\left(\frac{1}{n}\sum_{i=1}^{n}\mathcal{L}(\bm{x}_i',\bm{x}_i)\right)^2\right]\\
&\hspace{1cm}-2\mathbb{E}_{p(\bm{x},\bm{x}')}\left[\frac{1}{n^2}\sum_{i=1}^{n}\mathcal{L}(\bm{x}_i,\bm{x}_i')\sum_{i=1}^{n}\mathcal{L}(\bm{x}_i',\bm{x}_i)\right]\\
=&\mathbb{E}_{p(\bm{x},\bm{x}')}\left[\left(\frac{1}{n}\sum_{i=1}^{n}(\mathcal{L}(\bm{x}_i,\bm{x}_i')-\mathcal{L}(\bm{x}_i',\bm{x}_i))\right)^2\right]\ge{0}.
\end{align*}
Therefore, the value of $\lambda$ that minimizes $\mathrm{Var}(S(g;\lambda))$ is $\lambda=1/2$.
\end{proof}

\subsection{Proof of Theorem~\ref{thm:estimation error}}\label{def:rademacher}
For convenience, we make the following notation:
\begin{align*}
\mathcal{L}_{\mathrm{SCD}}(g;\bm{x}_i,\bm{x}_i')=\frac{1}{2}\{(&2\pi_+(\pi_+-c(\bm{x}_i,\bm{x}_i'))+\pi_--s(\bm{x}_i,\bm{x}_i'))\ell(g(\bm{x}_i),+1)\\
+(&2\pi_-(\pi_--c(\bm{x}_i,\bm{x}_i'))+\pi_+-s(\bm{x}_i,\bm{x}_i'))\ell(g(\bm{x}_i'),-1)\\
+(&2\pi_+(\pi_++c(\bm{x}_i,\bm{x}_i'))+\pi_--s(\bm{x}_i,\bm{x}_i'))\ell(g(\bm{x}_i'),+1)\\
+(&2\pi_-(\pi_-+c(\bm{x}_i,\bm{x}_i'))+\pi_+-s(\bm{x}_i,\bm{x}_i'))\ell(g(\bm{x}_i),-1)\}.
\end{align*}
\begin{definition}[Rademacher complexity]
Let $\mathcal{X}_n=\{\bm{x}_1, \cdots, \bm{x}_n\}$ be $n$ independent random variables drawn from a distribution with probability density $p(\bm{x})$, $\mathcal{G}=\{g:\mathcal{X}\rightarrow\mathbb{R}\}$ be a class of measurable functions, and  $\bm{\sigma}=(\sigma_1, \cdots, \sigma_n)$ be independent Rademacher variables taking values in $\{-1, +1\}$. 
Then, the (expected) Rademacher complexity of $\mathcal{G}$ is defined as
\begin{align}
\mathfrak{R}_n(\mathcal{G})=\mathbb{E}_{\bm{x}_1,\cdots,\bm{x}_n}\mathbb{E}_{\bm{\sigma}}\left[\sup_{g\in\mathcal{G}}\frac{1}{n}\sum_{i=1}^{n}\sigma_ig(\bm{x}_i)\right].\label{rademacher}
\end{align}
\end{definition}
\begin{lemma}\label{lemma:upper rademacher}
\[\bar{\mathfrak{R}}_n(\mathcal{L_{\mathrm{SCD}}\circ{\mathcal{G}}})\le3L_{\ell}\mathfrak{R_n(\mathcal{G})},\]
where $\mathcal{L}_{\mathrm{SCD}}\circ\mathcal{G}=\{\mathcal{L}_{\mathrm{SCD}}\circ{g}\}$ and $\bar{\mathfrak{R}}_n(\cdot)$ is the Rademacher complexity over SconfConfDiff data pairs $\mathcal{D}_n$ of size $n$.
\end{lemma}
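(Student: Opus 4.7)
The goal is to reduce $\bar{\mathfrak{R}}_n(\mathcal{L}_{\mathrm{SCD}}\circ\mathcal{G})$ to the Rademacher complexity of $\mathcal{G}$ by decomposing $\mathcal{L}_{\mathrm{SCD}}$ into parts that each depend on $g$ evaluated at a single point, applying sub-additivity of the supremum, and then invoking Talagrand's contraction lemma to peel off the loss.

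Concretely, I would first group the four terms of $\mathcal{L}_{\mathrm{SCD}}(g;\bm{x}_i,\bm{x}_i')$ according to which argument of $g$ they involve, writing
\[
\mathcal{L}_{\mathrm{SCD}}(g;\bm{x}_i,\bm{x}_i')=\phi_i(g(\bm{x}_i))+\psi_i(g(\bm{x}_i')),
\]
where $\phi_i(z)=\tfrac{1}{2}[(2\pi_+(\pi_+-c_i)+\pi_--s_i)\ell(z,+1)+(2\pi_-(\pi_-+c_i)+\pi_+-s_i)\ell(z,-1)]$ and $\psi_i$ is defined analogously, collecting the terms at $\bm{x}_i'$. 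By the sub-additivity of the supremum,
\[
\bar{\mathfrak{R}}_n(\mathcal{L}_{\mathrm{SCD}}\circ\mathcal{G})\le\mathbb{E}\Bigl[\sup_{g}\tfrac{1}{n}\sum_i\sigma_i\phi_i(g(\bm{x}_i))\Bigr]+\mathbb{E}\Bigl[\sup_{g}\tfrac{1}{n}\sum_i\sigma_i\psi_i(g(\bm{x}_i'))\Bigr].
\]

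Next, I would verify that each $\phi_i$ (and likewise $\psi_i$) is Lipschitz in its real argument with a Lipschitz constant equal to $L_\ell$ times the sum of the absolute values of its two coefficients. Using $|c_i|\le1$, $s_i\in[0,1]$, and $\pi_++\pi_-=1$, this sum can be bounded by an absolute constant, giving Lip$(\phi_i),\mathrm{Lip}(\psi_i)\le\tfrac{3}{2}L_\ell$. Talagrand's contraction lemma then replaces $\phi_i$ and $\psi_i$ inside the Rademacher averages by $g$ itself, up to the Lipschitz factor. Because $\{\bm{x}_i\}$ and $\{\bm{x}_i'\}$ are both i.i.d.\ from the marginal $p(\bm{x})$, each resulting average equals $\mathfrak{R}_n(\mathcal{G})$, and summing the two contributions yields the stated bound $3L_\ell\mathfrak{R}_n(\mathcal{G})$.

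The main obstacle will be obtaining a tight enough bound on the Lipschitz constants in the second step. A coefficient-wise triangle inequality applied separately to $|2\pi_+(\pi_+\mp c_i)+\pi_--s_i|$ and $|2\pi_-(\pi_-\pm c_i)+\pi_+-s_i|$ gives a loose constant; obtaining the stated factor of $3$ requires bounding the two absolute values jointly, exploiting that both coefficients in $\phi_i$ (and in $\psi_i$) involve the same $(s_i,c_i)$ with the $c_i$ contributions appearing with opposite signs, and collapsing the resulting expression using $\pi_+^2+\pi_-^2+2\pi_+\pi_-=1$.
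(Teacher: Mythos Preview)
Your overall strategy—split into single-argument pieces and apply contraction—is sound, but the key quantitative claim fails. The two coefficients in $\phi_i$ carry $c_i$-contributions $-2\pi_+c_i$ and $+2\pi_-c_i$; these have opposite signs but \emph{unequal} magnitudes whenever $\pi_+\neq\pi_-$, so they do not cancel. Concretely, take $\pi_+=0.9$, $c_i=-1$, $s_i=0$ (which is attainable with $p(y=+1\mid\bm{x}_i)=1$, $p(y'=+1\mid\bm{x}_i')=0$): the two coefficients in $\phi_i$ are $3.52$ and $0.72$, giving $\mathrm{Lip}(\phi_i)\ge 2.12\,L_\ell>\tfrac{3}{2}L_\ell$. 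By the $c\mapsto -c$ symmetry the same worst case hits $\psi_i$, and since Talagrand's contraction needs a \emph{uniform} Lipschitz bound over $i$, your split yields at best $(\sup_i\mathrm{Lip}(\phi_i)+\sup_i\mathrm{Lip}(\psi_i))\,\mathfrak{R}_n(\mathcal{G})$, which can reach $5L_\ell\,\mathfrak{R}_n(\mathcal{G})$ as $\pi_+\to 1$, not $3L_\ell\,\mathfrak{R}_n(\mathcal{G})$.

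The paper gets the constant $3$ by bounding all four absolute coefficients \emph{together}, establishing $\tfrac{1}{2}\sum_{k=1}^4|a_k|\le 3$ via a case analysis on the ranges of $c$ and $s$. The crucial cancellation in $c$ comes from pairing $a_1$ with $a_3$ (which carry $\mp 2\pi_+c$) and $a_2$ with $a_4$ (which carry $\mp 2\pi_-c$); this pairing cuts across your $\phi_i/\psi_i$ decomposition, since $a_1,a_4$ sit in $\phi_i$ while $a_2,a_3$ sit in $\psi_i$. To recover the stated constant you must therefore control the Lipschitz constant of $\mathcal{L}_{\mathrm{SCD}}$ as a function of $(g(\bm{x}_i),g(\bm{x}_i'))$ jointly before any splitting, as the paper does, rather than bounding $\phi_i$ and $\psi_i$ separately.
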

\begin{proof}
\begin{align*}
\bar{\mathfrak{R}}_n(\mathcal{L_{\mathrm{SCD}}\circ{\mathcal{G}}})=\mathbb{E}_{\mathcal{D}_n}&\mathbb{E}_{\bm{\sigma}}\left[\underset{g\in\mathcal{G}}{\sup}\frac{1}{n}\sum_{i=1}^{n}\sigma_i\mathcal{L}_{\mathrm{SCD}}(g;\bm{x}_i,\bm{x}_i')\right]\\
=\mathbb{E}_{\mathcal{D}_n}\mathbb{E}_{\bm{\sigma}}\biggl[\underset{g\in\mathcal{G}}{\sup}\frac{1}{2n}\sum_{i=1}^{n}\{(&2\pi_+(\pi_+-c(\bm{x}_i,\bm{x}_i'))+\pi_--s(\bm{x}_i,\bm{x}_i'))\ell(g(\bm{x}_i),+1)\\
+(&2\pi_-(\pi_--c(\bm{x}_i,\bm{x}_i'))+\pi_+-s(\bm{x}_i,\bm{x}_i'))\ell(g(\bm{x}_i'),-1)\\
+(&2\pi_+(\pi_++c(\bm{x}_i,\bm{x}_i'))+\pi_--s(\bm{x}_i,\bm{x}_i'))\ell(g(\bm{x}_i'),+1)\\
+(&2\pi_-(\pi_-+c(\bm{x}_i,\bm{x}_i'))+\pi_+-s(\bm{x}_i,\bm{x}_i'))\ell(g(\bm{x}_i),-1)\}\biggr].
\end{align*}
Then, we can induce that
\begin{align*}
&\lVert\nabla\mathcal{L}_{\mathrm{SCD}}(g;\bm{x},\bm{x}')\rVert_2\\
=&\biggl\lVert\nabla(\frac{(2\pi_+(\pi_+-c(\bm{x},\bm{x}'))+\pi_--s(\bm{x},\bm{x}'))\ell(g(\bm{x}),+1)}{2}\\
&+\frac{(2\pi_-(\pi_--c(\bm{x},\bm{x}'))+\pi_+-s(\bm{x},\bm{x}'))\ell(g(\bm{x}'),-1)}{2}\\
&+\frac{(2\pi_+(\pi_++c(\bm{x},\bm{x}'))+\pi_--s(\bm{x},\bm{x}'))\ell(g(\bm{x}'),+1)}{2}\\
&+\frac{(2\pi_-(\pi_-+c(\bm{x},\bm{x}'))+\pi_+-s(\bm{x},\bm{x}'))\ell(g(\bm{x}),-1)}{2})\biggr\rVert_2\\
\le&\biggl\lVert\nabla(\frac{(2\pi_+(\pi_+-c(\bm{x},\bm{x}'))+\pi_--s(\bm{x},\bm{x}'))\ell(g(\bm{x}),+1)}{2})\biggr\rVert_2\\
&+\biggl\lVert\nabla(\frac{(2\pi_-(\pi_--c(\bm{x},\bm{x}'))+\pi_+-s(\bm{x},\bm{x}'))\ell(g(\bm{x}'),-1)}{2})\biggr\rVert_2\\
&+\biggl\lVert\nabla(\frac{(2\pi_+(\pi_++c(\bm{x},\bm{x}'))+\pi_--s(\bm{x},\bm{x}'))\ell(g(\bm{x}'),+1)}{2})\biggr\rVert_2\\
&+\biggl\lVert\nabla(\frac{(2\pi_-(\pi_-+c(\bm{x},\bm{x}'))+\pi_+-s(\bm{x},\bm{x}'))\ell(g(\bm{x}),-1)}{2})\biggr\rVert_2\\
\le&\frac{|2\pi_+(\pi_+-c(\bm{x},\bm{x}'))+\pi_--s(\bm{x},\bm{x}')|L_{\ell}}{2}+\frac{|2\pi_-(\pi_--c(\bm{x},\bm{x}'))+\pi_+-s(\bm{x},\bm{x}')|L_{\ell}}{2}\\
&+\frac{|2\pi_+(\pi_++c(\bm{x},\bm{x}'))+\pi_--s(\bm{x},\bm{x}')|L_{\ell}}{2}+\frac{|2\pi_-(\pi_-+c(\bm{x},\bm{x}'))+\pi_+-s(\bm{x},\bm{x}')|L_{\ell}}{2}\\
\le&(\frac{|2\pi_+(\pi_+-c(\bm{x},\bm{x}'))|+|\pi_--s(\bm{x},\bm{x}')|}{2}+\frac{|2\pi_-(\pi_--c(\bm{x},\bm{x}'))|+|\pi_+-s(\bm{x},\bm{x}')|}{2}\\
&+\frac{|2\pi_+(\pi_++c(\bm{x},\bm{x}'))|+|\pi_--s(\bm{x},\bm{x}')|}{2}+\frac{|2\pi_-(\pi_-+c(\bm{x},\bm{x}'))|+|\pi_+-s(\bm{x},\bm{x}')|}{2})L_{\ell}.\\
\end{align*}
Suppose $\pi_+\ge\pi_-$. \\
(i)When $\hspace{1mm}c(\bm{x},\bm{x}')\in[-1,-\pi_+)$,  
\[
\frac{|2\pi_+(\pi_+-c(\bm{x},\bm{x}'))|+|\pi_--s(\bm{x},\bm{x}')|}{2}=
\left\{
{\setlength{\arraycolsep}{1pt}
\begin{array}{ll}
\frac{2\pi_+(\pi_+-c(\bm{x},\bm{x}'))+\pi_--s(\bm{x},\bm{x}')}{2}, & s(\bm{x},\bm{x}') \in [0,\pi_-)\\
\frac{2\pi_+(\pi_+-c(\bm{x},\bm{x}'))+s(\bm{x},\bm{x}')-\pi_-}{2}, & s(\bm{x},\bm{x}') \in [\pi_-,\pi_+)\\
\frac{2\pi_+(\pi_+-c(\bm{x},\bm{x}'))+s(\bm{x},\bm{x}')-\pi_-}{2}, & s(\bm{x},\bm{x}') \in [\pi_+,1]\\
\end{array}}
\right.
\]
\[
\frac{|2\pi_-(\pi_--c(\bm{x},\bm{x}'))|+|\pi_+-s(\bm{x},\bm{x}')|}{2}=
\left\{
{\setlength{\arraycolsep}{1pt}
\begin{array}{ll}
\frac{2\pi_-(\pi_--c(\bm{x},\bm{x}'))+\pi_+-s(\bm{x},\bm{x}')}{2}, & s(\bm{x},\bm{x}') \in [0,\pi_-)\\
\frac{2\pi_-(\pi_--c(\bm{x},\bm{x}'))+\pi_+-s(\bm{x},\bm{x}')}{2}, & s(\bm{x},\bm{x}') \in [\pi_-,\pi_+)\\
\frac{2\pi_-(\pi_--c(\bm{x},\bm{x}'))+s(\bm{x},\bm{x}') - \pi_+}{2}, & s(\bm{x},\bm{x}') \in [\pi_+,1]\\
\end{array}}
\right.
\]
\[
\frac{|2\pi_+(\pi_++c(\bm{x},\bm{x}'))|+|\pi_--s(\bm{x},\bm{x}')|}{2}=
\left\{
{\setlength{\arraycolsep}{1pt}
\begin{array}{ll}
\frac{-2\pi_+(\pi_++c(\bm{x},\bm{x}'))+\pi_--s(\bm{x},\bm{x}')}{2}, & s(\bm{x},\bm{x}') \in [0,\pi_-)\\
\frac{-2\pi_+(\pi_++c(\bm{x},\bm{x}'))+s(\bm{x},\bm{x}')-\pi_-}{2}, & s(\bm{x},\bm{x}') \in [\pi_-,\pi_+)\\
\frac{-2\pi_+(\pi_++c(\bm{x},\bm{x}'))+s(\bm{x},\bm{x}')-\pi_-}{2}, & s(\bm{x},\bm{x}') \in [\pi_+,1]\\
\end{array}}
\right.
\]
\[
\frac{|2\pi_-(\pi_-+c(\bm{x},\bm{x}'))|+|\pi_+-s(\bm{x},\bm{x}')|}{2}=
\left\{
{\setlength{\arraycolsep}{1pt}
\begin{array}{ll}
\frac{-2\pi_-(\pi_-+c(\bm{x},\bm{x}'))+\pi_+-s(\bm{x},\bm{x}')}{2}, & s(\bm{x},\bm{x}') \in [0,\pi_-)\\
\frac{-2\pi_-(\pi_-+c(\bm{x},\bm{x}'))+\pi_+-s(\bm{x},\bm{x}')}{2}, & s(\bm{x},\bm{x}') \in [\pi_-,\pi_+)\\
\frac{-2\pi_-(\pi_-+c(\bm{x},\bm{x}'))+s(\bm{x},\bm{x}') - \pi_+}{2}, & s(\bm{x},\bm{x}') \in [\pi_+,1]\\
\end{array}}
\right.
\]
Therefore, 
\[
\lVert\nabla\mathcal{L}_{\mathrm{SCD}}(g;\bm{x},\bm{x}')\rVert_2\le
\left\{
\begin{array}{ll}
(1-2(s(\bm{x},\bm{x}')+c(\bm{x},\bm{x}')))L_{\ell}, & s(\bm{x},\bm{x}') \in [0,\pi_-)\\
(\pi_+-\pi_--2c(\bm{x},\bm{x}')))L_{\ell}, & s(\bm{x},\bm{x}') \in [\pi_-,\pi_+)\\
(2(s(\bm{x},\bm{x}')-c(\bm{x},\bm{x}'))-1)L_{\ell}, & s(\bm{x},\bm{x}') \in [\pi_+,1]\\
\end{array}
\right.
\]
(ii)When $\hspace{1mm}c(\bm{x},\bm{x}')\in[-\pi_+,-\pi_-)$, 
\[
\frac{|2\pi_+(\pi_+-c(\bm{x},\bm{x}'))|+|\pi_--s(\bm{x},\bm{x}')|}{2}=
\left\{
{\setlength{\arraycolsep}{1pt}
\begin{array}{ll}
\frac{2\pi_+(\pi_+-c(\bm{x},\bm{x}'))+\pi_--s(\bm{x},\bm{x}')}{2}, & s(\bm{x},\bm{x}') \in [0,\pi_-)\\
\frac{2\pi_+(\pi_+-c(\bm{x},\bm{x}'))+s(\bm{x},\bm{x}')-\pi_-}{2}, & s(\bm{x},\bm{x}') \in [\pi_-,\pi_+)\\
\frac{2\pi_+(\pi_+-c(\bm{x},\bm{x}'))+s(\bm{x},\bm{x}')-\pi_-}{2}, & s(\bm{x},\bm{x}') \in [\pi_+,1]\\
\end{array}}
\right.
\]
\[
\frac{|2\pi_-(\pi_--c(\bm{x},\bm{x}'))|+|\pi_+-s(\bm{x},\bm{x}')|}{2}=
\left\{
{\setlength{\arraycolsep}{1pt}
\begin{array}{ll}
\frac{2\pi_-(\pi_--c(\bm{x},\bm{x}'))+\pi_+-s(\bm{x},\bm{x}')}{2}, & s(\bm{x},\bm{x}') \in [0,\pi_-)\\
\frac{2\pi_-(\pi_--c(\bm{x},\bm{x}'))+\pi_+-s(\bm{x},\bm{x}')}{2}, & s(\bm{x},\bm{x}') \in [\pi_-,\pi_+)\\
\frac{2\pi_-(\pi_--c(\bm{x},\bm{x}'))+s(\bm{x},\bm{x}') - \pi_+}{2}, & s(\bm{x},\bm{x}') \in [\pi_+,1]\\
\end{array}}
\right.
\]
\[
\frac{|2\pi_+(\pi_++c(\bm{x},\bm{x}'))|+|\pi_--s(\bm{x},\bm{x}')|}{2}=
\left\{
{\setlength{\arraycolsep}{1pt}
\begin{array}{ll}
\frac{2\pi_+(\pi_++c(\bm{x},\bm{x}'))+\pi_--s(\bm{x},\bm{x}')}{2}, & s(\bm{x},\bm{x}') \in [0,\pi_-)\\
\frac{2\pi_+(\pi_++c(\bm{x},\bm{x}'))+s(\bm{x},\bm{x}')-\pi_-}{2}, & s(\bm{x},\bm{x}') \in [\pi_-,\pi_+)\\
\frac{2\pi_+(\pi_++c(\bm{x},\bm{x}'))+s(\bm{x},\bm{x}')-\pi_-}{2}, & s(\bm{x},\bm{x}') \in [\pi_+,1]\\
\end{array}}
\right.
\]
\[
\frac{|2\pi_-(\pi_-+c(\bm{x},\bm{x}'))|+|\pi_+-s(\bm{x},\bm{x}')|}{2}=
\left\{
{\setlength{\arraycolsep}{1pt}
\begin{array}{ll}
\frac{-2\pi_-(\pi_-+c(\bm{x},\bm{x}')+\pi_+-s(\bm{x},\bm{x}')}{2}, & s(\bm{x},\bm{x}') \in [0,\pi_-)\\
\frac{-2\pi_-(\pi_-+c(\bm{x},\bm{x}')+\pi_+-s(\bm{x},\bm{x}')}{2}, & s(\bm{x},\bm{x}') \in [\pi_-,\pi_+)\\
\frac{-2\pi_-(\pi_-+c(\bm{x},\bm{x}')+s(\bm{x},\bm{x}') - \pi_+}{2}, & s(\bm{x},\bm{x}') \in [\pi_+,1]\\
\end{array}}
\right.
\]
Therefore,
\begin{align*}
\lVert&\nabla\mathcal{L}_{\mathrm{SCD}}(g;\bm{x},\bm{x}')\rVert_2\\
&\le\left\{
{\setlength{\arraycolsep}{3pt}
\begin{array}{ll}
(2\pi_+^2+1-2s(\bm{x},\bm{x}')+(\pi_+-\pi_--1)c(\bm{x},\bm{x}')))L_{\ell}, & s(\bm{x},\bm{x}') \in [0,\pi_-)\\
(2\pi_+^2+\pi_+-\pi_-+(\pi_+-\pi_--1)c(\bm{x},\bm{x}')))L_{\ell}, & s(\bm{x},\bm{x}') \in [\pi_-,\pi_+)\\
(2\pi_+^2-1+2s(\bm{x},\bm{x}')+(\pi_+-\pi_--1)c(\bm{x},\bm{x}')))L_{\ell}, & s(\bm{x},\bm{x}') \in [\pi_+,1]\\
\end{array}}
\right.
\end{align*}
(iii)When $\hspace{1mm}c(\bm{x},\bm{x}')\in[-\pi_-,\pi_-)$, 
\[
\frac{|2\pi_+(\pi_+-c(\bm{x},\bm{x}'))|+|\pi_--s(\bm{x},\bm{x}')|}{2}=
\left\{
{\setlength{\arraycolsep}{1pt}
\begin{array}{ll}
\frac{2\pi_+(\pi_+-c(\bm{x},\bm{x}'))+\pi_--s(\bm{x},\bm{x}')}{2}, & s(\bm{x},\bm{x}') \in [0,\pi_-)\\
\frac{2\pi_+(\pi_+-c(\bm{x},\bm{x}'))+s(\bm{x},\bm{x}')-\pi_-}{2}, & s(\bm{x},\bm{x}') \in [\pi_-,\pi_+)\\
\frac{2\pi_+(\pi_+-c(\bm{x},\bm{x}'))+s(\bm{x},\bm{x}')-\pi_-)}{2}, & s(\bm{x},\bm{x}') \in [\pi_+,1]\\
\end{array}}
\right.
\]
\[
\frac{|2\pi_-(\pi_--c(\bm{x},\bm{x}'))|+|\pi_+-s(\bm{x},\bm{x}')|}{2}=
\left\{
{\setlength{\arraycolsep}{1pt}
\begin{array}{ll}
\frac{2\pi_-(\pi_--c(\bm{x},\bm{x}'))+\pi_+-s(\bm{x},\bm{x}')}{2}, & s(\bm{x},\bm{x}') \in [0,\pi_-)\\
\frac{2\pi_-(\pi_--c(\bm{x},\bm{x}'))+\pi_+-s(\bm{x},\bm{x}')}{2}, & s(\bm{x},\bm{x}') \in [\pi_-,\pi_+)\\
\frac{2\pi_-(\pi_--c(\bm{x},\bm{x}'))+s(\bm{x},\bm{x}') - \pi_+}{2}, & s(\bm{x},\bm{x}') \in [\pi_+,1]\\
\end{array}}
\right.
\]
\[
\frac{|2\pi_+(\pi_++c(\bm{x},\bm{x}'))|+|\pi_--s(\bm{x},\bm{x}')|}{2}=
\left\{
{\setlength{\arraycolsep}{1pt}
\begin{array}{ll}
\frac{2\pi_+(\pi_++c(\bm{x},\bm{x}'))+\pi_--s(\bm{x},\bm{x}')}{2}, & s(\bm{x},\bm{x}') \in [0,\pi_-)\\
\frac{2\pi_+(\pi_++c(\bm{x},\bm{x}'))+s(\bm{x},\bm{x}')-\pi_-}{2}, & s(\bm{x},\bm{x}') \in [\pi_-,\pi_+)\\
\frac{2\pi_+(\pi_++c(\bm{x},\bm{x}'))+s(\bm{x},\bm{x}')-\pi_-}{2}, & s(\bm{x},\bm{x}') \in [\pi_+,1]\\
\end{array}}
\right.
\]
\[
\frac{|2\pi_-(\pi_-+c(\bm{x},\bm{x}'))|+|\pi_+-s(\bm{x},\bm{x}')|}{2}=
\left\{
{\setlength{\arraycolsep}{1pt}
\begin{array}{ll}
\frac{2\pi_-(\pi_-+c(\bm{x},\bm{x}'))+\pi_+-s(\bm{x},\bm{x}')}{2}, & s(\bm{x},\bm{x}') \in [0,\pi_-)\\
\frac{2\pi_-(\pi_-+c(\bm{x},\bm{x}'))+\pi_+-s(\bm{x},\bm{x}')}{2}, & s(\bm{x},\bm{x}') \in [\pi_-,\pi_+)\\
\frac{2\pi_-(\pi_-+c(\bm{x},\bm{x}'))+s(\bm{x},\bm{x}') - \pi_+}{2}, & s(\bm{x},\bm{x}') \in [\pi_+,1]\\
\end{array}}
\right.
\]
Therefore,  
\[
\lVert\nabla\mathcal{L}_{\mathrm{SCD}}(g;\bm{x},\bm{x}')\rVert_2\le
\left\{
\begin{array}{ll}
(2(\pi_+^2+\pi_-^2)+1-2s(\bm{x},\bm{x}'))L_{\ell}, & s(\bm{x},\bm{x}') \in [0,\pi_-)\\
(2(\pi_+^2+\pi_-^2)+\pi_+-\pi_-)L_{\ell}, & s(\bm{x},\bm{x}') \in [\pi_-,\pi_+)\\
(2(\pi_+^2+\pi_-^2)+2s(\bm{x},\bm{x}')-1)L_{\ell}, & s(\bm{x},\bm{x}') \in [\pi_+,1]\\
\end{array}
\right.
\]
(iv)When $\hspace{1mm}c(\bm{x},\bm{x}')\in[\pi_-,\pi_+)$, 
\[
\frac{|2\pi_+(\pi_+-c(\bm{x},\bm{x}'))|+|\pi_--s(\bm{x},\bm{x}')|}{2}=
\left\{
{\setlength{\arraycolsep}{1pt}
\begin{array}{ll}
\frac{2\pi_+(\pi_+-c(\bm{x},\bm{x}'))+\pi_--s(\bm{x},\bm{x}')}{2}, & s(\bm{x},\bm{x}') \in [0,\pi_-)\\
\frac{2\pi_+(\pi_+-c(\bm{x},\bm{x}'))+s(\bm{x},\bm{x}')-\pi_-}{2}, & s(\bm{x},\bm{x}') \in [\pi_-,\pi_+)\\
\frac{2\pi_+(\pi_+-c(\bm{x},\bm{x}'))+s(\bm{x},\bm{x}')-\pi_-}{2}, & s(\bm{x},\bm{x}') \in [\pi_+,1]\\
\end{array}}
\right.
\]
\[
\frac{|2\pi_-(\pi_--c(\bm{x},\bm{x}'))|+|\pi_+-s(\bm{x},\bm{x}')|}{2}=
\left\{
{\setlength{\arraycolsep}{1pt}
\begin{array}{ll}
\frac{-2\pi_-(\pi_--c(\bm{x},\bm{x}'))+\pi_+-s(\bm{x},\bm{x}')}{2}, & s(\bm{x},\bm{x}') \in [0,\pi_-)\\
\frac{-2\pi_-(\pi_--c(\bm{x},\bm{x}'))+\pi_+-s(\bm{x},\bm{x}')}{2}, & s(\bm{x},\bm{x}') \in [\pi_-,\pi_+)\\
\frac{-2\pi_-(\pi_--c(\bm{x},\bm{x}'))+s(\bm{x},\bm{x}') - \pi_+}{2}, & s(\bm{x},\bm{x}') \in [\pi_+,1]\\
\end{array}}
\right.
\]
\[
\frac{|2\pi_+(\pi_++c(\bm{x},\bm{x}'))|+|\pi_--s(\bm{x},\bm{x}')|}{2}=
\left\{
{\setlength{\arraycolsep}{1pt}
\begin{array}{ll}
\frac{2\pi_+(\pi_++c(\bm{x},\bm{x}'))+\pi_--s(\bm{x},\bm{x}')}{2}, & s(\bm{x},\bm{x}') \in [0,\pi_-)\\
\frac{2\pi_+(\pi_++c(\bm{x},\bm{x}'))+s(\bm{x},\bm{x}')-\pi_-}{2}, & s(\bm{x},\bm{x}') \in [\pi_-,\pi_+)\\
\frac{2\pi_+(\pi_++c(\bm{x},\bm{x}'))+s(\bm{x},\bm{x}')-\pi_-}{2}, & s(\bm{x},\bm{x}') \in [\pi_+,1]\\
\end{array}}
\right.
\]
\[
\frac{|2\pi_-(\pi_-+c(\bm{x},\bm{x}'))|+|\pi_+-s(\bm{x},\bm{x}')|}{2}=
\left\{
{\setlength{\arraycolsep}{1pt}
\begin{array}{ll}
\frac{2\pi_-(\pi_-+c(\bm{x},\bm{x}'))+\pi_+-s(\bm{x},\bm{x}')}{2}, & s(\bm{x},\bm{x}') \in [0,\pi_-)\\
\frac{2\pi_-(\pi_-+c(\bm{x},\bm{x}'))+\pi_+-s(\bm{x},\bm{x}')}{2}, & s(\bm{x},\bm{x}') \in [\pi_-,\pi_+)\\
\frac{2\pi_-(\pi_-+c(\bm{x},\bm{x}'))+s(\bm{x},\bm{x}') - \pi_+}{2}, & s(\bm{x},\bm{x}') \in [\pi_+,1]\\
\end{array}}
\right.
\]
Therefore, 
\[
\lVert\nabla\mathcal{L}_{\mathrm{SCD}}(g;\bm{x},\bm{x}')\rVert_2\le
\left\{
\begin{array}{ll}
(2\pi_+^2+1-2(s(\bm{x},\bm{x}')-\pi_-c(\bm{x},\bm{x}')))L_{\ell}, & s(\bm{x},\bm{x}') \in [0,\pi_-)\\
(2(\pi_+^2+\pi_-c(\bm{x},\bm{x}')))+\pi_+-\pi_-)L_{\ell}, & s(\bm{x},\bm{x}') \in [\pi_-,\pi_+)\\
(2\pi_+^2-1+2(s(\bm{x},\bm{x}')+\pi_-c(\bm{x},\bm{x}')))L_{\ell}, & s(\bm{x},\bm{x}') \in [\pi_+,1]\\
\end{array}
\right.
\]
(v)When $\hspace{1mm}c(\bm{x},\bm{x}')\in[\pi_+,1]$,  
\[
\frac{|2\pi_+(\pi_+-c(\bm{x},\bm{x}'))|+|\pi_--s(\bm{x},\bm{x}')|}{2}=
\left\{
{\setlength{\arraycolsep}{1pt}
\begin{array}{ll}
\frac{-2\pi_+(\pi_+-c(\bm{x},\bm{x}'))+\pi_--s(\bm{x},\bm{x}')}{2}, & s(\bm{x},\bm{x}') \in [0,\pi_-)\\
\frac{-2\pi_+(\pi_+-c(\bm{x},\bm{x}'))+s(\bm{x},\bm{x}')-\pi_-}{2}, & s(\bm{x},\bm{x}') \in [\pi_-,\pi_+)\\
\frac{-2\pi_+(\pi_+-c(\bm{x},\bm{x}'))+s(\bm{x},\bm{x}')-\pi_-}{2}, & s(\bm{x},\bm{x}') \in [\pi_+,1]\\
\end{array}}
\right.
\]
\[
\frac{|2\pi_-(\pi_--c(\bm{x},\bm{x}'))|+|\pi_+-s(\bm{x},\bm{x}')|}{2}=
\left\{
{\setlength{\arraycolsep}{1pt}
\begin{array}{ll}
\frac{-2\pi_-(\pi_--c(\bm{x},\bm{x}'))+\pi_+-s(\bm{x},\bm{x}')}{2}, & s(\bm{x},\bm{x}') \in [0,\pi_-)\\
\frac{-2\pi_-(\pi_--c(\bm{x},\bm{x}'))+\pi_+-s(\bm{x},\bm{x}')}{2}, & s(\bm{x},\bm{x}') \in [\pi_-,\pi_+)\\
\frac{-2\pi_-(\pi_--c(\bm{x},\bm{x}'))+s(\bm{x},\bm{x}') - \pi_+}{2}, & s(\bm{x},\bm{x}') \in [\pi_+,1]\\
\end{array}}
\right.
\]
\[
\frac{|2\pi_+(\pi_++c(\bm{x},\bm{x}'))|+|\pi_--s(\bm{x},\bm{x}')|}{2}=
\left\{
{\setlength{\arraycolsep}{1pt}
\begin{array}{ll}
\frac{2\pi_+(\pi_++c(\bm{x},\bm{x}'))+\pi_--s(\bm{x},\bm{x}')}{2}, & s(\bm{x},\bm{x}') \in [0,\pi_-)\\
\frac{2\pi_+(\pi_++c(\bm{x},\bm{x}'))+s(\bm{x},\bm{x}')-\pi_-}{2}, & s(\bm{x},\bm{x}') \in [\pi_-,\pi_+)\\
\frac{2\pi_+(\pi_++c(\bm{x},\bm{x}'))+s(\bm{x},\bm{x}')-\pi_-}{2}, & s(\bm{x},\bm{x}') \in [\pi_+,1]\\
\end{array}}
\right.
\]
\[
\frac{|2\pi_-(\pi_-+c(\bm{x},\bm{x}'))|+|\pi_+-s(\bm{x},\bm{x}')|}{2}=
\left\{
{\setlength{\arraycolsep}{1pt}
\begin{array}{ll}
\frac{2\pi_-(\pi_-+c(\bm{x},\bm{x}'))+\pi_+-s(\bm{x},\bm{x}')}{2}, & s(\bm{x},\bm{x}') \in [0,\pi_-)\\
\frac{2\pi_-(\pi_-+c(\bm{x},\bm{x}'))+\pi_+-s(\bm{x},\bm{x}')}{2}, & s(\bm{x},\bm{x}') \in [\pi_-,\pi_+)\\
\frac{2\pi_-(\pi_-+c(\bm{x},\bm{x}'))+s(\bm{x},\bm{x}') - \pi_+}{2}, & s(\bm{x},\bm{x}') \in [\pi_+,1]\\
\end{array}}
\right.
\]
Therefore, 
\[
\lVert\nabla\mathcal{L}_{\mathrm{SCD}}(g;\bm{x},\bm{x}')\rVert_2\le
\left\{
\begin{array}{ll}
(1-2(s(\bm{x},\bm{x}')-c(\bm{x},\bm{x}')))L_{\ell}, & s(\bm{x},\bm{x}') \in [0,\pi_-)\\
(2c(\bm{x},\bm{x}')+\pi_+-\pi_-)L_{\ell}, & s(\bm{x},\bm{x}') \in [\pi_-,\pi_+)\\
(2(s(\bm{x},\bm{x}')+c(\bm{x},\bm{x}'))-1)L_{\ell}, & s(\bm{x},\bm{x}') \in [\pi_+,1]\\
\end{array}
\right.
\]
As a result, 
\begin{align}
\lVert\nabla\mathcal{L}_{\mathrm{SCD}}(g;\bm{x},\bm{x}')\rVert_2\le{3L_{\ell}},\label{bound of L_SCD}
\end{align}
and according to Talagrand's Lemma~\cite{Talagrand},
\begin{align*}
\bar{\mathfrak{R}}_n(\mathcal{L_{\mathrm{SCD}}\circ{\mathcal{G}}})&\le3L_{\ell}\mathbb{E}_{\mathcal{D}_n}\mathbb{E}_{\bm{\sigma}}\left[\underset{g\in\mathcal{G}}{\sup}\frac{1}{n}\sum_{i=1}^{n}\sigma_ig(\bm{x}_i)\right]\\
=&3L_{\ell}\mathfrak{R_n(\mathcal{G})}.
\end{align*}
\end{proof}
\begin{lemma}\label{lemma:sup unbiased}
The following inequality holds with probability at least $1-\delta$:
\[\underset{g\in\mathcal{G}}{\sup}|R(g)-\widehat{R}_{\mathrm{SCD}}(g)|\le6L_{\ell}\mathfrak{R}_n(\mathcal{G})+\left(\pi_+^2+\pi_-^2+\frac{5}{2}\right)C_{\ell}\sqrt{\frac{\log2/\delta}{2n}}\]
\end{lemma}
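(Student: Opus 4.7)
The plan is the standard two-step uniform-convergence argument. Define the one-sided suprema $\Phi_{+}(\mathcal{D}_n)=\sup_{g\in\mathcal{G}}\{R(g)-\widehat{R}_{\mathrm{SCD}}(g)\}$ and $\Phi_{-}(\mathcal{D}_n)=\sup_{g\in\mathcal{G}}\{\widehat{R}_{\mathrm{SCD}}(g)-R(g)\}$, so that $\sup_{g\in\mathcal{G}}|R(g)-\widehat{R}_{\mathrm{SCD}}(g)|=\max(\Phi_{+},\Phi_{-})$. The strategy is to prove a high-probability bound for each of $\Phi_{\pm}$ at confidence $\delta/2$ and take a union bound, which is exactly what produces the $\log(2/\delta)$ factor in the statement.

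For the concentration step, since $\widehat{R}_{\mathrm{SCD}}(g)=\frac{1}{n}\sum_{i=1}^{n}\mathcal{L}_{\mathrm{SCD}}(g;\bm{x}_i,\bm{x}_i')$, replacing a single sample pair $(\bm{x}_i,\bm{x}_i',s_i,c_i)$ perturbs $\widehat{R}_{\mathrm{SCD}}(g)$, and hence each $\Phi_{\pm}$, by at most $\frac{1}{n}\bigl(\sup\mathcal{L}_{\mathrm{SCD}}-\inf\mathcal{L}_{\mathrm{SCD}}\bigr)$. The key computation is to bound this range by $(\pi_{+}^{2}+\pi_{-}^{2}+5/2)\,C_{\ell}$. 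Writing $\mathcal{L}_{\mathrm{SCD}}$ as a linear combination $\sum_{j=1}^{4}a_{j}(s,c,\pi_{\pm})\,\ell_{j}$ of the four losses $\ell_{j}\in[0,C_{\ell}]$, the bound follows by a piecewise analysis of $(c,s)$ over the regions cut out by the thresholds $\pm\pi_{\pm}$, in the spirit of the case analysis used for the gradient bound inside the proof of Lemma~\ref{lemma:upper rademacher}. One-sided McDiarmid then yields $\Phi_{\pm}\le\mathbb{E}[\Phi_{\pm}]+(\pi_{+}^{2}+\pi_{-}^{2}+5/2)\,C_{\ell}\sqrt{\log(2/\delta)/(2n)}$ with probability at least $1-\delta/2$.

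The expectation $\mathbb{E}[\Phi_{\pm}]$ is controlled by the standard ghost-sample symmetrization, which gives $\mathbb{E}[\Phi_{\pm}]\le 2\,\bar{\mathfrak{R}}_{n}(\mathcal{L}_{\mathrm{SCD}}\circ\mathcal{G})$, and Lemma~\ref{lemma:upper rademacher} then upgrades this to $\mathbb{E}[\Phi_{\pm}]\le 6L_{\ell}\,\mathfrak{R}_{n}(\mathcal{G})$. Combining the expectation bound with the McDiarmid deviation and union-bounding over $\Phi_{+}$ and $\Phi_{-}$ delivers the claim. The main obstacle is the tight McDiarmid constant: a plain triangle-inequality bound on $\sum_{j}|a_{j}|$ is too loose, and trimming it to $(\pi_{+}^{2}+\pi_{-}^{2}+5/2)\,C_{\ell}$ requires careful tracking of the signs of each coefficient region by region in the $(c,s)$-plane. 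Once that bound is in hand, the remaining steps---ghost-sample symmetrization, contraction via Lemma~\ref{lemma:upper rademacher}, and the final union bound---are routine.
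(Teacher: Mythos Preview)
Your proposal is correct and follows essentially the same route as the paper: McDiarmid for concentration of the one-sided suprema, ghost-sample symmetrization bounded via Lemma~\ref{lemma:upper rademacher}, and a union bound to pass to the two-sided statement. The only difference is that the paper obtains the bounded-difference constant more directly---by exhibiting explicit upper and lower bounds $(2\pi_+^2+2\pi_-^2+1)C_\ell$ and $(\pi_+^2+\pi_-^2-\tfrac{3}{2})C_\ell$ for $\mathcal{L}_{\mathrm{SCD}}$ and subtracting---rather than the region-by-region sign analysis you anticipate.
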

\begin{proof}
The following inequalities hold for $\mathcal{L}_{\mathrm{SCD}}$.
\begin{align*}
\mathcal{L}_{\mathrm{SCD}}(g;\bm{x},\bm{x}')\le&\frac{1}{2n}((2\pi_+^2-2\pi_+c_i+\pi_-)C_{\ell}+(2\pi_-^2-2\pi_-c_i+\pi_+)C_{\ell}\\
&+(2\pi_+^2+2\pi_+c_i+\pi_-)C_{\ell}+(2\pi_-^2+2\pi_-c_i+\pi_+)C_{\ell})\\
=&\frac{1}{2n}(4(\pi_+^2+\pi_-^2)+2)C_{\ell}\\
=&\frac{1}{n}(2\pi_+^2+2\pi_-^2+1)C_{\ell},
\end{align*}
and
\begin{align*}
\mathcal{L}_{\mathrm{SCD}}(g;\bm{x},\bm{x}')>&\frac{1}{2n}(2\pi_+^2-2\pi_+c_i+\pi_--1+2\pi_-^2-2\pi_-c_i+\pi_+-1)C_{\ell}\\
=&\frac{1}{2n}(2(\pi_+^2+\pi_-^2)-2c_i-1)C_{\ell}\\
=&\frac{1}{n}\left(\pi_+^2+\pi_-^2-\frac{3}{2}\right).
\end{align*}
Thus, the following inequality holds: 
\[\frac{1}{n}\left(\pi_+^2+\pi_-^2-\frac{3}{2}\right)<\mathcal{L}_{\mathrm{SCD}}(g;\bm{x},\bm{x}')\le\frac{1}{n}(2\pi_+^2+2\pi_-^2+1)C_{\ell}\]

Here, we introduce $\Phi=\sup_{g\in\mathcal{G}}(R(g)-\widehat{R}_{\mathrm{SCD}}(g)), \dot{\Phi}=\sup_{g\in\mathcal{G}}(R(g)-\widehat{\dot{R}}_{\mathrm{SCD}}(g))$, where $\widehat{R}_{\mathrm{SCD}}(g)$ and $\widehat{\dot{R}}_{\mathrm{SCD}}(g)$ are the empirical risk over two SconfConfDiff datasets with exactly one different pair $\{(\bm{x}_i,\bm{x}_i'),s_i, c_i\}$ and $\{(\dot{\bm{x}}_i,\dot{\bm{x}}_i'),\dot{s}_i, \dot{c}_i\}$.
Then, based on uniform law of large numbers, 
\begin{align*}
\dot{\Phi}-\Phi&\le\underset{g\in\mathcal{G}}{\sup}(\widehat{R}_{\mathrm{SCD}}(g)-\widehat{\dot{R}}_{\mathrm{SCD}}(g))\\
&\le\underset{g\in\mathcal{G}}{\sup}\left(\frac{\mathcal{L}_{\mathrm{SCD}}(g;\bm{x}_i,\bm{x}_i')-\mathcal{L}_{\mathrm{SCD}}(g;\dot{\bm{x}}_i,\dot{\bm{x}}_i')}{n}\right)\\
&\le\frac{(\pi_+^2+\pi_-^2+\frac{5}{2})C_{\ell}}{n}.
\end{align*}
By applying McDiarmid's inequality~\cite{McDiarmid}, the following inequality holds with probability at least $1-\delta/2$:
\[\underset{g\in\mathcal{G}}{\sup}(R(g)-\widehat{R}_{\mathrm{SCD}}(g))\le\mathbb{E}_{\mathcal{D}_n}[\underset{g\in\mathcal{G}}{\sup}(R(g)-\widehat{R}_{\mathrm{SCD}}(g))]+\left(\pi_+^2+\pi_-^2+\frac{5}{2}\right)C_{\ell}\sqrt{\frac{\log2/\delta}{2n}}.\]
Furthermore, we can bound $\mathbb{E}_{\mathcal{D}_n}[\underset{g\in\mathcal{G}}{\sup}(R(g)-\widehat{R}_{\mathrm{SCD}}(g))]$ with Rademacher complexity.
\[\mathbb{E}_{\mathcal{D}_n}[\underset{g\in\mathcal{G}}{\sup}(R(g)-\widehat{R}_{\mathrm{SCD}}(g))]\le2\bar{\mathfrak{R}}_n(\mathcal{L}_{\mathrm{SCD}}\circ\mathcal{G})\le6L_{\ell}\mathfrak{R}_n(\mathcal{G}).\]
As a result, The following inequality holds with probability at least $1-\delta$:
\[\underset{g\in\mathcal{G}}{\sup}|R(g)-\widehat{R}_{\mathrm{SCD}}(g)|\le6L_{\ell}\mathfrak{R}_n(\mathcal{G})+\left(\pi_+^2+\pi_-^2+\frac{5}{2}\right)C_{\ell}\sqrt{\frac{\log2/\delta}{2n}}.\]
\end{proof}
Finally, the proof of Theorem~\ref{thm:estimation error} is given as follows.
\begin{proof}
\begin{align*}
R(\hat{g}_{\mathrm{SCD}})-R(g^*)=&(R(\hat{g}_{\mathrm{SCD}})-\widehat{R}(\hat{g}_{\mathrm{SCD}}))+(\widehat{R}(\hat{g}_{\mathrm{SCD}})-\widehat{R}_{\mathrm{SCD}}(g^*))+(\widehat{R}_{\mathrm{SCD}}(g^*)-R(g^*))\\
&\le(R(\hat{g}_{\mathrm{SCD}})-\widehat{R}(\hat{g}_{\mathrm{SCD}}))+(\widehat{R}_{\mathrm{SCD}}(g^*)-R(g^*))\\
&\le|R(\hat{g}_{\mathrm{SCD}})-\widehat{R}(\hat{g}_{\mathrm{SCD}})|+|\widehat{R}_{\mathrm{SCD}}(g^*)-R(g^*)|\\
&\le2\underset{g\in\mathcal{G}}{\sup}|R(g)-\widehat{R}_{\mathrm{SCD}}(g)|\\
&\le12L_{\ell}\mathfrak{R}_n(\mathcal{G})+(2\pi_+^2+2\pi_-^2+5)C_{\ell}\sqrt{\frac{\log2/\delta}{2n}}.
\end{align*}
The first inequality is deduced from the definition $\hat{g}_{\mathrm{SCD}}$.
\end{proof}

\subsection{Proof of Theorem~\ref{thm:robust estimation error}}
\begin{proof}
\begin{align}
|&\bar{R}_{\mathrm{SCD}}(g)-\widehat{R}_{\mathrm{SCD}}(g)|\notag\\
=&\frac{1}{2n}\biggl|\sum_{i=1}^{n}((2(\bar{\pi}_+^2-\pi_+^2)-2(\bar{\pi}_+\bar{c}_i-\pi_+c_i)+(\bar{\pi}_--\pi_-)-(\bar{s}_i-s_i))\ell(g(\bm{x}_i),+1)\notag\\
&\hspace{1cm}+(2(\bar{\pi}_-^2-\pi_-^2)-2(\bar{\pi}_-\bar{c}_i-\pi_-c_i)+(\bar{\pi}_+-\pi_+)-(\bar{s}_i-s_i))\ell(g(\bm{x}_i'),-1)\notag\\
&\hspace{1cm}+(2(\bar{\pi}_+^2-\pi_+^2)+2(\bar{\pi}_+\bar{c}_i-\pi_+c_i)+(\bar{\pi}_--\pi_-)-(\bar{s}_i-s_i))\ell(g(\bm{x}_i'),+1)\notag\\
&\hspace{1cm}+(2(\bar{\pi}_-^2-\pi_-^2)+2(\bar{\pi}_-\bar{c}_i-\pi_-c_i)+(\bar{\pi}_+-\pi_+)-(\bar{s}_i-s_i))\ell(g(\bm{x}_i),-1))\biggr|\notag\\
\le&\frac{C_{\ell}}{2n}\sum_{i=1}^{n}(4|\bar{\pi}_+^2-\pi_+^2|+4|\bar{\pi}_-^2-\pi_-^2|+4|\bar{\pi}_+\bar{c}_i-\pi_+c_i|\notag\\
&\hspace{1cm}+4|\bar{\pi}_-\bar{c}_i-\pi_-c_i|+4|\bar{\pi}_+-\pi_+|+4|\bar{s}_i-s_i|)\notag\\
=&\frac{2C_{\ell}}{n}\sum_{i=1}^{n}(|\bar{\pi}_+\bar{c}_i-\pi_+c_i|+|\bar{\pi}_-\bar{c}_i-\pi_-c_i|+|\bar{s}_i-s_i|)\notag\\
&\hspace{1cm}+2C_{\ell}(|\bar{\pi}_+^2-\pi_+^2|+|\bar{\pi}_-^2-\pi_-^2|+|\bar{\pi}_+-\pi_+|).\notag
\end{align}
Then, the following inequality can be derived.
\begin{align*}
R&(\bar{g}_{\mathrm{SCD}})-R(g^*)\\
=&(R(\bar{g}_{\mathrm{SCD}})-\widehat{R}_{\mathrm{SCD}}(\bar{g}_{\mathrm{SCD}}))+(\widehat{R}_{\mathrm{SCD}}(\bar{g}_{\mathrm{SCD}})-\bar{R}_{\mathrm{SCD}}(\bar{g}_{\mathrm{SCD}}))+(\bar{R}_{\mathrm{SCD}}(\bar{g}_{\mathrm{SCD}})-\bar{R}_{\mathrm{SCD}}(\hat{g}_{\mathrm{SCD}}))\\
&+(\bar{R}_{\mathrm{SCD}}(\hat{g}_{\mathrm{SCD}})-\widehat{R}_{\mathrm{SCD}}(\hat{g}_{\mathrm{SCD}}))+(\widehat{R}_{\mathrm{SCD}}(\hat{g}_{\mathrm{SCD}})-R_{\mathrm{SCD}}(\hat{g}_{\mathrm{SCD}}))+(R_{\mathrm{SCD}}(\hat{g}_{\mathrm{SCD}})-R(g^*))\\
\le&2\underset{g\in\mathcal{G}}{\sup}|R(g)-\widehat{R}_{\mathrm{SCD}}(g)|+2\underset{g\in\mathcal{G}}{\sup}|\bar{R}_{\mathrm{SCD}}(g)-\widehat{R}_{\mathrm{SCD}}(g)|+(R_{\mathrm{SCD}}(\hat{g}_{\mathrm{SCD}})-R(g^*))\\
\le&4\underset{g\in\mathcal{G}}{\sup}|R(g)-\widehat{R}_{\mathrm{SCD}}(g)|+2\underset{g\in\mathcal{G}}{\sup}|\bar{R}_{\mathrm{SCD}}(g)-\widehat{R}_{\mathrm{SCD}}(g)|\\
\le&24L_{\ell}\mathfrak{R}_n(\mathcal{G})+(4\pi_+^2+4\pi_-^2+10)C_{\ell}\sqrt{\frac{\log2/\delta}{2n}}\\
&+\frac{4C_{\ell}}{n}\sum_{i=1}^{n}(|\bar{\pi}_+\bar{c}_i-\pi_+c_i|+|\bar{\pi}_-\bar{c}_i-\pi_-c_i|+|\bar{s}_i-s_i|)\\
&+4C_{\ell}(|\bar{\pi}_+^2-\pi_+^2|+|\bar{\pi}_-^2-\pi_-^2|+|\bar{\pi}_+-\pi_+|).
\end{align*}
The second inequality is derived based on the proof of Theorem~\ref{thm:estimation error}.
\end{proof}

\subsection{Proof of Theorem~\ref{thm:consistency}}
Following the analysis of~\cite{confdiff}, we define $\mathfrak{D}_n^+(g)=\{\mathcal{D}_n|\widehat{A}(g)\ge{0}\hspace{1mm}\cap\hspace{1mm}\widehat{B}(g)\ge{0}\hspace{1mm}\cap\hspace{1mm}\widehat{C}(g)\ge{0}\hspace{1mm}\cap\hspace{1mm}\widehat{D}(g)\ge{0}\}$, $\mathfrak{D}_n^-(g)=\{\mathcal{D}_n|\widehat{A}(g)\le{0}\hspace{1mm}\cup\hspace{1mm}\widehat{B}(g)\le{0}\hspace{1mm}\cup\hspace{1mm}\widehat{C}(g)\le{0}\hspace{1mm}\cup\hspace{1mm}\widehat{D}(g)\le{0}\}$. 
To prove Theorem~\ref{thm:consistency}, we firstly present the following lemma.
\begin{lemma}\label{lemma:upper exp}
For the probability measure of $\mathfrak{D}_n^-(g)$, the following inequality holds:
\begin{align}
\mathbb{P}(\mathfrak{D}_n^-(g))\le&\exp\left(-\frac{4a^2n}{(4\pi_++1)^2C_{\ell}^2}\right)+\exp\left(-\frac{4b^2n}{(4\pi_-+1)^2C_{\ell}^2}\right)\notag\\
&+\exp\left(-\frac{4c^2n}{(4\pi_++1)^2C_{\ell}^2}\right)+\exp\left(-\frac{4d^2n}{(4\pi_-+1)^2C_{\ell}^2}\right).
\end{align}
\end{lemma}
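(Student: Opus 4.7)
The plan is a straightforward union-bound-plus-concentration argument. First, since $\mathfrak{D}_n^-(g)$ is by definition a union of four events, the union bound gives
\begin{align*}
\mathbb{P}(\mathfrak{D}_n^-(g)) \le \mathbb{P}(\widehat{A}(g) \le 0) + \mathbb{P}(\widehat{B}(g) \le 0) + \mathbb{P}(\widehat{C}(g) \le 0) + \mathbb{P}(\widehat{D}(g) \le 0),
\end{align*}
reducing the lemma to showing that each of these four tail probabilities is bounded by the corresponding exponential in the target inequality.

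For a representative term, consider $\widehat{A}(g)$, which is a normalized sum of $n$ i.i.d.\ bounded summands of the form $(2\pi_+(\pi_+-c_i)+\pi_--s_i)\ell(g(\bm{x}_i),+1)$, since the pairs $(\bm{x}_i,\bm{x}_i',s_i,c_i)$ are drawn independently. Using $c_i \in [-1,1]$, $s_i \in [0,1]$ and $\pi_+ \le 1$, I would bound the coefficient by the triangle inequality,
\begin{align*}
|2\pi_+(\pi_+-c_i)+\pi_--s_i| \le 2\pi_+(\pi_++1)+1 \le 4\pi_+ +1,
\end{align*}
and combine this with $\ell \in [0, C_\ell]$ to obtain a range of order $(4\pi_+ + 1)C_\ell$ per summand. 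Since $\mathbb{E}[\widehat{A}(g)] \ge a$ by assumption, the event $\{\widehat{A}(g) \le 0\}$ is contained in the deviation event $\{\widehat{A}(g) - \mathbb{E}[\widehat{A}(g)] \le -a\}$, so a direct application of Hoeffding's inequality yields
\begin{align*}
\mathbb{P}(\widehat{A}(g) \le 0) \le \exp\!\left(-\frac{4a^2 n}{(4\pi_+ + 1)^2 C_\ell^2}\right).
\end{align*}

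The remaining three terms follow the same template: for $\widehat{C}(g)$ the sign of $c_i$ flips but the coefficient bound $4\pi_+ + 1$ is unchanged; for $\widehat{B}(g)$ and $\widehat{D}(g)$ the positive and negative class priors swap roles and the loss becomes $\ell(\cdot, -1)$, producing the bound $4\pi_- + 1$ together with the respective expectation lower bounds $b$ and $d$. Summing these four exponential bounds through the union bound delivers the stated inequality. The main piece of bookkeeping is verifying that the bound $4\pi_\pm + 1$ arises uniformly across all four statistics (the sign of $c_i$ being harmless because only $|c_i|$ enters the triangle-inequality estimate) and that the $1/(2n)$ normalization combines with Hoeffding's constants to give precisely $4n$ in each numerator; beyond this careful accounting, no new ideas are needed.
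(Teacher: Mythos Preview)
Your approach is correct and essentially the same as the paper's: union bound over the four events $\{\widehat{A}(g)\le 0\},\ldots,\{\widehat{D}(g)\le 0\}$, then a one-sided concentration inequality for each, using the assumed lower bounds $a,b,c,d$ on the expectations; the paper phrases the concentration step via McDiarmid's bounded-differences inequality rather than Hoeffding, but for an i.i.d.\ average these coincide. One bookkeeping caveat on the point you flag: the triangle-inequality bound $|2\pi_+(\pi_+-c_i)+\pi_--s_i|\le 4\pi_++1$ only gives each summand a range of $2(4\pi_++1)C_\ell$, which yields the weaker exponent $2a^2n$; to recover the stated constant you need the sharper (and equally elementary) observation that the coefficient itself lies in an interval of width $4\pi_++1$ (width $4\pi_+$ from $c_i\in[-1,1]$ plus width $1$ from $s_i\in[0,1]$), so that each summand has range $(4\pi_++1)C_\ell$ --- this is precisely the bounded-difference constant $(4\pi_++1)C_\ell/(2n)$ the paper uses.
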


\begin{proof}
We observe that
\begin{align*}
p(\mathcal{D}_n)=&p(\bm{x}_1,\bm{x}_1')\cdots{p(\bm{x}_n,\bm{x}_n')}\\
=&p(\bm{x}_1)\cdots{p(\bm{x}_n)p(\bm{x}_1')}\cdots{p(\bm{x}_n')}.
\end{align*}
Therefore, the probability measure of $\mathfrak{D}_n^-(g)$ is 
\begin{align*}
\mathbb{P}(\mathfrak{D}_n^-(g))=&\int_{\mathcal{D}_n\in\mathfrak{D}_n^-(g)}p(\mathcal{D}_n)\hspace{1mm}\mathrm{d}\mathcal{D}_n\\
=&\int_{\mathcal{D}_n\in\mathfrak{D}_n^-(g)}p(\mathcal{D}_n)\hspace{1mm}\mathrm{d}\bm{x}_1\cdots{\mathrm{d}\bm{x}_n}\mathrm{d}\bm{x}_1'\cdots{\mathrm{d}\bm{x}_n'}.
\end{align*}
When one SconfConfDiff data pair in $\mathcal{D}_n$ is replaced, the changes in $\widehat{A}(g)$ and $\widehat{C}(g)$ are bounded above by $(4\pi_++1)C_{\ell}/2n$, and the changes in $\widehat{B}(g)$ and $\widehat{D}(g)$ are bounded above by $(4\pi_-+1)C_{\ell}/2n$. From McDiarmid's inequality~\cite{McDiarmid}, the following inequalities hold.
\[\mathbb{P}(\mathbb{E}[\widehat{A}(g)]-\widehat{A}(g)\ge{a})\le\exp\left(-\frac{4a^2n}{(4\pi_++1)^2C_{\ell}^2}\right),\]
\[\mathbb{P}(\mathbb{E}[\widehat{B}(g)]-\widehat{B}(g)\ge{b})\le\exp\left(-\frac{4b^2n}{(4\pi_-+1)^2C_{\ell}^2}\right),\]
\[\mathbb{P}(\mathbb{E}[\widehat{C}(g)]-\widehat{C}(g)\ge{c})\le\exp\left(-\frac{4c^2n}{(4\pi_++1)^2C_{\ell}^2}\right),\]
\[\mathbb{P}(\mathbb{E}[\widehat{D}(g)]-\widehat{D}(g)\ge{d})\le\exp\left(-\frac{4d^2n}{(4\pi_-+1)^2C_{\ell}^2}\right).\]
Furthermore, 
\begin{align*}
\mathbb{P}(\mathfrak{D}_n^-(g))\le&\mathbb{P}(\widehat{A}(g)\le{0})+\mathbb{P}(\widehat{B}(g)\le{0})+\mathbb{P}(\widehat{C}(g)\le{0})+\mathbb{P}(\widehat{D}(g)\le{0})\\
\le&\mathbb{P}(\widehat{A}(g)\le\mathbb{E}[\widehat{A}(g)]-a)+\mathbb{P}(\widehat{B}(g)\le\mathbb{E}[\widehat{B}(g)]-b)\\
&+\mathbb{P}(\widehat{C}(g)\le\mathbb{E}[\widehat{C}(g)]-c)+\mathbb{P}(\widehat{D}(g)\le\mathbb{E}[\widehat{D}(g)]-d)\\
=&\mathbb{P}(\mathbb{E}[\widehat{A}(g)]-\widehat{A}(g)\ge{a})+\mathbb{P}(\mathbb{E}[\widehat{B}(g)]-\widehat{B}(g)\ge{b})\\
&+\mathbb{P}(\mathbb{E}[\widehat{C}(g)]-\widehat{C}(g)\ge{c})+\mathbb{P}(\mathbb{E}[\widehat{D}(g)]-\widehat{D}(g)\ge{d})\\
\le&\exp\left(-\frac{4a^2n}{(4\pi_++1)^2C_{\ell}^2}\right)+\exp\left(-\frac{4b^2n}{(4\pi_-+1)^2C_{\ell}^2}\right)\\
&+\exp\left(-\frac{4c^2n}{(4\pi_++1)^2C_{\ell}^2}\right)+\exp\left(-\frac{4d^2n}{(4\pi_-+1)^2C_{\ell}^2}\right),
\end{align*}
which concludes the proof.
\end{proof}
Then, the proof of Theorem~\ref{thm:consistency} is given.
\begin{proof}
\begin{align*}
\mathbb{E}&[\widetilde{R}_{\mathrm{SCD}}(g)]-R(g)\\
=&\mathbb{E}[\widetilde{R}_{\mathrm{SCD}}(g)-\widehat{R}_{\mathrm{SCD}}(g)]\\
=&\int_{\mathcal{D}_n\in\mathfrak{D}_n^+(g)}(\widetilde{R}_{\mathrm{SCD}}(g)-\widehat{R}_{\mathrm{SCD}}(g))p(\mathcal{D}_n)\hspace{1mm}\mathrm{d}\mathcal{D}_n+\int_{\mathcal{D}_n\in\mathfrak{D}_n^-(g)}(\widetilde{R}_{\mathrm{SCD}}(g)-\widehat{R}_{\mathrm{SCD}}(g))p(\mathcal{D}_n)\hspace{1mm}\mathrm{d}\mathcal{D}_n\\
=&\int_{\mathcal{D}_n\in\mathfrak{D}_n^-(g)}(\widetilde{R}_{\mathrm{SCD}}(g)-\widehat{R}_{\mathrm{SCD}}(g))p(\mathcal{D}_n)\hspace{1mm}\mathrm{d}\mathcal{D}_n\ge{0}.
\end{align*}
In addition, 
\begin{align*}
\mathbb{E}&[\widetilde{R}_{\mathrm{SCD}}(g)]-R(g)\\
=&\int_{\mathcal{D}_n\in\mathfrak{D}_n^-(g)}(\widetilde{R}_{\mathrm{SCD}}(g)-\widehat{R}_{\mathrm{SCD}}(g))p(\mathcal{D}_n)\hspace{1mm}\mathrm{d}\mathcal{D}_n\\
\le&\underset{\mathcal{D}_n\in\mathfrak{D}_n^-(g)}{\sup}(\widetilde{R}_{\mathrm{SCD}}(g)-\widehat{R}_{\mathrm{SCD}}(g))\int_{\mathcal{D}_n\in\mathfrak{D}_n^-(g)}p(\mathcal{D}_n)\hspace{1mm}\mathrm{d}\mathcal{D}_n\\
=&\underset{\mathcal{D}_n\in\mathfrak{D}_n^-(g)}{\sup}(\widetilde{R}_{\mathrm{SCD}}(g)-\widehat{R}_{\mathrm{SCD}}(g))\mathbb{P}(\mathfrak{D}_n^-(g))\\
=&\underset{\mathcal{D}_n\in\mathfrak{D}_n^-(g)}{\sup}(f(\widehat{A}(g))+f(\widehat{B}(g))+f(\widehat{C}(g))+f(\widehat{D}(g))\\
&\hspace{2cm}-\widehat{A}(g)-\widehat{B}(g)-\widehat{C}(g)-\widehat{D}(g))\mathbb{P}(\mathfrak{D}_n^-(g))\\
\le&\underset{\mathcal{D}_n\in\mathfrak{D}_n^-(g)}{\sup}(L_f|\widehat{A}(g)|+L_f|\widehat{B}(g)|+L_f|\widehat{C}(g)|+L_f|\widehat{D}(g)|\\
&\hspace{2cm}+|\widehat{A}(g)|+|\widehat{B}(g)|+|\widehat{C}(g)|+|\widehat{D}(g)|)\mathbb{P}(\mathfrak{D}_n^-(g))\\
=&\underset{\mathcal{D}_n\in\mathfrak{D}_n^-(g)}{\sup}\frac{L_f+1}{2n}\biggl\{\left|\sum_{i=1}^{n}(2\pi_+(\pi_+-c_i)+\pi_--s_i)\ell(g(\bm{x}_i),+1)\right|\\
&\hspace{2cm}+\left|\sum_{i=1}^{n}(2\pi_-(\pi_--c_i)+\pi_+-s_i)\ell(g(\bm{x}_i'),-1)\right|\\
&\hspace{2cm}+\left|\sum_{i=1}^{n}(2\pi_+(\pi_++c_i)+\pi_--s_i)\ell(g(\bm{x}_i'),+1)\right|\\
&\hspace{2cm}+\left|\sum_{i=1}^{n}(2\pi_-(\pi_-+c_i)+\pi_+-s_i)\ell(g(\bm{x}_i),-1)\right|\biggr\}\mathbb{P}(\mathfrak{D}_n^-(g))\\
\le&\underset{\mathcal{D}_n\in\mathfrak{D}_n^-(g)}{\sup}\frac{L_f+1}{2n}\biggl\{\sum_{i=1}^{n}|(2\pi_+(\pi_+-c_i)+\pi_--s_i)\ell(g(\bm{x}_i),+1)|\\
&\hspace{2cm}+\sum_{i=1}^{n}|(2\pi_-(\pi_--c_i)+\pi_+-s_i)\ell(g(\bm{x}_i'),-1)|\\
&\hspace{2cm}+\sum_{i=1}^{n}|(2\pi_+(\pi_++c_i)+\pi_--s_i)\ell(g(\bm{x}_i'),+1)|\\
&\hspace{2cm}+\sum_{i=1}^{n}|(2\pi_-(\pi_-+c_i)+\pi_+-s_i)\ell(g(\bm{x}_i),-1)|\biggr\}\mathbb{P}(\mathfrak{D}_n^-(g))\\
=&\underset{\mathcal{D}_n\in\mathfrak{D}_n^-(g)}{\sup}\frac{L_f+1}{2n}\biggl\{\sum_{i=1}^{n}|((2\pi_+(\pi_+-c_i))+\pi_--s_i)\ell(g(\bm{x}_i),+1)|\\
&\hspace{2cm}+|((2\pi_-(\pi_--c_i))+\pi_+-s_i)\ell(g(\bm{x}_i'),-1)|\\
&\hspace{2cm}+|(2\pi_+(\pi_++c_i)+\pi_--s_i)\ell(g(\bm{x}_i'),+1)|\\
&\hspace{2cm}+|(2\pi_-(\pi_-+c_i)+\pi_+-s_i)\ell(g(\bm{x}_i),-1)|\biggr\}\mathbb{P}(\mathfrak{D}_n^-(g))\\
\le&\underset{\mathcal{D}_n\in\mathfrak{D}_n^-(g)}{\sup}\frac{(L_f+1)C_{\ell}}{2n}\biggl\{\sum_{i=1}^{n}|2\pi_+(\pi_+-c_i)+\pi_--s_i|+|2\pi_-(\pi_--c_i)+\pi_+-s_i|\\
&\hspace{1cm}+|2\pi_+(\pi_++c_i)+\pi_--s_i|+|2\pi_-(\pi_-+c_i)+\pi_+-s_i|\biggr\}\mathbb{P}(\mathfrak{D}_n^-(g)).\\
\end{align*}
Then, based on the derivation of Eq.~\ref{bound of L_SCD}, the following inequality holds:
\begin{align*}
|&2\pi_+(\pi_+-c_i)+\pi_--s_i|+|2\pi_-(\pi_--c_i)+\pi_+-s_i|\\
&+|2\pi_+(\pi_++c_i)+\pi_--s_i|+|2\pi_-(\pi_-+c_i)+\pi_+-s_i|\le{6}.
\end{align*}
Thus, we have
\[\mathbb{E}[\widetilde{R}_{\mathrm{SCD}}(g)]-R(g)\le3(L_f+1)C_{\ell}\Delta.\]
We now derive the upper bound of $|\widetilde{R}_{\mathrm{SCD}}(g)-\mathbb{E}[\widetilde{R}_{\mathrm{SCD}}(g)]|$.
Let 
\[\widetilde{R}_{\mathrm{SCD}}(g)=f(\widehat{A}(g))+f(\widehat{B}(g))+f(\widehat{C}(g))+f(\widehat{D}(g)),\]
\[\widetilde{\dot{R}}_{\mathrm{SCD}}(g)=f(\widehat{\dot{A}}(g))+f(\widehat{\dot{B}}(g))+f(\widehat{\dot{C}}(g))+f(\widehat{\dot{D}}(g))\]
denote the empirical risks when one SconfConfDiff data pair in $\mathcal{D}_n$ is replaced. 
Then, the following inequality holds.
\begin{align*}
&|\widetilde{R}_{\mathrm{SCD}}(g)-\widetilde{\dot{R}}_{\mathrm{SCD}}(g)|\\
&=|f(\widehat{A}(g))-f(\widehat{\dot{A}}(g))+f(\widehat{B}(g))-f(\widehat{\dot{B}}(g))\\
&\hspace{1cm}+f(\widehat{C}(g))-f(\widehat{\dot{C}}(g))+f(\widehat{D}(g))-f(\widehat{\dot{D}}(g))|\\
&\le|f(\widehat{A}(g))-f(\widehat{\dot{A}}(g))|+|f(\widehat{B}(g))-f(\widehat{\dot{B}}(g))|\\
&\hspace{1cm}+|f(\widehat{C}(g))-f(\widehat{\dot{C}}(g))|+|f(\widehat{D}(g))-f(\widehat{\dot{D}}(g))|\\
&\le{L_f}|\widehat{A}(g)-\widehat{\dot{A}}(g)|+L_f|\widehat{B}(g)-\widehat{\dot{B}}(g)|+L_f|\widehat{C}(g)-\widehat{\dot{C}}(g)|+L_f|\widehat{D}(g)-\widehat{\dot{D}}(g)|\\
&\le{L_f}\biggl\{\frac{(4\pi_++1)C_{\ell}}{2n}+\frac{(4\pi_-+1)C_{\ell}}{2n}+\frac{(4\pi_++1)C_{\ell}}{2n}+\frac{(4\pi_-+1)C_{\ell}}{2n}\biggr\}\\
&\le\frac{6L_fC_{\ell}}{n},
\end{align*}
where the second inequality is derived based on the Lipschitz continuity of the risk correction function. Therefore, the change of $\widetilde{R}_{\mathrm{SCD}}(g)$ is bounded above by $6L_fC_{\ell}/n$. From McDiarmid's inequality~\cite{McDiarmid}, the following inequalities hold at least $1-\delta/2$.
\[\widetilde{R}_{\mathrm{SCD}}(g)-\mathbb{E}[\widetilde{R}_{\mathrm{SCD}}(g)]\le6L_fC_{\ell}\sqrt{\frac{\log2/\delta}{2n}},\]
\[\mathbb{E}[\widetilde{R}_{\mathrm{SCD}}(g)]-\widetilde{R}_{\mathrm{SCD}}(g)\le6L_fC_{\ell}\sqrt{\frac{\log2/\delta}{2n}}.\]
Therefore, we can derive the following inequality with probability at least $1-\delta$:
\[|\widetilde{R}_{\mathrm{SCD}}(g)-\mathbb{E}[\widetilde{R}_{\mathrm{SCD}}(g)]|\le6L_fC_{\ell}\sqrt{\frac{\log2/\delta}{2n}}.\]
Finally, 
\begin{align*}
|\widetilde{R}_{\mathrm{SCD}}(g)-R(g)|=&|\widetilde{R}_{\mathrm{SCD}}(g)-\mathbb{E}[\widetilde{R}_{\mathrm{SCD}}(g)]+\mathbb{E}[\widetilde{R}_{\mathrm{SCD}}(g)]-R(g)|\\
\le&|\widetilde{R}_{\mathrm{SCD}}(g)-\mathbb{E}[\widetilde{R}_{\mathrm{SCD}}(g)]|+|\mathbb{E}[\widetilde{R}_{\mathrm{SCD}}(g)]-R(g)|\\
=&|\widetilde{R}_{\mathrm{SCD}}(g)-\mathbb{E}[\widetilde{R}_{\mathrm{SCD}}(g)]|+\mathbb{E}[\widetilde{R}_{\mathrm{SCD}}(g)]-R(g)\\
\le&6L_fC_{\ell}\sqrt{\frac{\log2/\delta}{2n}}+3(L_f+1)C_{\ell}\Delta,
\end{align*}
with probability at least $1-\delta$.
\end{proof}

\subsection{Proof of Theorem~\ref{thm:corrected estimation error}}
\begin{proof}
By combining the probabilistic inequalities established in Theorem~\ref{thm:consistency} and Theorem~\ref{thm:estimation error}, each of which holds with probability at least $1-\delta/2$, we obtain the following inequality, which holds with probability at least $1-\delta$.
\begin{align*}
R(\tilde{g}_{\mathrm{SCD}})-R(g^*)=&(R(\tilde{g}_{\mathrm{SCD}})-\widetilde{R}_{\mathrm{SCD}}(\tilde{g}_{\mathrm{SCD}}))+(\widetilde{R}_{\mathrm{SCD}}(\tilde{g}_{\mathrm{SCD}})-\widetilde{R}_{\mathrm{SCD}}(\hat{g}_{\mathrm{SCD}}))\\
&+(\widetilde{R}_{\mathrm{SCD}}(\hat{g}_{\mathrm{SCD}})-R(\hat{g}_{\mathrm{SCD}}))+(R(\hat{g}_{\mathrm{SCD}})-R(g^*))\\
\le&|R(\tilde{g}_{\mathrm{SCD}})-\widetilde{R}_{\mathrm{SCD}}(\tilde{g}_{\mathrm{SCD}})|+|\widetilde{R}_{\mathrm{SCD}}(\hat{g}_{\mathrm{SCD}})-R(\hat{g}_{\mathrm{SCD}})|\\
&+(R(\hat{g}_{\mathrm{SCD}})-R(g^*))\\
\le&(12L_f+2\pi_+^2+2\pi_-^2+5)C_{\ell}\sqrt{\frac{\log4/\delta}{2n}}+6(L_f+1)C_{\ell}\Delta+12L_{\ell}\mathfrak{R}_n(\mathcal{G})
\end{align*}
The first inequality is deduced from the definition of $\tilde{g}_{\mathrm{SCD}}$, and the proof of Theorem~\ref{thm:corrected estimation error} is completed.
\end{proof}

\section{Analysis of Risk Estimator by Convex Combination}\label{appendix_convex}

\subsection{Estimation Error Bound}
We make the same assumptions as in Section~\ref{sec:bound}, and define $\hat{g}_{\mathrm{SCD_{convex}}}=\arg\min_{g\in\mathcal{G}}\widehat{R}_{\mathrm{SCD_{convex}}}(g)$. 
\begin{theorem}\label{thm:estimation error sum}
For any $\delta>0$, the following inequality holds with probability at least $1-\delta$:
\begin{align}
&R(\hat{g}_{\mathrm{SCD_{convex}}})-R(g^*)\notag\\
&\le\frac{4\gamma+8(1-\gamma)|\pi_+-\pi_-|}{|\pi_+-\pi_-|}L_{\ell}\mathfrak{R}_n(\mathcal{G})+\frac{2\gamma+4(1-\gamma)|\pi_+-\pi_-|}{|\pi_+-\pi_-|}C_{\ell}\sqrt{\frac{\log4/\delta}{2n}},\label{error bound sum}
\end{align}
where $\gamma\in[0,1]$.
\end{theorem}
Then, the proof of Theorem~\ref{thm:estimation error sum} is given. For convenience, we make the following notation:
\begin{align*}
\mathcal{L}_{\mathrm{Sconf}}(g;\bm{x}_i,\bm{x}_i')=&\frac{(s(\bm{x},\bm{x}')-\pi_-)(\ell(g(\bm{x}),+1)+\ell(g(\bm{x}'),+1))}{2(\pi_+-\pi_-)}\\
&+\frac{(\pi_+-s(\bm{x},\bm{x}'))(\ell(g(\bm{x}),-1)+\ell(g(\bm{x}'),-1))}{2(\pi_+-\pi_-)},
\end{align*}
\begin{align*}
\mathcal{L}_{\mathrm{CD}}(g;\bm{x}_i,\bm{x}_i')=&\frac{(\pi_+-c_i)\ell(g(\bm{x}_i),+1)}{2}+\frac{(\pi_--c_i)\ell(g(\bm{x}_i'),-1)}{2}\\
&+\frac{(\pi_++c_i)\ell(g(\bm{x}_i'),+1)}{2}+\frac{(\pi_-+c_i)\ell(g(\bm{x}_i),-1)}{2}.
\end{align*}
\begin{lemma}\label{lemma:sup unbiased sum}
The following inequality holds with probability at least $1-\delta$:
\begin{align*}
&\underset{g\in\mathcal{G}}{\sup}|R(g)-\widehat{R}_{\mathrm{SCD_{convex}}}(g)|\\
&\le\frac{2\gamma+4(1-\gamma)|\pi_+-\pi_-|}{|\pi_+-\pi_-|}L_{\ell}\mathfrak{R}_n(\mathcal{G})+\frac{\gamma+2(1-\gamma)|\pi_+-\pi_-|}{|\pi_+-\pi_-|}C_{\ell}\sqrt{\frac{\log4/\delta}{2n}}.
\end{align*}
\end{lemma}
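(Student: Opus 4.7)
The plan is to reduce the uniform deviation of the convex combination to uniform deviations of its two constituents, and then invoke the standard McDiarmid--Rademacher analysis separately for $\widehat{R}_{\mathrm{Sconf}}(g)$ and $\widehat{R}_{\mathrm{CD}}(g)$. Since both estimators are unbiased, the convex combination is unbiased as well (Theorem~\ref{thm:unbiased risk sum}), so $R(g)=\gamma\,\mathbb{E}[\widehat{R}_{\mathrm{Sconf}}(g)]+(1-\gamma)\,\mathbb{E}[\widehat{R}_{\mathrm{CD}}(g)]$. A triangle inequality then gives
\begin{align*}
\sup_{g\in\mathcal{G}}|R(g)-\widehat{R}_{\mathrm{SCD_{convex}}}(g)|
\le \gamma\sup_{g\in\mathcal{G}}|R(g)-\widehat{R}_{\mathrm{Sconf}}(g)|
+(1-\gamma)\sup_{g\in\mathcal{G}}|R(g)-\widehat{R}_{\mathrm{CD}}(g)|,
\end{align*}
so it suffices to bound the two suprema individually, each with confidence $1-\delta/2$, and take a union bound.

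For each individual estimator I would follow the same three-step recipe used in the proof of Lemma~\ref{lemma:sup unbiased}. First, estimate the bounded-difference constant: replacing a single pair $(\bm{x}_i,\bm{x}_i',s_i)$ in $\widehat{R}_{\mathrm{Sconf}}(g)$ changes the per-sample coefficient of $\ell$ by at most $1/|\pi_+-\pi_-|$, so the stability constant is $C_\ell/(n|\pi_+-\pi_-|)$; similarly, replacing a single pair in $\widehat{R}_{\mathrm{CD}}(g)$ changes the estimator by at most $2C_\ell/n$ (summing the four loss terms with weights $\pi_\pm\pm c_i\in[0,1]$). McDiarmid's inequality then yields
\begin{align*}
\sup_{g\in\mathcal{G}}|R(g)-\widehat{R}_{\mathrm{Sconf}}(g)|
&\le \mathbb{E}\bigl[\sup_{g}|R(g)-\widehat{R}_{\mathrm{Sconf}}(g)|\bigr]+\frac{C_\ell}{|\pi_+-\pi_-|}\sqrt{\frac{\log 4/\delta}{2n}},\\
\sup_{g\in\mathcal{G}}|R(g)-\widehat{R}_{\mathrm{CD}}(g)|
&\le \mathbb{E}\bigl[\sup_{g}|R(g)-\widehat{R}_{\mathrm{CD}}(g)|\bigr]+2C_\ell\sqrt{\frac{\log 4/\delta}{2n}},
\end{align*}
each with probability at least $1-\delta/2$.

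Second, I would control the two expectations by symmetrization followed by Talagrand's contraction lemma applied to the Lipschitz loss $\ell$. For $\widehat{R}_{\mathrm{Sconf}}$, the coefficients in front of each $\ell(g(\cdot),\pm 1)$ are bounded in absolute value by $1/|\pi_+-\pi_-|$, giving $\mathbb{E}[\sup_g|R(g)-\widehat{R}_{\mathrm{Sconf}}(g)|]\le 2L_\ell\mathfrak{R}_n(\mathcal{G})/|\pi_+-\pi_-|$; for $\widehat{R}_{\mathrm{CD}}$, the coefficients are in $[0,1]$ and there are four loss terms per pair, producing $\mathbb{E}[\sup_g|R(g)-\widehat{R}_{\mathrm{CD}}(g)|]\le 4L_\ell\mathfrak{R}_n(\mathcal{G})$. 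Third, I combine both tail bounds under the weights $\gamma$ and $1-\gamma$ and collect terms, which produces the stated coefficients $(2\gamma+4(1-\gamma)|\pi_+-\pi_-|)/|\pi_+-\pi_-|$ in front of $L_\ell\mathfrak{R}_n(\mathcal{G})$ and $(\gamma+2(1-\gamma)|\pi_+-\pi_-|)/|\pi_+-\pi_-|$ in front of the McDiarmid term.

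The main obstacle is the careful book-keeping of the bounded-difference and Rademacher constants for $\widehat{R}_{\mathrm{Sconf}}$: the factor $1/(\pi_+-\pi_-)$ in Eq.~\eqref{sconf risk} can have either sign and blows up as $\pi_+\to\pi_-$, which is precisely why the stated bound contains $|\pi_+-\pi_-|$ in the denominator and why the estimator cannot be used at $\pi_+=\pi_-$; once the absolute values are handled correctly (mirroring the case analysis in Lemma~\ref{lemma:upper rademacher}), the rest of the argument is a routine application of the Sconf and ConfDiff existing analyses combined through triangle inequality and union bound.
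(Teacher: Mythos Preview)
Your proposal is correct and follows essentially the same approach as the paper: decompose via the triangle inequality into the $\widehat{R}_{\mathrm{Sconf}}$ and $\widehat{R}_{\mathrm{CD}}$ pieces, apply the existing uniform deviation bounds for each (with confidence $1-\delta/2$), and combine by a union bound. The paper simply cites Lemma~7 of~\cite{sconf} and Lemma~5 of~\cite{confdiff} for the two individual bounds rather than re-deriving them, but the structure and constants match your sketch exactly.
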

\begin{proof}
With probability at least $1-\delta$, the following inequality holds (cf. Lemma 7 in~\cite{sconf}, Lemma 5 in~\cite{confdiff}).
\begin{align*}
|R(g)-\widehat{R}_{\mathrm{SCD_{convex}}}(g)|=&|\gamma(R(g)-\widehat{R}_{\mathrm{Sconf}}(g))+(1-\gamma)(R(g)-\widehat{R}_{\mathrm{CD}}(g))|\\
\le&\gamma|R(g)-\widehat{R}_{\mathrm{Sconf}}(g)|+(1-\gamma)|R(g)-\widehat{R}_{\mathrm{CD}}(g)|\\
\le&\frac{2\gamma{L}_{\ell}}{|\pi_+-\pi_-|}\mathfrak{R}_n(\mathcal{G})+\frac{\gamma{C}_{\ell}}{|\pi_+-\pi_-|}\sqrt{\frac{\log4/\delta}{2n}}\\
&+4(1-\gamma)L_{\ell}\mathfrak{R}_n(\mathcal{G})+2(1-\gamma)C_{\ell}\sqrt{\frac{\log4/\delta}{2n}}\\
\le&\frac{2\gamma+4(1-\gamma)|\pi_+-\pi_-|}{|\pi_+-\pi_-|}L_{\ell}\mathfrak{R}_n(\mathcal{G})\\
&+\frac{\gamma+2(1-\gamma)|\pi_+-\pi_-|}{|\pi_+-\pi_-|}C_{\ell}\sqrt{\frac{\log4/\delta}{2n}}.
\end{align*}
The second inequality is obtained from the property that the probabilistic inequalities for the estimation error bounds of $\widehat{R}_{\mathrm{Sconf}}(g)$ and $\widehat{R}_{\mathrm{CD}}(g)$ each hold with probability at least $1-\delta/2$.
\end{proof}
Then, the proof of Theorem~\ref{thm:estimation error sum} is provided as follows.
\begin{proof}
With probability at least $1-\delta$, the following inequality holds.
\begin{align*}
R&(\hat{g}_{\mathrm{SCD_{convex}}})-R(g^*)\\
=&(R(\hat{g}_{\mathrm{SCD_{convex}}})-\widehat{R}(\hat{g}_{\mathrm{SCD_{convex}}}))+(\widehat{R}(\hat{g}_{\mathrm{SCD_{convex}}})-\widehat{R}_{\mathrm{SCD_{convex}}}(g^*))\\
&+(\widehat{R}_{\mathrm{SCD_{convex}}}(g^*)-R(g^*))\\
\le&(R(\hat{g}_{\mathrm{SCD_{convex}}})-\widehat{R}(\hat{g}_{\mathrm{SCD_{convex}}}))+(\widehat{R}_{\mathrm{SCD_{convex}}}(g^*)-R(g^*))\\
\le&|R(\hat{g}_{\mathrm{SCD_{convex}}})-\widehat{R}(\hat{g}_{\mathrm{SCD_{convex}}})|+|\widehat{R}_{\mathrm{SCD_{convex}}}(g^*)-R(g^*)|\\
\le&2\underset{g\in\mathcal{G}}{\sup}|R(g)-\widehat{R}_{\mathrm{SCD_{convex}}}(g)|\\
\le&\frac{4\gamma+8(1-\gamma)|\pi_+-\pi_-|}{|\pi_+-\pi_-|}L_{\ell}\mathfrak{R}_n(\mathcal{G})+\frac{2\gamma+4(1-\gamma)|\pi_+-\pi_-|}{|\pi_+-\pi_-|}C_{\ell}\sqrt{\frac{\log4/\delta}{2n}}.
\end{align*}
The first inequality is deduced from the definition of $\hat{g}_{\mathrm{SCD_{convex}}}$.
\end{proof}

\subsection{Robustness of The Convex Risk Estimator}
Similar to Section~\ref{sec:robust}, this section provides an analysis of noisy similarity-confidence and noisy confidence-difference on the learning procedure. 
While the robustness of convex combinations has not been analyzed under the noisy class prior probability in the empirical risk of Sconf learning, and such analysis tends to be intractable, we limit our analysis here to the impact of label noise.

Let $\bar{R}_{\mathrm{SCD_{convex}}}$ denote the empirical risk calculated from $\bar{\mathcal{D}}_n=\{(\bm{x}_i,\bm{x}_i'),\bar{s}_i,\bar{c}_i\}_{i=1}^{n}$, and let $\bar{g}_{\mathrm{SCD_{convex}}}=\arg\min_{g\in\mathcal{G}}\bar{R}_{\mathrm{SCD_{convex}}}(g)$ be the minimizer of $\bar{R}_{\mathrm{SCD_{convex}}}$. 
Under this setting, the estimation error bound is given by the following theorem.
\begin{theorem}\label{thm:robust estimation error sum}
Based on the assumption above, for any $\delta>0$, the following inequality holds with probability at least $1-\delta$:
\begin{align}
R&(\bar{g}_{\mathrm{SCD_{convex}}})-R(g^*)\notag\\
\le&\frac{8\gamma+16(1-\gamma)|\pi_+-\pi_-|}{|\pi_+-\pi_-|}L_{\ell}\mathfrak{R}_n(\mathcal{G})+\frac{4\gamma+8(1-\gamma)|\pi_+-\pi_-|}{|\pi_+-\pi_-|}C_{\ell}\sqrt{\frac{\log4/\delta}{2n}}\notag\\
&+\frac{2\gamma{C}_{\ell}\sum_{i=1}^{n}|\bar{s}_i-s_i|}{n|\pi_+-\pi_-|}+\frac{4(1-\gamma){C}_{\ell}\sum_{i=1}^{n}|\bar{c}_i-c_i|}{n}.\label{robustness sum}
\end{align}
\end{theorem}
We now prove Theorem~\ref{thm:robust estimation error sum} (cf. Theorem 6 in~\cite{sconf}, Theorem 4 in~\cite{confdiff}).
\begin{proof}
The following inequality holds:
\begin{align*}
|\bar{R}_{\mathrm{SCD_{convex}}}(g)-\widehat{R}_{\mathrm{SCD_{convex}}}(g)|=&|\gamma(\bar{R}_{\mathrm{Sconf}}(g)-\widehat{R}_{\mathrm{Sconf}}(g))+(1-\gamma)(\bar{R}_{\mathrm{CD}}(g)-\widehat{R}_{\mathrm{CD}}(g))|\\
\le&\gamma|\bar{R}_{\mathrm{Sconf}}(g)-\widehat{R}_{\mathrm{Sconf}}(g)|+(1-\gamma)|\bar{R}_{\mathrm{CD}}(g)-\widehat{R}_{\mathrm{CD}}(g)|\\
\le&\frac{\gamma{C}_{\ell}\sum_{i=1}^{n}|\bar{s}_i-s_i|}{n|\pi_+-\pi_-|}+\frac{2(1-\gamma){C}_{\ell}\sum_{i=1}^{n}|\bar{c}_i-c_i|}{n}.
\end{align*}
Thus, with probability at least $1-\delta$, the following inequality can be derived.
\begin{align*}
R&(\bar{g}_{\mathrm{SCD_{convex}}})-R(g^*)\\
=&(R(\bar{g}_{\mathrm{SCD_{convex}}})-\widehat{R}_{\mathrm{SCD_{convex}}}(\bar{g}_{\mathrm{SCD_{convex}}}))\\
&+(\widehat{R}_{\mathrm{SCD_{convex}}}(\bar{g}_{\mathrm{SCD_{convex}}})-\bar{R}_{\mathrm{SCD_{convex}}}(\bar{g}_{\mathrm{SCD_{convex}}}))\\
&+(\bar{R}_{\mathrm{SCD_{convex}}}(\bar{g}_{\mathrm{SCD_{convex}}})-\bar{R}_{\mathrm{SCD_{convex}}}(\hat{g}_{\mathrm{SCD_{convex}}}))\\
&+(\bar{R}_{\mathrm{SCD_{convex}}}(\hat{g}_{\mathrm{SCD_{convex}}})-\widehat{R}_{\mathrm{SCD_{convex}}}(\hat{g}_{\mathrm{SCD_{convex}}}))\\
&+(\widehat{R}_{\mathrm{SCD_{convex}}}(\hat{g}_{\mathrm{SCD_{convex}}})-R_{\mathrm{SCD_{convex}}}(\hat{g}_{\mathrm{SCD_{convex}}}))\\
&+(R_{\mathrm{SCD_{convex}}}(\hat{g}_{\mathrm{SCD_{convex}}})-R(g^*))\\
\le&2\underset{g\in\mathcal{G}}{\sup}|R(g)-\widehat{R}_{\mathrm{SCD_{convex}}}(g)|+2\underset{g\in\mathcal{G}}{\sup}|\bar{R}_{\mathrm{SCD_{convex}}}(g)-\widehat{R}_{\mathrm{SCD_{convex}}}(g)|\\
&+(R_{\mathrm{SCD_{convex}}}(\hat{g}_{\mathrm{SCD_{convex}}})-R(g^*))\\
\le&4\underset{g\in\mathcal{G}}{\sup}|R(g)-\widehat{R}_{\mathrm{SCD_{convex}}}(g)|+2\underset{g\in\mathcal{G}}{\sup}|\bar{R}_{\mathrm{SCD_{convex}}}(g)-\widehat{R}_{\mathrm{SCD_{convex}}}(g)|\\
\le&\frac{8\gamma+16(1-\gamma)|\pi_+-\pi_-|}{|\pi_+-\pi_-|}L_{\ell}\mathfrak{R}_n(\mathcal{G})+\frac{4\gamma+8(1-\gamma)|\pi_+-\pi_-|}{|\pi_+-\pi_-|}C_{\ell}\sqrt{\frac{\log4/\delta}{2n}}\\
&+\frac{2\gamma{C}_{\ell}\sum_{i=1}^{n}|\bar{s}_i-s_i|}{n|\pi_+-\pi_-|}+\frac{4(1-\gamma){C}_{\ell}\sum_{i=1}^{n}|\bar{c}_i-c_i|}{n}.
\end{align*}
The second and third inequalities are derived from the proof of Theorem~\ref{thm:estimation error sum}.
\end{proof}

\subsection{The Correction of The Risk Estimator}
Similarly to Section~\ref{sec:risk correction}, the risk estimator in Eq.~\eqref{unbiased risk sum} may take negative values and therefore does not satisfy the non-negativity of the loss function. To overcome this difficulty, risk correction approach with risk correction functions is proposed, such as the ReLU function $f(z)=\max(0,z)$ and the absolute value function $f(z)=|z|$. 
Consequently, the corrected risk estimator for SconfConfDiff classification by convex combination is given as follows:
\begin{align}
\widetilde{R}_{\mathrm{SCD_{convex}}}(g)=\gamma\widetilde{R}_{\mathrm{Sconf}}(g)+(1-\gamma)\widetilde{R}_{\mathrm{CD}}(g),\label{corrected risk sum}
\end{align}
where
\begin{align*}
\widetilde{R}_{\mathrm{Sconf}}(g)=&f\left(\sum_{i=1}^{n}\frac{(s_i-\pi_-)(\ell(g(\bm{x}_i),+1)+\ell(g(\bm{x}_i'),+1))}{2n(\pi_+-\pi_-)}\right)\\
&+f\left(\sum_{i=1}^{n}\frac{(\pi_+-s_i)(\ell(g(\bm{x}_i),-1)+\ell(g(\bm{x}_i'),-1))}{2n(\pi_+-\pi_-)}\right),
\end{align*}
\begin{align*}
\widetilde{R}_{\mathrm{CD}}(g)=\frac{1}{2n}\biggl\{&f\left(\sum_{i=1}^{n}(\pi_+-c_i)\ell(g(\bm{x}_i),+1)\right)+f\left(\sum_{i=1}^{n}(\pi_--c_i)\ell(g(\bm{x}_i'),-1)\right)\\
+&f\left(\sum_{i=1}^{n}(\pi_++c_i)\ell(g(\bm{x}_i'),+1)\right)+f\left(\sum_{i=1}^{n}(\pi_-+c_i)\ell(g(\bm{x}_i),-1)\right)\biggr\}.
\end{align*}
Here, it is assumed that
\[\mathbb{E}\left[\sum_{i=1}^{n}\frac{(s_i-\pi_-)(\ell(g(\bm{x}_i),+1)+\ell(g(\bm{x}_i'),+1))}{2n(\pi_+-\pi_-)}\right]\ge\alpha,\]
\[\mathbb{E}\left[\sum_{i=1}^{n}\frac{(\pi_+-s_i)(\ell(g(\bm{x}_i),-1)+\ell(g(\bm{x}_i'),-1))}{2n(\pi_+-\pi_-)}\right]\ge\beta,\]
\[\mathbb{E}\left[\sum_{i=1}^{n}\frac{(\pi_+-c_i)\ell(g(\bm{x}_i),+1)}{2n}\right]\ge{a},\hspace{5mm}\mathbb{E}\left[\sum_{i=1}^{n}\frac{(\pi_--c_i)\ell(g(\bm{x}_i'),-1)}{2n}\right]\ge{b}, \]
\[\mathbb{E}\left[\sum_{i=1}^{n}\frac{(\pi_++c_i)\ell(g(\bm{x}_i'),+1)}{2n}\right]\ge{c},\hspace{5mm}\mathbb{E}\left[\sum_{i=1}^{n}\frac{(\pi_-+c_i)\ell(g(\bm{x}_i),-1)}{2n}\right]\ge{d}.\]
We define $\tilde{g}_{\mathrm{SCD_{convex}}}=\arg\min_{g\in\mathcal{G}}\widetilde{R}_{\mathrm{SCD_{convex}}}(g)$.
\begin{theorem}\label{thm:corrected estimation error sum}
Based on the assumption above, for any $\delta>0$, the following inequality holds with probability at least $1-\delta$:
\begin{align}
R&(\tilde{g}_{\mathrm{SCD_{convex}}})-R(g^*)\notag\\
\le&\frac{2\gamma+4(1-\gamma)|\pi_+-\pi_-|}{|\pi_+-\pi_-|}(L_f+1)C_{\ell}\sqrt{\frac{\log8/\delta}{2n}}\notag\\
&+\frac{2\gamma\Delta_{\mathrm{Sconf}}+4(1-\gamma)\Delta_{\mathrm{CD}}|\pi_+-\pi_-|}{|\pi_+-\pi_-|}(L_f+1)C_{\ell}+\frac{4\gamma+8(1-\gamma)|\pi_+-\pi_-|}{|\pi_+-\pi_-|}L_{\ell}\mathfrak{R}_n(\mathcal{G}),
\end{align}
where
\[\Delta_{\mathrm{Sconf}}=\exp\left(-\frac{(\pi_+-\pi_-)^2n}{2C_{\ell}^2}\right)(\exp(\alpha^2)+\exp(\beta^2)),\]
\[\Delta_{\mathrm{CD}}=\exp\left(-\frac{2a^2n}{C_{\ell}^2}\right)+\exp\left(-\frac{2b^2n}{C_{\ell}^2}\right)+\exp\left(-\frac{2c^2n}{C_{\ell}^2}\right)+\exp\left(-\frac{2d^2n}{C_{\ell}^2}\right).\]
\end{theorem}
Then, we prove Theorem~\ref{thm:corrected estimation error sum} (cf. Theorem 7 in~\cite{sconf}, Theorem 5 in~\cite{confdiff}). 
\begin{proof}
The following inequality holds with the probability at least $1-\delta/2$.
\begin{align}
|&\widetilde{R}_{\mathrm{SCD_{convex}}}(g)-R(g)|\notag \\
=&|\gamma(\widetilde{R}_{\mathrm{Sconf}}(g)-R(g))+(1-\gamma)(\widetilde{R}_{\mathrm{CD}}(g)-R(g))|\notag \\
\le&\gamma|\widetilde{R}_{\mathrm{Sconf}}(g)-R(g)|+(1-\gamma)|\widetilde{R}_{\mathrm{CD}}(g)-R(g)|\notag \\
\le&\gamma\left(\frac{L_{f}C_{\ell}}{|\pi_+-\pi_-|}\sqrt{\frac{\log8/\delta}{2n}}+\frac{(L_f+1)C_{\ell}}{|\pi_+-\pi_-|}\Delta_{\mathrm{Sconf}}\right)\notag \\
&+(1-\gamma)\left(2C_{\ell}L_f\sqrt{\frac{\log8/\delta}{2n}}+2(L_f+1)C_{\ell}\Delta_{\mathrm{CD}}\right)\notag \\
=&\frac{\gamma{L}_{f}+2(1-\gamma)L_f|\pi_+-\pi_-|}{|\pi_+-\pi_-|}C_{\ell}\sqrt{\frac{\log8/\delta}{2n}}+\frac{\gamma\Delta_{\mathrm{Sconf}}+2(1-\gamma)\Delta_{\mathrm{CD}}}{|\pi_+-\pi_-|}(L_f+1)C_{\ell}.\label{robust gen error sum}
\end{align}
The second inequality is obtained from the property that the probabilistic inequalities for the estimation error bounds of $\widetilde{R}_{\mathrm{Sconf}}(g)$ and $\widetilde{R}_{\mathrm{CD}}(g)$ each hold with probability at least $1-\delta/2$.

Therefore, if the probabilistic inequalities in Eq.~\eqref{error bound sum} and Eq.~\eqref{robust gen error sum} respectively hold with probability at least $1-\delta/2$, then the following inequality holds with probability at least $1-\delta$.
\begin{align*}
R&(\tilde{g}_{\mathrm{SCD_{convex}}})-R(g^*)\\
=&(R(\tilde{g}_{\mathrm{SCD_{convex}}})-\widetilde{R}_{\mathrm{SCD_{convex}}}(\tilde{g}_{\mathrm{SCD_{convex}}}))\\
&+(\widetilde{R}_{\mathrm{SCD_{convex}}}(\tilde{g}_{\mathrm{SCD_{convex}}})-\widetilde{R}_{\mathrm{SCD_{convex}}}(\hat{g}_{\mathrm{SCD_{convex}}}))\\
&+(\widetilde{R}_{\mathrm{SCD_{convex}}}(\hat{g}_{\mathrm{SCD_{convex}}})-R(\hat{g}_{\mathrm{SCD_{convex}}}))+(R(\hat{g}_{\mathrm{SCD_{convex}}})-R(g^*))\\
\le&|R(\tilde{g}_{\mathrm{SCD_{convex}}})-\widetilde{R}_{\mathrm{SCD_{convex}}}(\tilde{g}_{\mathrm{SCD_{convex}}})|+|\widetilde{R}_{\mathrm{SCD_{convex}}}(\hat{g}_{\mathrm{SCD_{convex}}})-R(\hat{g}_{\mathrm{SCD_{convex}}})|\\
&+(R(\hat{g}_{\mathrm{SCD_{convex}}})-R(g^*))\\
\le&\frac{2\gamma+4(1-\gamma)|\pi_+-\pi_-|}{|\pi_+-\pi_-|}(L_f+1)C_{\ell}\sqrt{\frac{\log8/\delta}{2n}}\notag\\
&+\frac{2\gamma\Delta_{\mathrm{Sconf}}+4(1-\gamma)\Delta_{\mathrm{CD}}|\pi_+-\pi_-|}{|\pi_+-\pi_-|}(L_f+1)C_{\ell}+\frac{4\gamma+8(1-\gamma)|\pi_+-\pi_-|}{|\pi_+-\pi_-|}L_{\ell}\mathfrak{R}_n(\mathcal{G}).
\end{align*}
The first inequality is deduced from the definition of $\tilde{g}_{\mathrm{SCD_{convex}}}$.
\end{proof}

\section{Disscussions}\label{appendix:limit}

\subsection{Use of Ordinary Labeled Data}
If an ordinary labeled dataset $\mathcal{D}_\mathrm{Ord}=\{(\bm{x}_i^\mathrm{Ord},y_i^\mathrm{Ord})\}_{i=1}^{n_\mathrm{Ord}}$ is available, it can be included in the training process together with the SconfConfDiff dataset. Let $\widehat{R}_\mathrm{Ord}(g)=\frac{1}{n_\mathrm{Ord}}\sum_{i=1}^{n_\mathrm{Ord}}\ell(g(\bm{x}_i^\mathrm{Ord}),y_i^\mathrm{Ord})$ be the empirical risk estimator calculated using only the dataset $\mathcal{D}_\mathrm{Ord}$ with true class labels. By setting $\lambda\widehat{R}_\mathrm{SCD}(g)+(1-\lambda)\widehat{R}_\mathrm{Ord}(g)$ where $\lambda\in[0,1]$, an unbiased risk estimator can be computed from both ordinary labeled data and SconfConfDiff data.

\subsection{Limitations}
This study focuses on binary classification problems and is not readily extendable to multi-class classification. To handle multi-class tasks, both similarity-confidence and confidence-difference must be redefined to accommodate multiclass settings. Furthermore, while this work utilizes two weak supervision: similarity-confidence and confidence-difference, it is important future work to explore a unified learning framework that can incorporate a broader variety of weak labels. Our current formulation assumes that each data pair is annotated with both similarity-confidence and confidence-difference. However, the proposed method does not simply extend to cases where either label is missing. Developing a robust learning framework for such incomplete settings is an important direction for future work. This paper derives two methods, SconfConfDiff-Convex Classification and SconfConfDiff classification, but at this stage, there is no clear criterion to determine which method is more appropriate depending on the situation. For practical implementation in the future, it is necessary to clarify the conditions under which each method should be applied and the differences in their effectiveness, and to establish guidelines for selecting the appropriate method.

\section{Details of Experimental Settings}\label{appendix_exp_setting}
In this section, the details of experimental datasets and hyperparameters are provided. 
The following experimental settings are based on those used in ~\cite{confdiff}.

\subsection{Generation of Similarity-Confidence and Confidence-Difference}

Although similarity-confidence and confidence-difference are originally assigned by annotators, in this paper we generate them synthetically to simplify the experiments.
First, we train a probabilistic classifier based on logistic regression using the ground-truth labeled data and the same model architecture.
Next, we randomly sample pairs of unlabeled data points and input them into the probabilistic classifier to obtain class posterior probabilities.
Then, following the definition in Section~\ref{sconf setting} and Section~\ref{confdiff setting}, we generate similarity-confidence and confidence-difference for each sampled pair and construct SconfConfDiff dataset.
Details of the procedure for generating SconfConfDiff dataset used in the experiment are provided in Algorithm~\ref{alg:generate sconfconfdiff}.

\begin{algorithm*}[tb]
\caption{Generation of Synthetic SconfConfDiff Data Pairs for Experiments}
\label{alg:generate sconfconfdiff}
\begin{algorithmic}
\STATE {\bfseries Input:} train dataset $\mathcal{D}_{\text{train}}$, size $2n$, annotation model $f_{\text{prob}}$
\STATE Train $f_{\text{prob}}: \mathcal{X} \to [0, 1]$ on $\mathcal{D}_{\text{train}}$ (using ground-truth labels) to estimate $p(y=+1 \mid \bm{x})$.
\STATE $\mathcal{D}_{\text{SCD}} \leftarrow \emptyset$
\FOR{$i=1$ to $n$}
\STATE Sample $\bm{x}_i, \bm{x}_i' \in \mathcal{D}_{\text{train}}$.
\STATE Calculate the positive confidence for the first instance:
$r_i \leftarrow f_{\text{prob}}(\bm{x}_i) = p(y_i=+1 \mid \bm{x}_i)$
\STATE Calculate the positive confidence for the second instance:
$r_i' \leftarrow f_{\text{prob}}(\bm{x}_i') = p(y_i'=+1 \mid \bm{x}_i')$
\STATE Compute the similarity-confidence $s_i$:
$s_i \leftarrow r_ir_i' +(1 - r_i)(1 - r_i')$
\STATE Compute the confidence-difference $c_i$:
$c_i \leftarrow r_i' - r_i$
\STATE $\mathcal{D}_{\text{SCD}} \leftarrow \mathcal{D}_{\text{SCD}} \cup \{((\bm{x}_i, \bm{x}_i'), s_i, c_i)\}$
\ENDFOR
\STATE {\bfseries Output:}
SconfConfDiff dataset $\mathcal{D}_{\text{SCD}}$
\end{algorithmic}
\end{algorithm*}

\begin{table}
\caption{Characteristics of experimental datasets.}
\label{tab:details of dataset}
\centering
\begin{tabular}{cccccc}
\toprule
\textbf{Dataset} & \textbf{\# Train} & \textbf{\# Test} & \textbf{\# Features} & \textbf{\# Class Labels} & \textbf{Model} \\
\midrule
\textbf{MNIST} & 60,000 & 10,000 & 784 & 10 & MLP \\
\textbf{Kuzushiji} & 60,000 & 10,000 & 784 & 10 & MLP \\
\textbf{Fashion} & 60,000 & 10,000 & 784 & 10 & MLP \\
\textbf{CIFAR-10} & 50,000 & 10,000 & 3,072 & 10 & ResNet-34 \\
\midrule
\textbf{Optdigits} & 4,495 & 1,125 & 64 & 10 & MLP \\
\textbf{Pendigits} & 8,793 & 2,199 & 16 & 10 & MLP \\
\textbf{Letter} & 16,000 & 4,000 & 16 & 10 & MLP \\
\textbf{PMU-UD} & 4,144 & 1,036 & 784 & 26 & MLP \\
\bottomrule
\end{tabular}
\end{table}

\subsection{Details of Experimental Datasets}
Table~\ref{tab:details of dataset} summarizes detailed statistics along with the corresponding model architectures. The basic information regarding the datasets, their sources, and data splits is as follows.

\begin{itemize}[left=1pt]
\item MNIST~\cite{mnist}:  
a grayscale handwritten digit classification dataset, and the label space is $\{0,1,2,3,4,5,6,7,8,9\}$.
To formulate a binary classification task, $\{0,2,4,6,8\}$ are treated as positive class, and $\{1,3,5,7,9\}$ are treated as negative class. 15,000 data pairs are sampled for use in our experiments. The dataset is available at \href{http://yann.lecun.com/exdb/mnist/}{http://yann.lecun.com/exdb/mnist/}.

\item Kuzushiji-MNIST~\cite{kmnist}: a grayscale handwritten Japanese character classification dataset, and the label space is \{`o', `ki', `su', `tsu', `na', `ha', `ma', `ya', `re', `wo'\}.
To formulate a binary classification task, we treated the characters \{`o', `su', `na', `ma', `re'\} as the positive class and \{`ki', `tsu', `ha', `ya', `wo'\} as the negative class. 15,000 data pairs are sampled  for use in our experiments. The dataset is available at \href{https://github.com/rois-codh/kmnist}{https://github.com/rois-codh/kmnist}.

\item Fashion-MNIST~\cite{fashion}: a grayscale image dataset for fashion item classification, and the label space is \{`T-shirt', `trouser', `pullover', `dress', `sandal', `coat', `shirt', `sneaker', `bag', `ankle boot'\}.
To formulate a binary classification task, we treated \{`T-shirt', `pullover`, `coat', `shirt', `bag'\} as the positive class and \{`trouser', `dress', `sandal', `sneaker', `ankle boot'\} as the negative class. 15,000 data pairs are sampled  for use in our experiments. The dataset is available at \href{https://github.com/zalandoresearch/fashion-mnist}{https://github.com/zalandoresearch/fashion-mnist}.

\item CIFAR-10~\cite{cifar10}: a color image dataset for object classification, and the label space is \{`airplane', `bird', `automobile', `cat', `deer', `dog', `frog', `horse', `ship', `truck'\}. 
To formulate a binary classification task, we treated \{`bird', `deer', `dog', `frog', `cat', `horse'\} as the positive class and \{`airplane', `automobile', `ship', `truck'\} as the negative class. 10,000 data pairs are sampled  for use in our experiments. The dataset is available at \href{https://www.cs.toronto.edu/~kriz/cifar.html}{https://www.cs.toronto.edu/~kriz/cifar.html}.

\item Optdigits~\cite{optdigits}, Pendigits~\cite{pendigits}, and PMU-UD~\cite{pmu-ud} are UCI datasets for handwritten digit recognition. The train-test splits are summarized in Table~\ref{tab:details of dataset}. 
For PMU-UD, which consists of image data, we resized each image to 28$\times$28 grayscale before training. The label space is $\{0,1,2,3,4,5,6,7,8,9\}$. To formulate a binary classification task, $\{0,2,4,6,8\}$ are treated as positive class, and $\{1,3,5,7,9\}$ are treated as negative class. We sampled 1,200, 2,500, and 1,000 data pairs from the training datasets of Optdigits, Pendigits, and PMU-UD, respectively. The datasets are available at 
\begin{itemize}
\item \href{https://archive.ics.uci.edu/dataset/80/optical+recognition+of+handwritten+digits}{https://archive.ics.uci.edu/dataset/80/optical+recognition+of+handwritten+digits} \item \href{https://archive.ics.uci.edu/dataset/81/pen+based+recognition+of+handwritten+digits}{https://archive.ics.uci.edu/dataset/81/pen+based+recognition+of+handwritten+digits}
\item \href{https://archive.ics.uci.edu/dataset/469/pmu+ud}{https://archive.ics.uci.edu/dataset/469/pmu+ud}
\end{itemize}

\item Letter~\cite{letter} is a UCI dataset for letter recognition, and the label space consists of 26 capital letters in the English alphabet. 
To formulate a binary classification task, we assigned the first 13 characters to the positive class and the remaining 13 characters to the negative class. 4,000 data pairs are sampled for use in our experiments. The dataset is available at \href{https://archive.ics.uci.edu/dataset/59/letter+recognition}{https://archive.ics.uci.edu/dataset/59/letter+recognition}.

\end{itemize}

\subsection{Details of Hyperparameters}

\begin{table}
\caption{Details of hyperparameters.}
\label{tab:details of hyperparameters}
\centering
\begin{tabular}{ccccc}
\toprule
\textbf{Dataset} & \textbf{\# Epoch} & \textbf{\# Learning Rate} & \textbf{\# Weight Decay} & \textbf{\# Batch Size}\\
\midrule
\textbf{MNIST} & 200 & 1e-3 & 1e-5 & 256 \\
\textbf{Kuzushiji} & 200 & 1e-3 & 1e-5 & 256 \\
\textbf{Fashion} & 200 & 1e-3 & 1e-5 & 256 \\
\textbf{CIFAR-10} & 200 & 5e-4 & 1e-5 & 128 \\
\midrule
\textbf{Optdigits} & 200 & 1e-3 & 1e-5 & 256 \\
\textbf{Pendigits} & 200 & 1e-3 & 1e-5 & 256 \\
\textbf{Letter} & 200 & 1e-3 & 1e-5 & 256 \\
\textbf{PMU-UD} & 200 & 1e-3 & 1e-5 & 256 \\
\bottomrule
\end{tabular}
\end{table}

Table~\ref{tab:details of hyperparameters} shows the details of hyperparameters.
The test accuracy is computed as the average over the last 10 epochs. 
The epoch size of the training probabilistic classifier to generate confidence scores is 10.
All methods were implemented in PyTorch~\cite{PyTorch} and optimized using the Adam optimizer~\cite{Adam}.
To ensure fair comparisons, we set the same hyperparameter values for all the compared approaches.

\subsection{Computational Resources}\label{server spec}

The experiments were conducted on a machine with the following specifications:
\begin{itemize}[left=1pt]
\item \textbf{CPU}: Intel(R) Xeon(R) Gold 6312U (24 cores, 48 threads, 2.40 GHz)
\item \textbf{RAM}: 1.0 TB
\item \textbf{GPU}: NVIDIA RTX A6000 (48 GB VRAM)
\item \textbf{Storage}: 1 TB SSD
\item \textbf{OS}: Rocky Linux 9.2
\item \textbf{Software}: Python 3.12.2, numpy 1.26.4, torch 2.3.0, torchvision 0.18.0, CUDA 11.8
\end{itemize}

\section{Details of Experimental Results}\label{appendix_exp}
In this section, we describe the details of the experimental results.

\begin{table}
\caption{
Overall classification accuracy on the benchmark test set with $\pi_+=0.2$ averaged over five random seeds, with mean and standard deviation (mean$\pm$std). The highest score among the compared methods, excluding supervised learning, is shown in bold.
}
\label{tab:benchmark_acc_0.2_all}
\begin{center}
\begin{small}
\begin{tabular}{lcccc}
\toprule
Method & MNIST & Kuzushiji & Fashion & CIFAR-10 \\
\midrule
SconfConfDiff-Unbiased & 0.894 $\pm$ 0.043 & 0.706 $\pm$ 0.016 & 0.851 $\pm$ 0.085 & 0.865 $\pm$ 0.006 \\
SconfConfDiff-ReLU & 0.975 $\pm$ 0.002 & 0.890 $\pm$ 0.010 & 0.965 $\pm$ 0.002 & 0.871 $\pm$ 0.012 \\
SconfConfDiff-ABS & \textbf{0.978 $\pm$ 0.002} & \textbf{0.906 $\pm$ 0.002} & \textbf{0.969 $\pm$ 0.003} & \textbf{0.884 $\pm$ 0.006} \\
\cmidrule(lr){1-5}
ConfDiff-Unbiased & 0.805 $\pm$ 0.054 & 0.655 $\pm$ 0.035 & 0.859 $\pm$ 0.061 & 0.782 $\pm$ 0.024 \\
Convex($0.2$)-Unbiased & 0.837 $\pm$ 0.094 & 0.687 $\pm$ 0.064 & 0.850 $\pm$ 0.062 & 0.807 $\pm$ 0.009 \\
Convex($0.5$)-Unbiased & 0.933 $\pm$ 0.007 & 0.833 $\pm$ 0.026 & 0.886 $\pm$ 0.021 & 0.830 $\pm$ 0.025 \\
Convex($0.8$)-Unbiased & 0.858 $\pm$ 0.079 & 0.732 $\pm$ 0.074 & 0.846 $\pm$ 0.056 & 0.806 $\pm$ 0.021 \\
Sconf-Unbiased & 0.830 $\pm$ 0.050 & 0.706 $\pm$ 0.044 & 0.862 $\pm$ 0.078 & 0.781 $\pm$ 0.014 \\
\cmidrule(lr){1-5}
ConfDiff-ReLU & 0.967 $\pm$ 0.005 & 0.861 $\pm$ 0.014 & 0.962 $\pm$ 0.003 & 0.854 $\pm$ 0.011 \\
Convex($\gamma=0.2$)-ReLU & 0.569 $\pm$ 0.189 & 0.473 $\pm$ 0.105 & 0.800 $\pm$ 0.122 & 0.813 $\pm$ 0.019 \\
Convex($\gamma=0.5$)-ReLU & 0.360 $\pm$ 0.121 & 0.319 $\pm$ 0.099 & 0.843 $\pm$ 0.097 & 0.676 $\pm$ 0.075 \\
Convex($\gamma=0.8$)-ReLU & 0.309 $\pm$ 0.096 & 0.400 $\pm$ 0.192 & 0.644 $\pm$ 0.294 & 0.787 $\pm$ 0.021 \\
Sconf-ReLU & 0.959 $\pm$ 0.006 & 0.845 $\pm$ 0.019 & 0.946 $\pm$ 0.008 & 0.838 $\pm$ 0.010 \\
\cmidrule(lr){1-5}
ConfDiff-ABS & 0.975 $\pm$ 0.001 & 0.898 $\pm$ 0.006 & 0.967 $\pm$ 0.001 & 0.868 $\pm$ 0.007 \\
Convex($\gamma=0.2$)-ABS & 0.975 $\pm$ 0.003 & 0.904 $\pm$ 0.007 & 0.967 $\pm$ 0.002 & 0.870 $\pm$ 0.005 \\
Convex($\gamma=0.5$)-ABS & 0.976 $\pm$ 0.004 & 0.903 $\pm$ 0.006 & 0.965 $\pm$ 0.002 & 0.869 $\pm$ 0.006 \\
Convex($\gamma=0.8$)-ABS & 0.972 $\pm$ 0.003 & 0.901 $\pm$ 0.005 & 0.962 $\pm$ 0.004 & 0.870 $\pm$ 0.008 \\
Sconf-ABS & 0.960 $\pm$ 0.004 & 0.881 $\pm$ 0.006 & 0.955 $\pm$ 0.006 & 0.818 $\pm$ 0.022 \\
\cmidrule(lr){1-5}
Supervised & 0.990 $\pm$ 0.000 & 0.936 $\pm$ 0.003 & 0.979 $\pm$ 0.002 & 0.895 $\pm$ 0.003 \\
\bottomrule
\end{tabular}
\end{small}
\end{center}
\end{table}

\begin{table}
\caption{
Overall classification accuracy on the UCI test set with $\pi_+=0.2$ averaged over five random seeds, with mean and standard deviation (mean$\pm$std). The highest score among the compared methods, excluding supervised learning, is shown in bold.
}
\label{tab:uci_acc_0.2_all}
\begin{center}
\begin{small}
\begin{tabular}{lcccc}
\toprule
Method & Optdigits & Pendigits & Letter & PMU-UD \\
\midrule
SconfConfDiff-Unbiased & 0.917 $\pm$ 0.067 & 0.954 $\pm$ 0.041 & 0.746 $\pm$ 0.044 & 0.841 $\pm$ 0.058 \\
SconfConfDiff-ReLU & 0.955 $\pm$ 0.023 & 0.987 $\pm$ 0.003 & 0.934 $\pm$ 0.007 & 0.966 $\pm$ 0.007 \\
SconfConfDiff-ABS & \textbf{0.969 $\pm$ 0.005} & \textbf{0.991 $\pm$ 0.002} & \textbf{0.951 $\pm$ 0.004} & 0.975 $\pm$ 0.005 \\
\cmidrule(lr){1-5}
ConfDiff-Unbiased & 0.883 $\pm$ 0.053 & 0.940 $\pm$ 0.050 & 0.765 $\pm$ 0.051 & 0.822 $\pm$ 0.057 \\
Convex($0.2$)-Unbiased & 0.949 $\pm$ 0.015 & 0.963 $\pm$ 0.028 & 0.736 $\pm$ 0.099 & 0.867 $\pm$ 0.062 \\
Convex($0.5$)-Unbiased & 0.952 $\pm$ 0.015 & 0.971 $\pm$ 0.008 & 0.736 $\pm$ 0.100 & 0.918 $\pm$ 0.047 \\
Convex($0.8$)-Unbiased & 0.884 $\pm$ 0.076 & 0.938 $\pm$ 0.066 & 0.781 $\pm$ 0.027 & 0.857 $\pm$ 0.070 \\
Sconf-Unbiased & 0.874 $\pm$ 0.056 & 0.931 $\pm$ 0.052 & 0.754 $\pm$ 0.027 & 0.853 $\pm$ 0.052 \\
\cmidrule(lr){1-5}
ConfDiff-ReLU & 0.958 $\pm$ 0.011 & 0.982 $\pm$ 0.004 & 0.930 $\pm$ 0.005 & 0.977 $\pm$ 0.010 \\
Convex($\gamma=0.2$)-ReLU & 0.723 $\pm$ 0.069 & 0.982 $\pm$ 0.004 & 0.890 $\pm$ 0.015 & 0.776 $\pm$ 0.188 \\
Convex($\gamma=0.5$)-ReLU & 0.605 $\pm$ 0.356 & 0.959 $\pm$ 0.041 & 0.853 $\pm$ 0.019 & 0.242 $\pm$ 0.080 \\
Convex($\gamma=0.8$)-ReLU & 0.688 $\pm$ 0.321 & 0.945 $\pm$ 0.053 & 0.838 $\pm$ 0.047 & 0.475 $\pm$ 0.306 \\
Sconf-ReLU & 0.931 $\pm$ 0.016 & 0.968 $\pm$ 0.011 & 0.897 $\pm$ 0.007 & 0.900 $\pm$ 0.053 \\
\cmidrule(lr){1-5}
ConfDiff-ABS & 0.964 $\pm$ 0.010 & 0.988 $\pm$ 0.005 & 0.945 $\pm$ 0.004 & 0.978 $\pm$ 0.008 \\
Convex($\gamma=0.2$)-ABS & 0.969 $\pm$ 0.009 & 0.987 $\pm$ 0.004 & 0.947 $\pm$ 0.005 & \textbf{0.979 $\pm$ 0.006} \\
Convex($\gamma=0.5$)-ABS & 0.969 $\pm$ 0.005 & 0.987 $\pm$ 0.004 & 0.946 $\pm$ 0.004 & 0.979 $\pm$ 0.008 \\
Convex($\gamma=0.8$)-ABS & 0.966 $\pm$ 0.006 & 0.986 $\pm$ 0.004 & 0.941 $\pm$ 0.004 & 0.976 $\pm$ 0.004 \\
Sconf-ABS & 0.944 $\pm$ 0.014 & 0.978 $\pm$ 0.006 & 0.915 $\pm$ 0.006 & 0.954 $\pm$ 0.010 \\
\cmidrule(lr){1-5}
Supervised & 0.989 $\pm$ 0.003 & 0.997 $\pm$ 0.001 & 0.978 $\pm$ 0.004 & 0.994 $\pm$ 0.003 \\
\bottomrule
\end{tabular}
\end{small}
\end{center}
\end{table}

\begin{table}
\caption{
Overall classification accuracy on the benchmark test set with $\pi_+=0.5$ averaged over five random seeds, with mean and standard deviation (mean$\pm$std). The highest score among the compared methods, excluding supervised learning, is shown in bold.
}
\label{tab:benchmark_acc_0.5_all}
\begin{center}
\begin{small}
\begin{tabular}{lcccc}
\toprule
Method & MNIST & Kuzushiji & Fashion & CIFAR-10 \\
\midrule
SconfConfDiff-Unbiased & \textbf{0.977 $\pm$ 0.004} & \textbf{0.901 $\pm$ 0.005} & 0.963 $\pm$ 0.002 & 0.837 $\pm$ 0.004 \\
SconfConfDiff-ReLU & \textbf{0.977 $\pm$ 0.004} & \textbf{0.901 $\pm$ 0.005} & 0.963 $\pm$ 0.002 & 0.837 $\pm$ 0.004 \\
SconfConfDiff-ABS & \textbf{0.977 $\pm$ 0.004} & \textbf{0.901 $\pm$ 0.005} & 0.963 $\pm$ 0.002 & 0.837 $\pm$ 0.004 \\
\cmidrule(lr){1-5}
ConfDiff-Unbiased & 0.931 $\pm$ 0.030 & 0.795 $\pm$ 0.041 & 0.867 $\pm$ 0.090 & 0.738 $\pm$ 0.019 \\
\cmidrule(lr){1-5}
ConfDiff-ReLU & 0.946 $\pm$ 0.008 & 0.805 $\pm$ 0.012 & 0.960 $\pm$ 0.002 & 0.837 $\pm$ 0.003 \\
\cmidrule(lr){1-5}
ConfDiff-ABS & 0.965 $\pm$ 0.001 & 0.866 $\pm$ 0.004 & \textbf{0.968 $\pm$ 0.001} & \textbf{0.842 $\pm$ 0.002} \\
\cmidrule(lr){1-5}
Supervised & 0.987 $\pm$ 0.001 & 0.928 $\pm$ 0.002 & 0.976 $\pm$ 0.001 & 0.876 $\pm$ 0.003 \\
\bottomrule
\end{tabular}
\end{small}
\end{center}
\end{table}

\begin{table}
\caption{
Overall classification accuracy on the benchmark test set with $\pi_+=0.8$ averaged over five random seeds, with mean and standard deviation (mean$\pm$std). The highest score among the compared methods, excluding supervised learning, is shown in bold.
}
\label{tab:benchmark_acc_0.8_all}
\begin{center}
\begin{small}
\begin{tabular}{lcccc}
\toprule
Method & MNIST & Kuzushiji & Fashion & CIFAR-10 \\
\midrule
SconfConfDiff-Unbiased & 0.801 $\pm$ 0.064 & 0.796 $\pm$ 0.047 & 0.854 $\pm$ 0.071 & 0.872 $\pm$ 0.008 \\
SconfConfDiff-ReLU & 0.975 $\pm$ 0.003 & 0.887 $\pm$ 0.012 & 0.969 $\pm$ 0.003 & 0.850 $\pm$ 0.009 \\
SconfConfDiff-ABS & \textbf{0.985 $\pm$ 0.002} & \textbf{0.920 $\pm$ 0.004} & \textbf{0.976 $\pm$ 0.001} & 0.858 $\pm$ 0.061 \\
\cmidrule(lr){1-5}
ConfDiff-Unbiased & 0.778 $\pm$ 0.019 & 0.719 $\pm$ 0.083 & 0.867 $\pm$ 0.031 & 0.793 $\pm$ 0.012 \\
Convex($0.2$)-Unbiased & 0.834 $\pm$ 0.080 & 0.849 $\pm$ 0.036 & 0.832 $\pm$ 0.031 & 0.813 $\pm$ 0.014 \\
Convex($0.5$)-Unbiased & 0.942 $\pm$ 0.024 & 0.829 $\pm$ 0.038 & 0.879 $\pm$ 0.064 & 0.848 $\pm$ 0.031 \\
Convex($0.8$)-Unbiased & 0.826 $\pm$ 0.086 & 0.774 $\pm$ 0.063 & 0.874 $\pm$ 0.055 & 0.815 $\pm$ 0.016 \\
Sconf-Unbiased & 0.804 $\pm$ 0.051 & 0.766 $\pm$ 0.038 & 0.867 $\pm$ 0.101 & 0.796 $\pm$ 0.017 \\
\cmidrule(lr){1-5}
ConfDiff-ReLU & 0.972 $\pm$ 0.003 & 0.879 $\pm$ 0.006 & 0.970 $\pm$ 0.001 & 0.854 $\pm$ 0.003 \\
Convex($\gamma=0.2$)-ReLU & 0.739 $\pm$ 0.257 & 0.455 $\pm$ 0.132 & 0.916 $\pm$ 0.028 & 0.795 $\pm$ 0.032 \\
Convex($\gamma=0.5$)-ReLU & 0.397 $\pm$ 0.147 & 0.266 $\pm$ 0.064 & 0.802 $\pm$ 0.135 & 0.697 $\pm$ 0.030 \\
Convex($\gamma=0.8$)-ReLU & 0.558 $\pm$ 0.277 & 0.299 $\pm$ 0.051 & 0.857 $\pm$ 0.111 & 0.795 $\pm$ 0.028 \\
Sconf-ReLU & 0.957 $\pm$ 0.019 & 0.846 $\pm$ 0.014 & 0.950 $\pm$ 0.010 & 0.836 $\pm$ 0.009 \\
\cmidrule(lr){1-5}
ConfDiff-ABS & 0.983 $\pm$ 0.001 & 0.916 $\pm$ 0.005 & 0.973 $\pm$ 0.001 & 0.860 $\pm$ 0.007 \\
Convex($\gamma=0.2$)-ABS & 0.983 $\pm$ 0.001 & 0.918 $\pm$ 0.004 & 0.973 $\pm$ 0.001 & \textbf{0.880 $\pm$ 0.005} \\
Convex($\gamma=0.5$)-ABS & 0.983 $\pm$ 0.001 & 0.917 $\pm$ 0.005 & 0.971 $\pm$ 0.001 & 0.878 $\pm$ 0.008 \\
Convex($\gamma=0.8$)-ABS & 0.982 $\pm$ 0.003 & 0.914 $\pm$ 0.005 & 0.971 $\pm$ 0.002 & 0.874 $\pm$ 0.007 \\
Sconf-ABS & 0.973 $\pm$ 0.001 & 0.889 $\pm$ 0.004 & 0.966 $\pm$ 0.003 & 0.839 $\pm$ 0.018 \\
\cmidrule(lr){1-5}
Supervised & 0.992 $\pm$ 0.000 & 0.942 $\pm$ 0.003 & 0.980 $\pm$ 0.001 & 0.898 $\pm$ 0.003 \\
\bottomrule
\end{tabular}
\end{small}
\end{center}
\end{table}

\begin{table}
\caption{
Overall classification accuracy on the UCIk test set with $\pi_+=0.5$ averaged over five random seeds, with mean and standard deviation (mean$\pm$std). The highest score among the compared methods, excluding supervised learning, is shown in bold.
}
\label{tab:uci_acc_0.5_all}
\begin{center}
\begin{small}
\begin{tabular}{lcccc}
\toprule
Method & Optdigits & Pendigits & Letter & PMU-UD \\
\midrule
SconfConfDiff-Unbiased & \textbf{0.971 $\pm$ 0.008} & \textbf{0.992 $\pm$ 0.002} & \textbf{0.935 $\pm$ 0.006} & \textbf{0.986 $\pm$ 0.005} \\
SconfConfDiff-ReLU & \textbf{0.971 $\pm$ 0.008} & \textbf{0.992 $\pm$ 0.002} & \textbf{0.935 $\pm$ 0.006} & \textbf{0.986 $\pm$ 0.005} \\
SconfConfDiff-ABS & \textbf{0.971 $\pm$ 0.008} & \textbf{0.992 $\pm$ 0.002} & \textbf{0.935 $\pm$ 0.006} & \textbf{0.986 $\pm$ 0.005} \\
\cmidrule(lr){1-5}
ConfDiff-Unbiased & 0.924 $\pm$ 0.014 & 0.945 $\pm$ 0.024 & 0.701 $\pm$ 0.079 & 0.913 $\pm$ 0.060 \\
\cmidrule(lr){1-5}
ConfDiff-ReLU & 0.922 $\pm$ 0.007 & 0.977 $\pm$ 0.004 & 0.899 $\pm$ 0.008 & 0.952 $\pm$ 0.014 \\
\cmidrule(lr){1-5}
ConfDiff-ABS & 0.964 $\pm$ 0.007 & 0.989 $\pm$ 0.001 & 0.922 $\pm$ 0.007 & 0.982 $\pm$ 0.005 \\
\cmidrule(lr){1-5}
Supervised & 0.991 $\pm$ 0.003 & 0.996 $\pm$ 0.002 & 0.976 $\pm$ 0.002 & 0.993 $\pm$ 0.002 \\
\bottomrule
\end{tabular}
\end{small}
\end{center}
\end{table}

\begin{table}
\caption{
Overall classification accuracy on the UCI test set with $\pi_+=0.8$ averaged over five random seeds, with mean and standard deviation (mean$\pm$std). The highest score among the compared methods, excluding supervised learning, is shown in bold.
}
\label{tab:uci_acc_0.8_all}
\begin{center}
\begin{small}
\begin{tabular}{lcccc}
\toprule
Method & Optdigits & Pendigits & Letter & PMU-UD \\
\midrule
SconfConfDiff-Unbiased & 0.894 $\pm$ 0.059 & 0.927 $\pm$ 0.044 & 0.789 $\pm$ 0.027 & 0.818 $\pm$ 0.050 \\
SconfConfDiff-ReLU & 0.957 $\pm$ 0.012 & 0.985 $\pm$ 0.004 & 0.935 $\pm$ 0.004 & 0.951 $\pm$ 0.025 \\
SconfConfDiff-ABS & 0.963 $\pm$ 0.008 & \textbf{0.987 $\pm$ 0.005} & \textbf{0.948 $\pm$ 0.004} & 0.975 $\pm$ 0.005 \\
\cmidrule(lr){1-5}
ConfDiff-Unbiased & 0.867 $\pm$ 0.050 & 0.881 $\pm$ 0.097 & 0.726 $\pm$ 0.044 & 0.785 $\pm$ 0.050 \\
Convex($\gamma=0.2$)-Unbiased & 0.912 $\pm$ 0.058 & 0.958 $\pm$ 0.029 & 0.743 $\pm$ 0.037 & 0.824 $\pm$ 0.077 \\
Convex($\gamma=0.5$)-Unbiased & 0.946 $\pm$ 0.010 & 0.966 $\pm$ 0.017 & 0.765 $\pm$ 0.029 & 0.857 $\pm$ 0.078 \\
Convex($\gamma=0.8$)-Unbiased & 0.912 $\pm$ 0.022 & 0.913 $\pm$ 0.043 & 0.813 $\pm$ 0.017 & 0.823 $\pm$ 0.025 \\
Sconf-Unbiased & 0.876 $\pm$ 0.047 & 0.898 $\pm$ 0.063 & 0.784 $\pm$ 0.035 & 0.827 $\pm$ 0.032 \\
\cmidrule(lr){1-5}
ConfDiff-ReLU & 0.961 $\pm$ 0.011 & 0.980 $\pm$ 0.007 & 0.926 $\pm$ 0.006 & 0.971 $\pm$ 0.007 \\
Convex($\gamma=0.2$)-ReLU & 0.602 $\pm$ 0.177 & 0.981 $\pm$ 0.004 & 0.878 $\pm$ 0.032 & 0.577 $\pm$ 0.319 \\
Convex($\gamma=0.5$)-ReLU & 0.541 $\pm$ 0.352 & 0.976 $\pm$ 0.007 & 0.847 $\pm$ 0.035 & 0.346 $\pm$ 0.249 \\
Convex($\gamma=0.8$)-ReLU & 0.494 $\pm$ 0.333 & 0.972 $\pm$ 0.008 & 0.792 $\pm$ 0.101 & 0.329 $\pm$ 0.206 \\
Sconf-ReLU & 0.902 $\pm$ 0.021 & 0.966 $\pm$ 0.005 & 0.896 $\pm$ 0.011 & 0.896 $\pm$ 0.058 \\
\cmidrule(lr){1-5}
ConfDiff-ABS & 0.963 $\pm$ 0.005 & 0.983 $\pm$ 0.004 & 0.943 $\pm$ 0.004 & 0.972 $\pm$ 0.002 \\
Convex($\gamma=0.2$)-ABS & 0.967 $\pm$ 0.004 & 0.982 $\pm$ 0.005 & 0.945 $\pm$ 0.001 & 0.973 $\pm$ 0.004 \\
Convex($\gamma=0.5$)-ABS & \textbf{0.969 $\pm$ 0.009} & 0.980 $\pm$ 0.005 & 0.942 $\pm$ 0.002 & \textbf{0.976 $\pm$ 0.005} \\
Convex($\gamma=0.8$)-ABS & 0.966 $\pm$ 0.007 & 0.977 $\pm$ 0.007 & 0.937 $\pm$ 0.004 & 0.974 $\pm$ 0.005 \\
Sconf-ABS & 0.948 $\pm$ 0.012 & 0.966 $\pm$ 0.013 & 0.915 $\pm$ 0.004 & 0.940 $\pm$ 0.034 \\
\cmidrule(lr){1-5}
Supervised & 0.990 $\pm$ 0.004 & 0.997 $\pm$ 0.001 & 0.977 $\pm$ 0.004 & 0.995 $\pm$ 0.002 \\
\bottomrule
\end{tabular}
\end{small}
\end{center}
\end{table}

\begin{figure}
    \centering
    \includegraphics[width=0.8\linewidth]{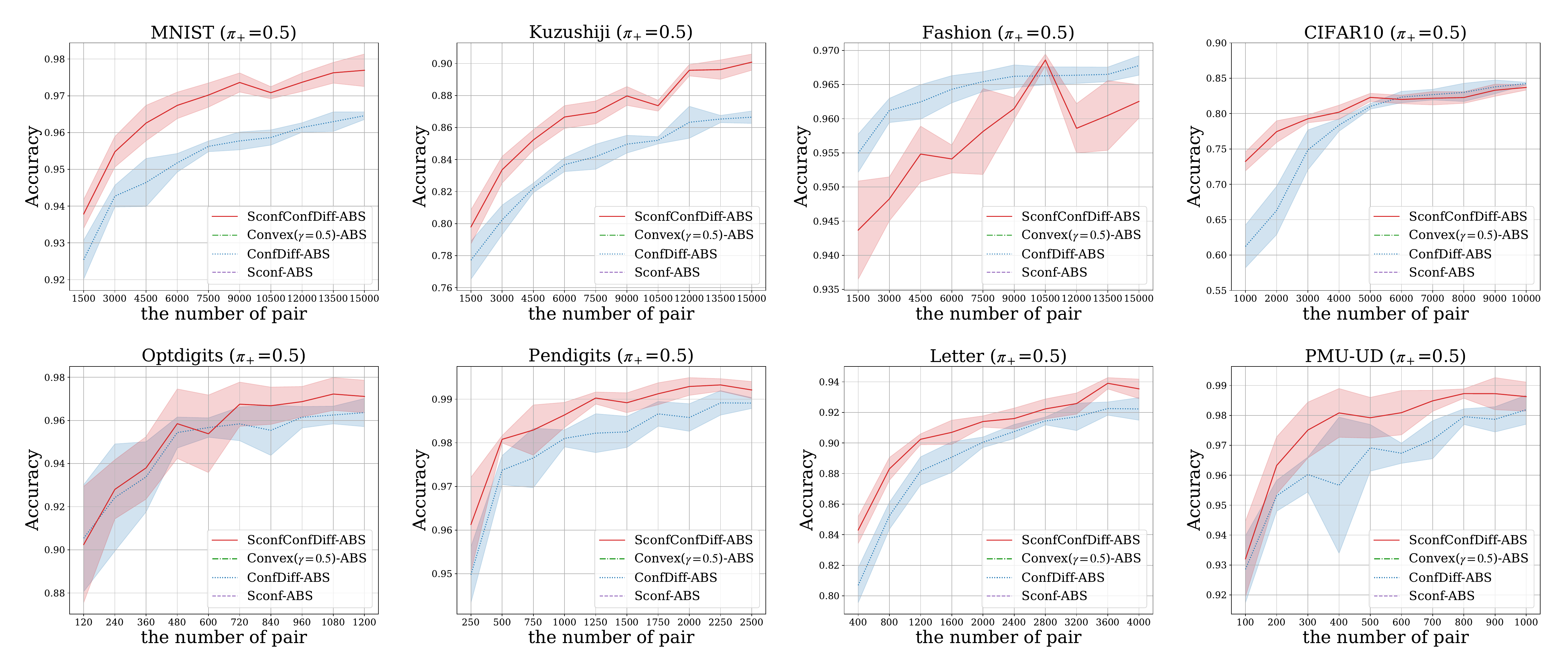}
    \caption{Classification performance of SconfConfDiff-ABS, SconfConfDiff-Convex($\gamma=0.5$)-ABS, Sconf-ABS, and ConfDiff-ABS with varying numbers of training samples ($\pi_+=0.5$).
    }
    \label{fig:num_0.5}
\end{figure}

\begin{figure}
    \centering
    \includegraphics[width=0.8\linewidth]{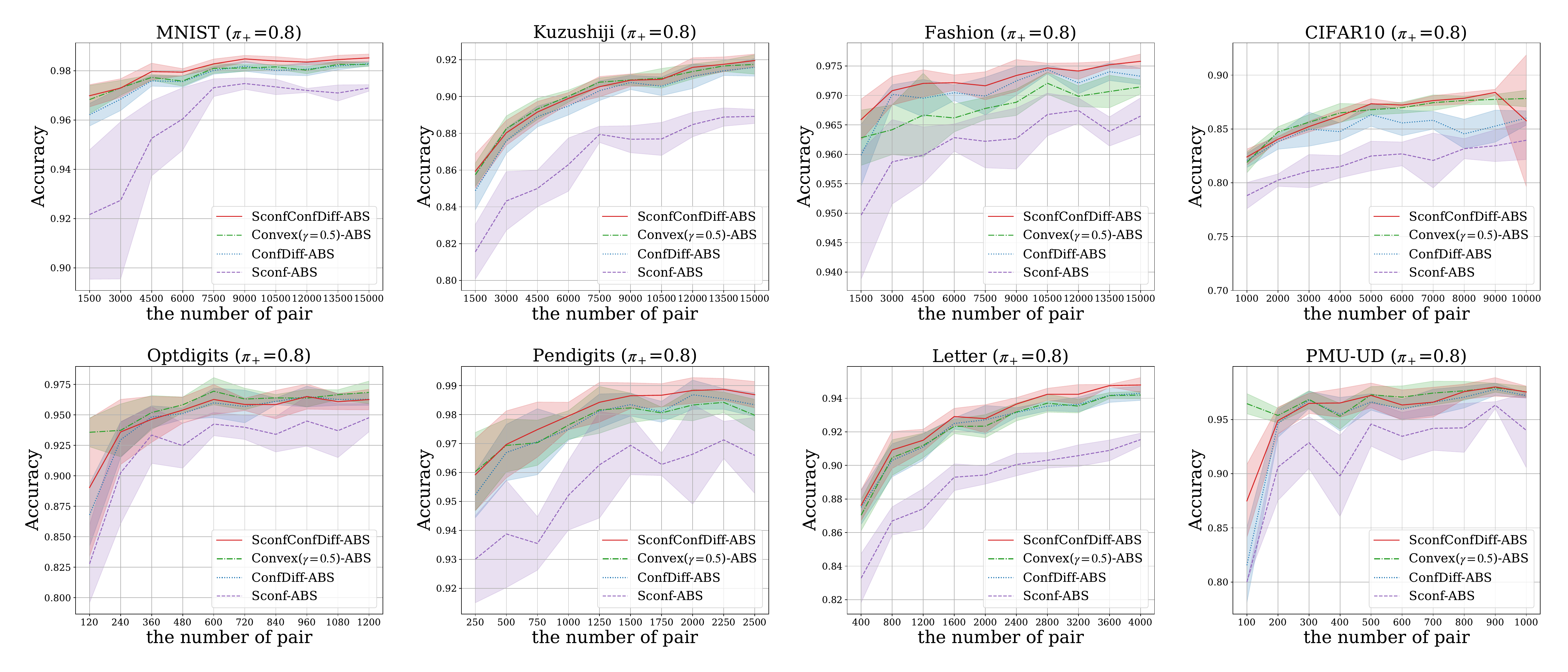}
    \caption{Classification performance of SconfConfDiff-ABS, SconfConfDiff-Convex($\gamma=0.5$)-ABS, Sconf-ABS, and ConfDiff-ABS with varying numbers of training samples ($\pi_+=0.8$).
    }
    \label{fig:num_0.8}
\end{figure}

\begin{figure}
    \centering
    \includegraphics[width=0.8\linewidth]{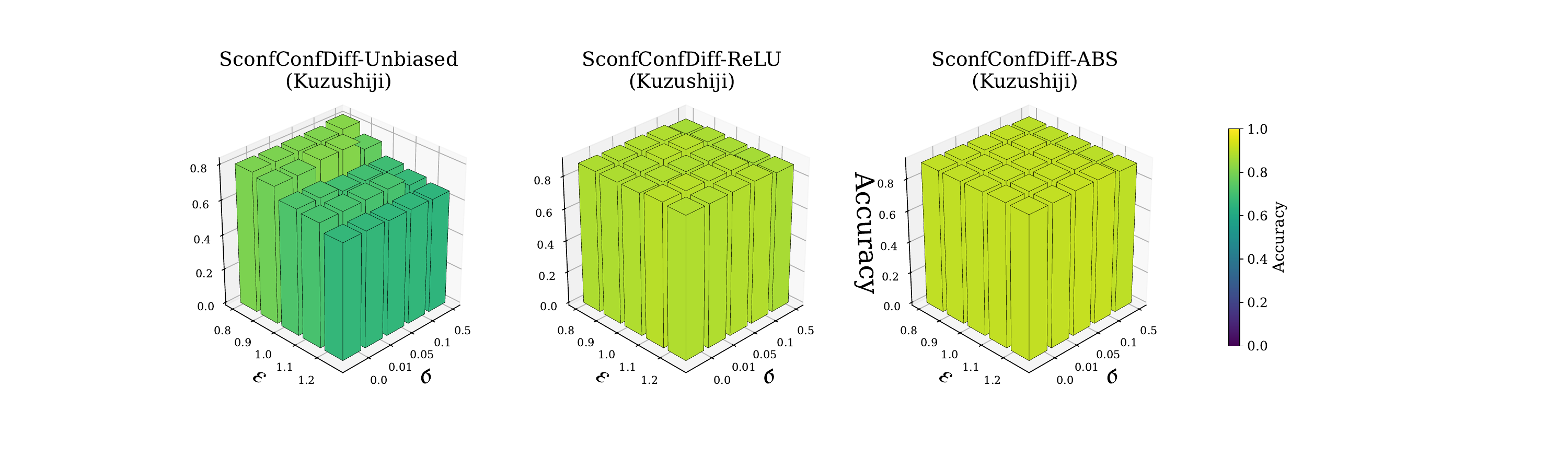}
    \caption{Classification performance on Kuzushiji-MNIST with $\pi_+=0.2$ given an noisy class prior probability, noisy similarity-confidence, and noisy confidence-difference. Both the height and color of the bars represent accuracy.}
    \label{fig:robust_kmnist}
\end{figure}

\begin{figure}
    \centering
    \includegraphics[width=0.8\linewidth]{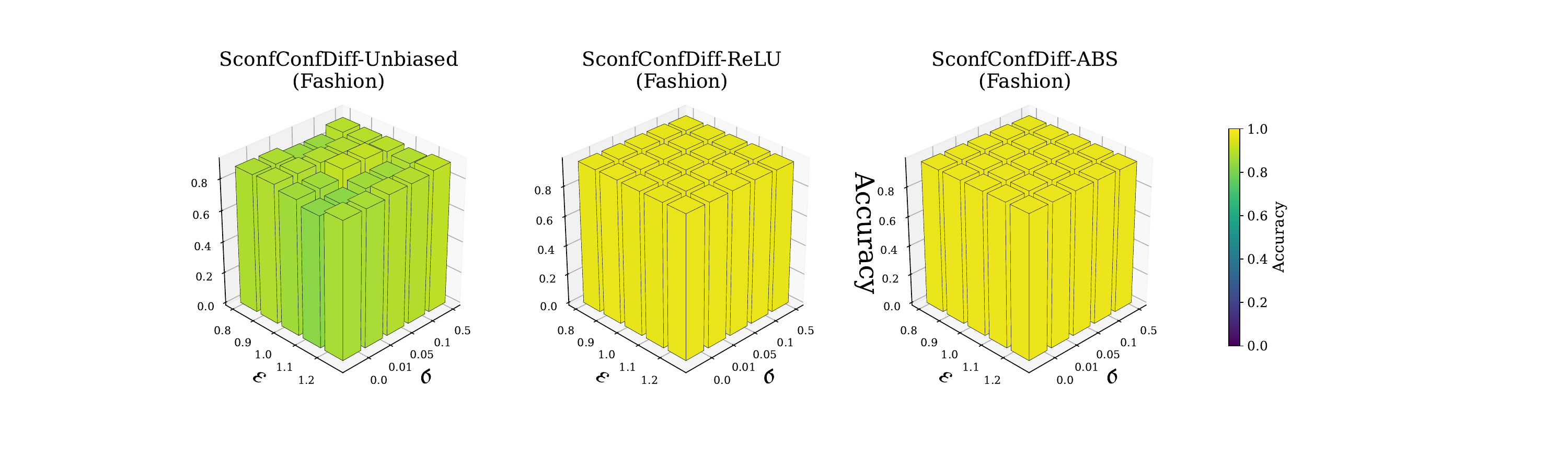}
    \caption{Classification performance on Fashion-MNIST with $\pi_+=0.2$ given an noisy class prior probability, noisy similarity-confidence, and noisy confidence-difference. Both the height and color of the bars represent accuracy.}
    \label{fig:robust_fashion}
\end{figure}

\begin{figure}
    \centering
    \includegraphics[width=0.8\linewidth]{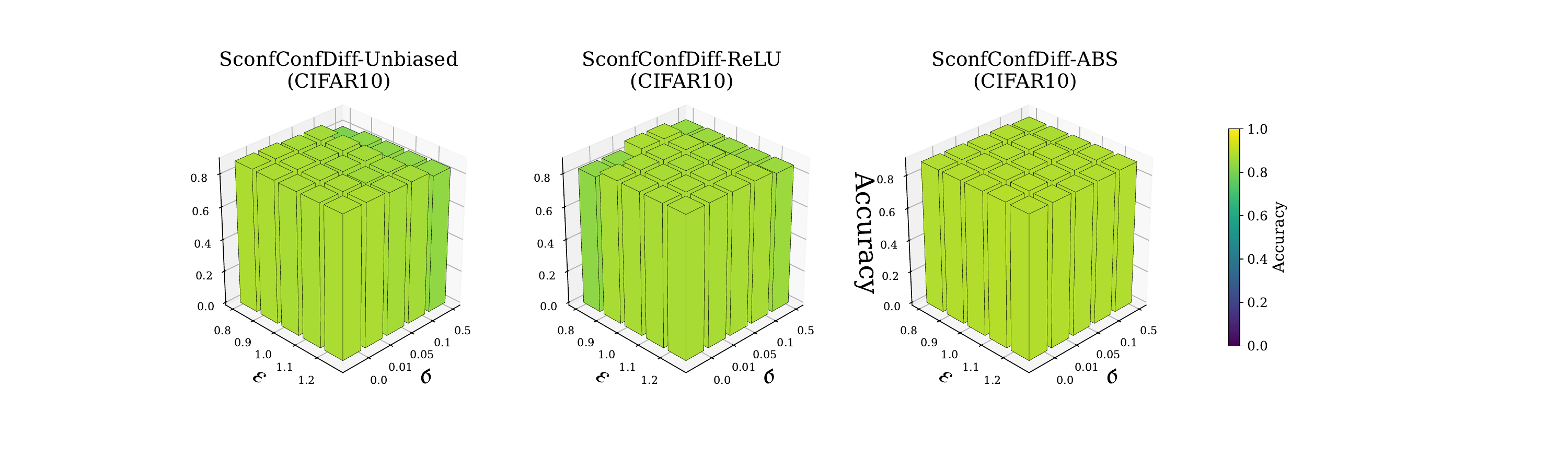}
    \caption{Classification performance on CIFAR-10 with $\pi_+=0.2$ given an noisy class prior probability, noisy similarity-confidence, and noisy confidence-difference. Both the height and color of the bars represent accuracy.}
    \label{fig:robust_CIFAR-10}
\end{figure}

\begin{figure}
    \centering
    \includegraphics[width=0.8\linewidth]{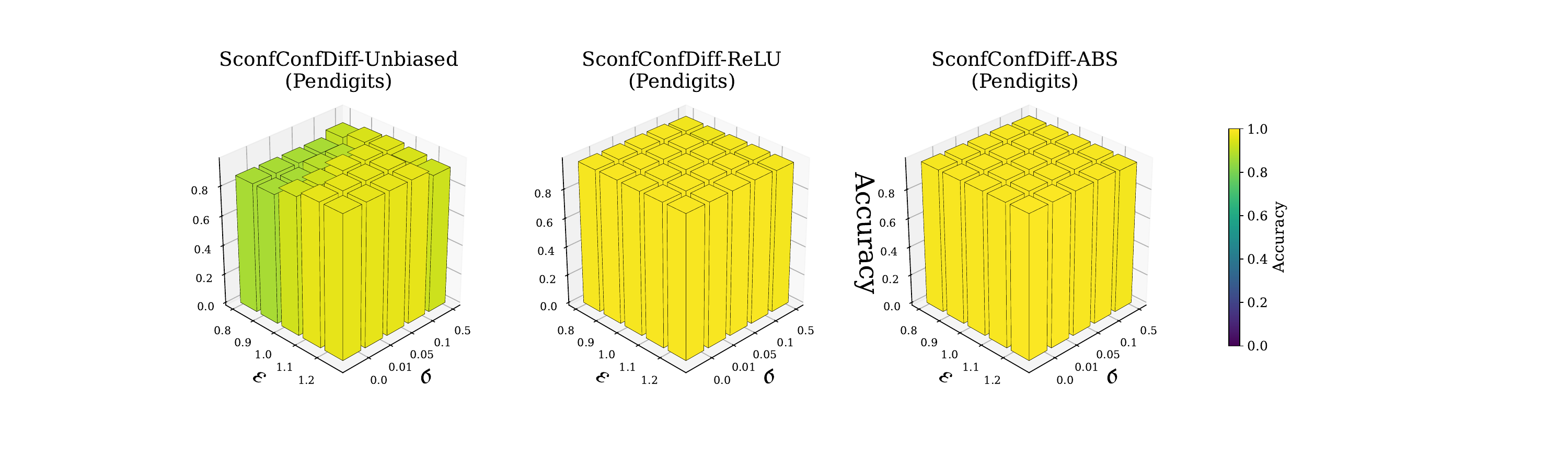}
    \caption{Classification performance on Pendigits with $\pi_+=0.2$ given an noisy class prior probability, noisy similarity-confidence, and noisy confidence-difference. Both the height and color of the bars represent accuracy.}
    \label{fig:robust_pendigits}
\end{figure}

\begin{figure}
    \centering
    \includegraphics[width=0.8\linewidth]{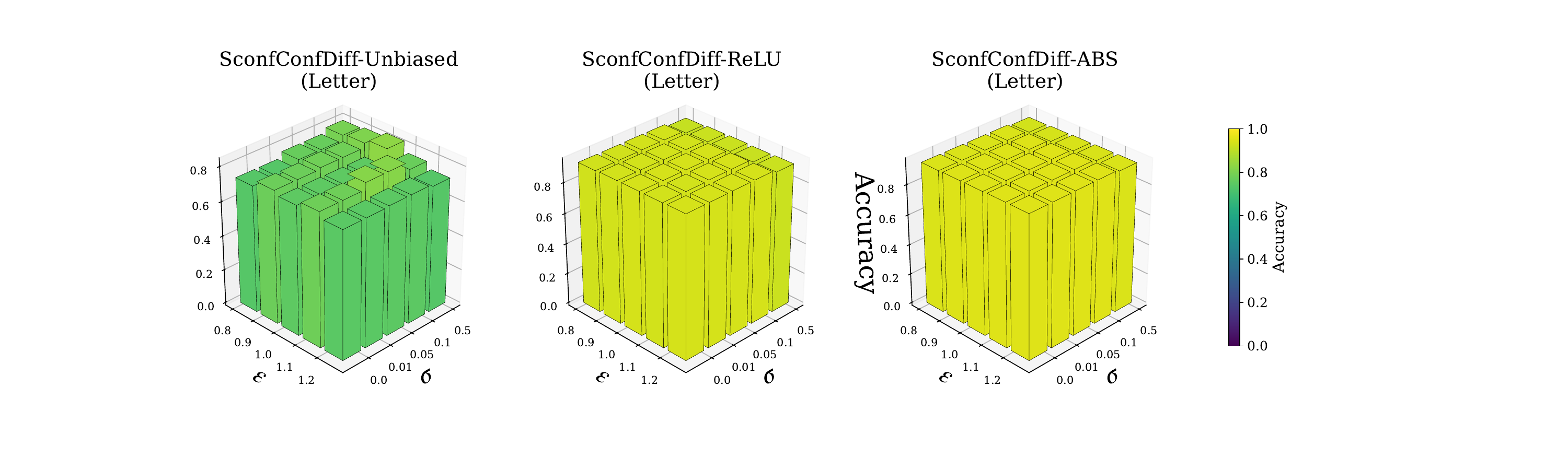}
    \caption{Classification performance on Letter with $\pi_+=0.2$ given an noisy class prior probability, noisy similarity-confidence, and noisy confidence-difference. Both the height and color of the bars represent accuracy.}
    \label{fig:robust_letter}
\end{figure}

\begin{figure}
    \centering
    \includegraphics[width=0.8\linewidth]{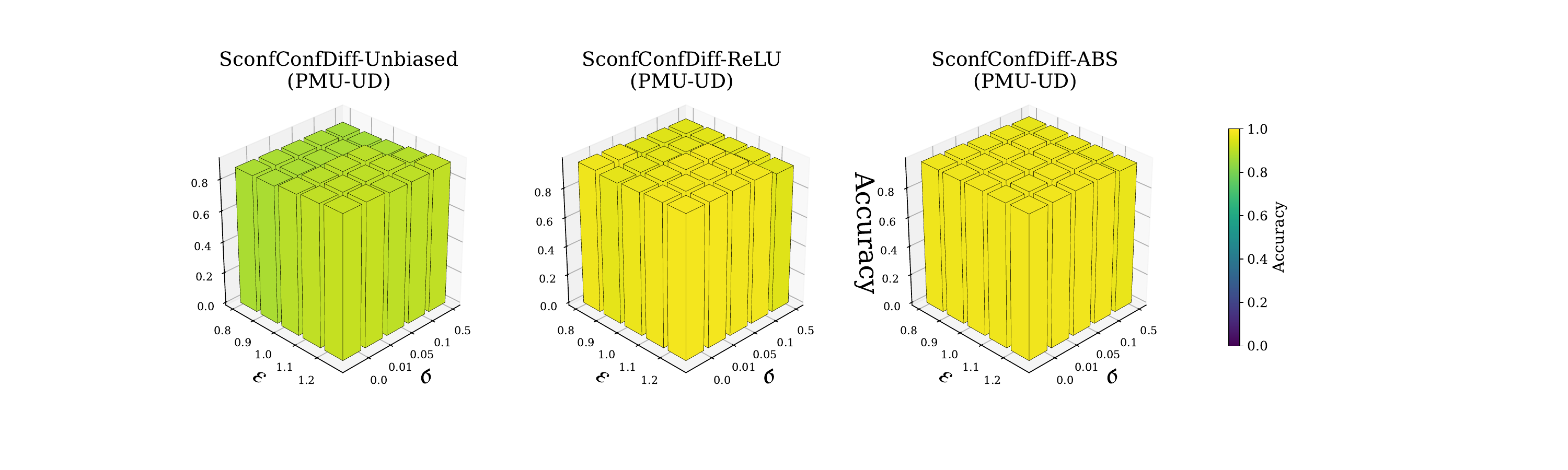}
    \caption{Classification performance on PMU-UD with $\pi_+=0.2$ given an noisy class prior probability, noisy similarity-confidence, and noisy confidence-difference. Both the height and color of the bars represent accuracy.}
    \label{fig:robust_pmu-ud}
\end{figure}

\begin{table}
\caption{The detail of accuracy (mean$\pm$std) on MNIST in Fig~\ref{fig:robust_result}.}
\label{tab:MNIST robust}
\begin{center}
\begin{small}
\begin{tabular}{lcccc}
\toprule
Dataset & ($\epsilon,\sigma$) & Unbiased & ReLU & ABS \\
\midrule
& (0.8,0.0) & 0.857 $\pm$ 0.041 & 0.966 $\pm$ 0.011 & 0.974 $\pm$ 0.003 \\
& (0.8,0.01) & 0.857 $\pm$ 0.041 & 0.966 $\pm$ 0.011 & 0.974 $\pm$ 0.003 \\
& (0.8,0.05) & 0.880 $\pm$ 0.034 & 0.970 $\pm$ 0.004 & 0.974 $\pm$ 0.002 \\
& (0.8,0.1) & 0.880 $\pm$ 0.034 & 0.970 $\pm$ 0.004 & 0.974 $\pm$ 0.002 \\
& (0.8,0.5) & 0.899 $\pm$ 0.028 & 0.959 $\pm$ 0.011 & 0.971 $\pm$ 0.003 \\
& (0.9,0.0) & 0.883 $\pm$ 0.028 & 0.975 $\pm$ 0.005 & 0.976 $\pm$ 0.003 \\
& (0.9,0.01) & 0.883 $\pm$ 0.028 & 0.975 $\pm$ 0.005 & 0.976 $\pm$ 0.003 \\
& (0.9,0.05) & 0.873 $\pm$ 0.051 & 0.969 $\pm$ 0.004 & 0.975 $\pm$ 0.004 \\
& (0.9,0.1) & 0.873 $\pm$ 0.051 & 0.969 $\pm$ 0.004 & 0.975 $\pm$ 0.004 \\
& (0.9,0.5) & 0.865 $\pm$ 0.083 & 0.961 $\pm$ 0.010 & 0.971 $\pm$ 0.001 \\
& (1.0,0.0) & 0.900 $\pm$ 0.035 & 0.975 $\pm$ 0.004 & 0.978 $\pm$ 0.003 \\
& (1.0,0.01) & 0.900 $\pm$ 0.035 & 0.975 $\pm$ 0.004 & 0.978 $\pm$ 0.003 \\
MNIST & (1.0,0.05) & 0.881 $\pm$ 0.050 & 0.973 $\pm$ 0.002 & 0.978 $\pm$ 0.002 \\
& (1.0,0.1) & 0.881 $\pm$ 0.050 & 0.973 $\pm$ 0.002 & 0.978 $\pm$ 0.002 \\
& (1.0,0.5) & 0.830 $\pm$ 0.048 & 0.963 $\pm$ 0.007 & 0.974 $\pm$ 0.002 \\
& (1.1,0.0) & 0.842 $\pm$ 0.031 & 0.974 $\pm$ 0.006 & 0.981 $\pm$ 0.003 \\
& (1.1,0.01) & 0.842 $\pm$ 0.031 & 0.974 $\pm$ 0.006 & 0.981 $\pm$ 0.003 \\
& (1.1,0.05) & 0.849 $\pm$ 0.043 & 0.974 $\pm$ 0.003 & 0.979 $\pm$ 0.003 \\
& (1.1,0.1) & 0.849 $\pm$ 0.043 & 0.974 $\pm$ 0.003 & 0.979 $\pm$ 0.003 \\
& (1.1,0.5) & 0.822 $\pm$ 0.048 & 0.962 $\pm$ 0.008 & 0.975 $\pm$ 0.004 \\
& (1.2,0.0) & 0.819 $\pm$ 0.030 & 0.975 $\pm$ 0.003 & 0.982 $\pm$ 0.001 \\
& (1.2,0.01) & 0.819 $\pm$ 0.030 & 0.975 $\pm$ 0.003 & 0.982 $\pm$ 0.001 \\
& (1.2,0.05) & 0.825 $\pm$ 0.038 & 0.974 $\pm$ 0.002 & 0.980 $\pm$ 0.002 \\
& (1.2,0.1) & 0.825 $\pm$ 0.038 & 0.974 $\pm$ 0.002 & 0.980 $\pm$ 0.002 \\
& (1.2,0.5) & 0.829 $\pm$ 0.027 & 0.965 $\pm$ 0.003 & 0.977 $\pm$ 0.003 \\
\bottomrule
\end{tabular}
\end{small}
\end{center}
\end{table}

\begin{table}
\caption{The detail of accuracy (mean$\pm$std) in Fig~\ref{fig:robust_kmnist}.}
\label{tab:KMNIST robust}
\begin{center}
\begin{small}
\begin{tabular}{lcccc}
\toprule
Dataset & ($\epsilon,\sigma$) & Unbiased & ReLU & ABS \\
\midrule
& (0.8,0.0) & 0.803 $\pm$ 0.032 & 0.879 $\pm$ 0.011 & 0.903 $\pm$ 0.005 \\
& (0.8,0.01) & 0.803 $\pm$ 0.032 & 0.879 $\pm$ 0.011 & 0.903 $\pm$ 0.005 \\
& (0.8,0.05) & 0.805 $\pm$ 0.062 & 0.882 $\pm$ 0.010 & 0.905 $\pm$ 0.006 \\
& (0.8,0.1) & 0.805 $\pm$ 0.062 & 0.882 $\pm$ 0.010 & 0.905 $\pm$ 0.006 \\
& (0.8,0.5) & 0.819 $\pm$ 0.020 & 0.864 $\pm$ 0.007 & 0.898 $\pm$ 0.009 \\
& (0.9,0.0) & 0.784 $\pm$ 0.050 & 0.878 $\pm$ 0.007 & 0.903 $\pm$ 0.006 \\
& (0.9,0.01) & 0.784 $\pm$ 0.050 & 0.878 $\pm$ 0.007 & 0.903 $\pm$ 0.006 \\
& (0.9,0.05) & 0.813 $\pm$ 0.011 & 0.888 $\pm$ 0.007 & 0.908 $\pm$ 0.004 \\
& (0.9,0.1) & 0.813 $\pm$ 0.011 & 0.888 $\pm$ 0.007 & 0.908 $\pm$ 0.004 \\
& (0.9,0.5) & 0.760 $\pm$ 0.062 & 0.873 $\pm$ 0.014 & 0.897 $\pm$ 0.005 \\
& (1.0,0.0) & 0.720 $\pm$ 0.028 & 0.879 $\pm$ 0.009 & 0.906 $\pm$ 0.004 \\
& (1.0,0.01) & 0.720 $\pm$ 0.028 & 0.879 $\pm$ 0.009 & 0.906 $\pm$ 0.004 \\
KMNIST & (1.0,0.05) & 0.698 $\pm$ 0.024 & 0.878 $\pm$ 0.013 & 0.908 $\pm$ 0.006 \\
& (1.0,0.1) & 0.698 $\pm$ 0.024 & 0.878 $\pm$ 0.013 & 0.908 $\pm$ 0.006 \\
& (1.0,0.5) & 0.690 $\pm$ 0.065 & 0.869 $\pm$ 0.014 & 0.898 $\pm$ 0.009 \\
& (1.1,0.0) & 0.707 $\pm$ 0.076 & 0.892 $\pm$ 0.009 & 0.909 $\pm$ 0.003 \\
& (1.1,0.01) & 0.707 $\pm$ 0.076 & 0.892 $\pm$ 0.009 & 0.909 $\pm$ 0.003 \\
& (1.1,0.05) & 0.709 $\pm$ 0.048 & 0.884 $\pm$ 0.005 & 0.910 $\pm$ 0.008 \\
& (1.1,0.1) & 0.709 $\pm$ 0.048 & 0.884 $\pm$ 0.005 & 0.910 $\pm$ 0.008 \\
& (1.1,0.5) & 0.672 $\pm$ 0.044 & 0.867 $\pm$ 0.008 & 0.897 $\pm$ 0.009 \\
& (1.2,0.0) & 0.663 $\pm$ 0.063 & 0.882 $\pm$ 0.013 & 0.906 $\pm$ 0.005 \\
& (1.2,0.01) & 0.663 $\pm$ 0.063 & 0.882 $\pm$ 0.013 & 0.906 $\pm$ 0.005 \\
& (1.2,0.05) & 0.657 $\pm$ 0.031 & 0.884 $\pm$ 0.009 & 0.913 $\pm$ 0.007 \\
& (1.2,0.1) & 0.657 $\pm$ 0.031 & 0.884 $\pm$ 0.009 & 0.913 $\pm$ 0.007 \\
& (1.2,0.5) & 0.652 $\pm$ 0.030 & 0.871 $\pm$ 0.008 & 0.899 $\pm$ 0.010 \\
\bottomrule
\end{tabular}
\end{small}
\end{center}
\end{table}

\begin{table}
\caption{The detail of accuracy (mean$\pm$std) in Fig~\ref{fig:robust_fashion}.}
\label{tab:Fashion robust}
\begin{center}
\begin{small}
\begin{tabular}{lcccc}
\toprule
Dataset & ($\epsilon,\sigma$) & Unbiased & ReLU & ABS \\
\midrule
& (0.8,0.0) & 0.877 $\pm$ 0.017 & 0.964 $\pm$ 0.004 & 0.967 $\pm$ 0.002 \\
& (0.8,0.01) & 0.877 $\pm$ 0.017 & 0.964 $\pm$ 0.004 & 0.967 $\pm$ 0.002 \\
& (0.8,0.05) & 0.844 $\pm$ 0.044 & 0.965 $\pm$ 0.005 & 0.967 $\pm$ 0.002 \\
& (0.8,0.1) & 0.844 $\pm$ 0.044 & 0.965 $\pm$ 0.005 & 0.967 $\pm$ 0.002 \\
& (0.8,0.5) & 0.890 $\pm$ 0.037 & 0.958 $\pm$ 0.008 & 0.964 $\pm$ 0.004 \\
& (0.9,0.0) & 0.883 $\pm$ 0.010 & 0.965 $\pm$ 0.001 & 0.969 $\pm$ 0.003 \\
& (0.9,0.01) & 0.883 $\pm$ 0.010 & 0.965 $\pm$ 0.001 & 0.969 $\pm$ 0.003 \\
& (0.9,0.05) & 0.886 $\pm$ 0.043 & 0.963 $\pm$ 0.004 & 0.969 $\pm$ 0.003 \\
& (0.9,0.1) & 0.886 $\pm$ 0.043 & 0.963 $\pm$ 0.004 & 0.969 $\pm$ 0.003 \\
& (0.9,0.5) & 0.887 $\pm$ 0.040 & 0.960 $\pm$ 0.002 & 0.966 $\pm$ 0.005 \\
& (1.0,0.0) & 0.857 $\pm$ 0.042 & 0.963 $\pm$ 0.002 & 0.969 $\pm$ 0.002 \\
& (1.0,0.01) & 0.857 $\pm$ 0.042 & 0.963 $\pm$ 0.002 & 0.969 $\pm$ 0.002 \\
Fashion & (1.0,0.05) & 0.910 $\pm$ 0.029 & 0.962 $\pm$ 0.004 & 0.970 $\pm$ 0.003 \\
& (1.0,0.1) & 0.910 $\pm$ 0.029 & 0.962 $\pm$ 0.004 & 0.970 $\pm$ 0.003 \\
& (1.0,0.5) & 0.891 $\pm$ 0.054 & 0.960 $\pm$ 0.003 & 0.968 $\pm$ 0.003 \\
& (1.1,0.0) & 0.827 $\pm$ 0.063 & 0.964 $\pm$ 0.003 & 0.971 $\pm$ 0.002 \\
& (1.1,0.01) & 0.827 $\pm$ 0.063 & 0.964 $\pm$ 0.003 & 0.971 $\pm$ 0.002 \\
& (1.1,0.05) & 0.857 $\pm$ 0.035 & 0.966 $\pm$ 0.003 & 0.971 $\pm$ 0.001 \\
& (1.1,0.1) & 0.857 $\pm$ 0.035 & 0.966 $\pm$ 0.003 & 0.971 $\pm$ 0.001 \\
& (1.1,0.5) & 0.881 $\pm$ 0.034 & 0.962 $\pm$ 0.003 & 0.967 $\pm$ 0.002 \\
& (1.2,0.0) & 0.869 $\pm$ 0.029 & 0.966 $\pm$ 0.002 & 0.972 $\pm$ 0.001 \\
& (1.2,0.01) & 0.869 $\pm$ 0.029 & 0.966 $\pm$ 0.002 & 0.972 $\pm$ 0.001 \\
& (1.2,0.05) & 0.886 $\pm$ 0.011 & 0.967 $\pm$ 0.002 & 0.972 $\pm$ 0.003 \\
& (1.2,0.1) & 0.886 $\pm$ 0.011 & 0.967 $\pm$ 0.002 & 0.972 $\pm$ 0.003 \\
& (1.2,0.5) & 0.902 $\pm$ 0.019 & 0.962 $\pm$ 0.003 & 0.969 $\pm$ 0.002 \\
\bottomrule
\end{tabular}
\end{small}
\end{center}
\end{table}

\begin{table}
\caption{The detail of accuracy (mean$\pm$std) in Fig~\ref{fig:robust_CIFAR-10}.}
\label{tab:CIFAR10 robust}
\begin{center}
\begin{small}
\begin{tabular}{lcccc}
\toprule
Dataset & ($\epsilon,\sigma$) & Unbiased & ReLU & ABS \\
\midrule
& (0.8,0.0) & 0.875 $\pm$ 0.003 & 0.829 $\pm$ 0.040 & 0.883 $\pm$ 0.005 \\
& (0.8,0.01) & 0.875 $\pm$ 0.003 & 0.829 $\pm$ 0.040 & 0.883 $\pm$ 0.005 \\
& (0.8,0.05) & 0.865 $\pm$ 0.008 & 0.874 $\pm$ 0.014 & 0.882 $\pm$ 0.007 \\
& (0.8,0.1) & 0.865 $\pm$ 0.008 & 0.874 $\pm$ 0.014 & 0.882 $\pm$ 0.007 \\
& (0.8,0.5) & 0.801 $\pm$ 0.038 & 0.843 $\pm$ 0.007 & 0.871 $\pm$ 0.004 \\
& (0.9,0.0) & 0.873 $\pm$ 0.007 & 0.869 $\pm$ 0.012 & 0.884 $\pm$ 0.005 \\
& (0.9,0.01) & 0.873 $\pm$ 0.007 & 0.869 $\pm$ 0.012 & 0.884 $\pm$ 0.005 \\
& (0.9,0.05) & 0.864 $\pm$ 0.008 & 0.869 $\pm$ 0.013 & 0.882 $\pm$ 0.004 \\
& (0.9,0.1) & 0.864 $\pm$ 0.008 & 0.869 $\pm$ 0.013 & 0.882 $\pm$ 0.004 \\
& (0.9,0.5) & 0.829 $\pm$ 0.014 & 0.848 $\pm$ 0.006 & 0.872 $\pm$ 0.003 \\
& (1.0,0.0) & 0.871 $\pm$ 0.007 & 0.869 $\pm$ 0.010 & 0.885 $\pm$ 0.004 \\
& (1.0,0.01) & 0.871 $\pm$ 0.007 & 0.869 $\pm$ 0.010 & 0.885 $\pm$ 0.004 \\
CIFAR10 & (1.0,0.05) & 0.862 $\pm$ 0.009 & 0.866 $\pm$ 0.010 & 0.883 $\pm$ 0.005 \\
& (1.0,0.1) & 0.862 $\pm$ 0.009 & 0.866 $\pm$ 0.010 & 0.883 $\pm$ 0.005 \\
& (1.0,0.5) & 0.829 $\pm$ 0.008 & 0.847 $\pm$ 0.007 & 0.876 $\pm$ 0.004 \\
& (1.1,0.0) & 0.871 $\pm$ 0.009 & 0.868 $\pm$ 0.007 & 0.887 $\pm$ 0.004 \\
& (1.1,0.01) & 0.871 $\pm$ 0.009 & 0.868 $\pm$ 0.007 & 0.887 $\pm$ 0.004 \\
& (1.1,0.05) & 0.856 $\pm$ 0.015 & 0.869 $\pm$ 0.009 & 0.880 $\pm$ 0.001 \\
& (1.1,0.1) & 0.856 $\pm$ 0.015 & 0.869 $\pm$ 0.009 & 0.880 $\pm$ 0.001 \\
& (1.1,0.5) & 0.829 $\pm$ 0.016 & 0.847 $\pm$ 0.011 & 0.879 $\pm$ 0.002 \\
& (1.2,0.0) & 0.873 $\pm$ 0.010 & 0.870 $\pm$ 0.010 & 0.884 $\pm$ 0.004 \\
& (1.2,0.01) & 0.873 $\pm$ 0.010 & 0.870 $\pm$ 0.010 & 0.884 $\pm$ 0.004 \\
& (1.2,0.05) & 0.864 $\pm$ 0.005 & 0.868 $\pm$ 0.010 & 0.880 $\pm$ 0.002 \\
& (1.2,0.1) & 0.864 $\pm$ 0.005 & 0.868 $\pm$ 0.010 & 0.880 $\pm$ 0.002 \\
& (1.2,0.5) & 0.835 $\pm$ 0.006 & 0.850 $\pm$ 0.005 & 0.882 $\pm$ 0.004 \\
\bottomrule
\end{tabular}
\end{small}
\end{center}
\end{table}

\begin{table}
\caption{The detail of accuracy (mean$\pm$std) Optdigits in Fig~\ref{fig:robust_result}.}
\label{tab:Optdigits robust}
\begin{center}
\begin{small}
\begin{tabular}{lcccc}
\toprule
Dataset & ($\epsilon,\sigma$) & Unbiased & ReLU & ABS \\
\midrule
& (0.8,0.0) & 0.917 $\pm$ 0.042 & 0.950 $\pm$ 0.019 & 0.961 $\pm$ 0.006 \\
& (0.8,0.01) & 0.917 $\pm$ 0.042 & 0.950 $\pm$ 0.019 & 0.961 $\pm$ 0.006 \\
& (0.8,0.05) & 0.914 $\pm$ 0.031 & 0.953 $\pm$ 0.005 & 0.961 $\pm$ 0.007 \\
& (0.8,0.1) & 0.914 $\pm$ 0.031 & 0.953 $\pm$ 0.005 & 0.961 $\pm$ 0.007 \\
& (0.8,0.5) & 0.875 $\pm$ 0.044 & 0.933 $\pm$ 0.020 & 0.942 $\pm$ 0.018 \\
& (0.9,0.0) & 0.923 $\pm$ 0.048 & 0.949 $\pm$ 0.015 & 0.966 $\pm$ 0.007 \\
& (0.9,0.01) & 0.923 $\pm$ 0.048 & 0.949 $\pm$ 0.015 & 0.966 $\pm$ 0.007 \\
& (0.9,0.05) & 0.918 $\pm$ 0.029 & 0.946 $\pm$ 0.024 & 0.962 $\pm$ 0.011 \\
& (0.9,0.1) & 0.918 $\pm$ 0.029 & 0.946 $\pm$ 0.024 & 0.962 $\pm$ 0.011 \\
& (0.9,0.5) & 0.877 $\pm$ 0.046 & 0.917 $\pm$ 0.031 & 0.945 $\pm$ 0.014 \\
& (1.0,0.0) & 0.925 $\pm$ 0.052 & 0.954 $\pm$ 0.017 & 0.965 $\pm$ 0.007 \\
& (1.0,0.01) & 0.925 $\pm$ 0.052 & 0.954 $\pm$ 0.017 & 0.965 $\pm$ 0.007 \\
Optdigits & (1.0,0.05) & 0.886 $\pm$ 0.072 & 0.959 $\pm$ 0.012 & 0.970 $\pm$ 0.006 \\
& (1.0,0.1) & 0.886 $\pm$ 0.072 & 0.959 $\pm$ 0.012 & 0.970 $\pm$ 0.006 \\
& (1.0,0.5) & 0.872 $\pm$ 0.052 & 0.927 $\pm$ 0.024 & 0.949 $\pm$ 0.016 \\
& (1.1,0.0) & 0.921 $\pm$ 0.058 & 0.957 $\pm$ 0.012 & 0.970 $\pm$ 0.006 \\
& (1.1,0.01) & 0.921 $\pm$ 0.058 & 0.957 $\pm$ 0.012 & 0.970 $\pm$ 0.006 \\
& (1.1,0.05) & 0.908 $\pm$ 0.055 & 0.955 $\pm$ 0.020 & 0.971 $\pm$ 0.007 \\
& (1.1,0.1) & 0.908 $\pm$ 0.055 & 0.955 $\pm$ 0.020 & 0.971 $\pm$ 0.007 \\
& (1.1,0.5) & 0.870 $\pm$ 0.052 & 0.947 $\pm$ 0.019 & 0.953 $\pm$ 0.012 \\
& (1.2,0.0) & 0.910 $\pm$ 0.057 & 0.970 $\pm$ 0.008 & 0.974 $\pm$ 0.007 \\
& (1.2,0.01) & 0.910 $\pm$ 0.057 & 0.970 $\pm$ 0.008 & 0.974 $\pm$ 0.007 \\
& (1.2,0.05) & 0.919 $\pm$ 0.058 & 0.964 $\pm$ 0.009 & 0.972 $\pm$ 0.006 \\
& (1.2,0.1) & 0.919 $\pm$ 0.058 & 0.964 $\pm$ 0.009 & 0.972 $\pm$ 0.006 \\
& (1.2,0.5) & 0.892 $\pm$ 0.051 & 0.944 $\pm$ 0.013 & 0.953 $\pm$ 0.013 \\
\bottomrule
\end{tabular}
\end{small}
\end{center}
\end{table}

\begin{table}
\caption{The detail of accuracy (mean$\pm$std) on in Fig~\ref{fig:robust_pendigits}.}
\label{tab:Pendigits robust}
\begin{center}
\begin{small}
\begin{tabular}{lcccc}
\toprule
Dataset & ($\epsilon,\sigma$) & Unbiased & ReLU & ABS \\
\midrule
& (0.8,0.0) & 0.870 $\pm$ 0.086 & 0.986 $\pm$ 0.004 & 0.987 $\pm$ 0.002 \\
& (0.8,0.01) & 0.870 $\pm$ 0.086 & 0.986 $\pm$ 0.004 & 0.987 $\pm$ 0.002 \\
& (0.8,0.05) & 0.870 $\pm$ 0.076 & 0.987 $\pm$ 0.005 & 0.988 $\pm$ 0.004 \\
& (0.8,0.1) & 0.870 $\pm$ 0.076 & 0.987 $\pm$ 0.005 & 0.988 $\pm$ 0.004 \\
& (0.8,0.5) & 0.908 $\pm$ 0.070 & 0.977 $\pm$ 0.008 & 0.983 $\pm$ 0.005 \\
& (0.9,0.0) & 0.870 $\pm$ 0.100 & 0.988 $\pm$ 0.003 & 0.988 $\pm$ 0.003 \\
& (0.9,0.01) & 0.870 $\pm$ 0.100 & 0.988 $\pm$ 0.003 & 0.988 $\pm$ 0.003 \\
& (0.9,0.05) & 0.894 $\pm$ 0.053 & 0.987 $\pm$ 0.004 & 0.989 $\pm$ 0.003 \\
& (0.9,0.1) & 0.894 $\pm$ 0.053 & 0.987 $\pm$ 0.004 & 0.989 $\pm$ 0.003 \\
& (0.9,0.5) & 0.941 $\pm$ 0.020 & 0.976 $\pm$ 0.007 & 0.986 $\pm$ 0.004 \\
& (1.0,0.0) & 0.927 $\pm$ 0.081 & 0.987 $\pm$ 0.003 & 0.991 $\pm$ 0.003 \\
& (1.0,0.01) & 0.927 $\pm$ 0.081 & 0.987 $\pm$ 0.003 & 0.991 $\pm$ 0.003 \\
Pendigits & (1.0,0.05) & 0.954 $\pm$ 0.038 & 0.987 $\pm$ 0.004 & 0.991 $\pm$ 0.003 \\
& (1.0,0.1) & 0.954 $\pm$ 0.038 & 0.987 $\pm$ 0.004 & 0.991 $\pm$ 0.003 \\
& (1.0,0.5) & 0.951 $\pm$ 0.026 & 0.981 $\pm$ 0.007 & 0.986 $\pm$ 0.004 \\
& (1.1,0.0) & 0.964 $\pm$ 0.042 & 0.989 $\pm$ 0.003 & 0.990 $\pm$ 0.002 \\
& (1.1,0.01) & 0.964 $\pm$ 0.042 & 0.989 $\pm$ 0.003 & 0.990 $\pm$ 0.002 \\
& (1.1,0.05) & 0.959 $\pm$ 0.034 & 0.989 $\pm$ 0.003 & 0.991 $\pm$ 0.003 \\
& (1.1,0.1) & 0.959 $\pm$ 0.034 & 0.989 $\pm$ 0.003 & 0.991 $\pm$ 0.003 \\
& (1.1,0.5) & 0.938 $\pm$ 0.027 & 0.982 $\pm$ 0.008 & 0.986 $\pm$ 0.004 \\
& (1.2,0.0) & 0.964 $\pm$ 0.036 & 0.990 $\pm$ 0.004 & 0.992 $\pm$ 0.002 \\
& (1.2,0.01) & 0.964 $\pm$ 0.036 & 0.990 $\pm$ 0.004 & 0.992 $\pm$ 0.002 \\
& (1.2,0.05) & 0.964 $\pm$ 0.037 & 0.989 $\pm$ 0.004 & 0.991 $\pm$ 0.002 \\
& (1.2,0.1) & 0.964 $\pm$ 0.037 & 0.989 $\pm$ 0.004 & 0.991 $\pm$ 0.002 \\
& (1.2,0.5) & 0.924 $\pm$ 0.031 & 0.981 $\pm$ 0.006 & 0.984 $\pm$ 0.004 \\
\bottomrule
\end{tabular}
\end{small}
\end{center}
\end{table}

\begin{table}
\caption{The detail of accuracy (mean$\pm$std) in Fig~\ref{fig:robust_letter}.}
\label{tab:Letter robust}
\begin{center}
\begin{small}
\begin{tabular}{lcccc}
\toprule
Dataset & ($\epsilon,\sigma$) & Unbiased & ReLU & ABS \\
\midrule
& (0.8,0.0) & 0.737 $\pm$ 0.037 & 0.929 $\pm$ 0.004 & 0.943 $\pm$ 0.005 \\
& (0.8,0.01) & 0.737 $\pm$ 0.037 & 0.929 $\pm$ 0.004 & 0.943 $\pm$ 0.005 \\
& (0.8,0.05) & 0.769 $\pm$ 0.035 & 0.930 $\pm$ 0.004 & 0.943 $\pm$ 0.004 \\
& (0.8,0.1) & 0.769 $\pm$ 0.035 & 0.930 $\pm$ 0.004 & 0.943 $\pm$ 0.004 \\
& (0.8,0.5) & 0.797 $\pm$ 0.034 & 0.913 $\pm$ 0.004 & 0.933 $\pm$ 0.004 \\
& (0.9,0.0) & 0.774 $\pm$ 0.023 & 0.933 $\pm$ 0.004 & 0.947 $\pm$ 0.005 \\
& (0.9,0.01) & 0.774 $\pm$ 0.023 & 0.933 $\pm$ 0.004 & 0.947 $\pm$ 0.005 \\
& (0.9,0.05) & 0.782 $\pm$ 0.042 & 0.932 $\pm$ 0.007 & 0.948 $\pm$ 0.005 \\
& (0.9,0.1) & 0.782 $\pm$ 0.042 & 0.932 $\pm$ 0.007 & 0.948 $\pm$ 0.005 \\
& (0.9,0.5) & 0.807 $\pm$ 0.036 & 0.919 $\pm$ 0.007 & 0.936 $\pm$ 0.005 \\
& (1.0,0.0) & 0.752 $\pm$ 0.067 & 0.934 $\pm$ 0.007 & 0.948 $\pm$ 0.005 \\
& (1.0,0.01) & 0.752 $\pm$ 0.067 & 0.934 $\pm$ 0.007 & 0.948 $\pm$ 0.005 \\
Letter & (1.0,0.05) & 0.750 $\pm$ 0.054 & 0.934 $\pm$ 0.003 & 0.949 $\pm$ 0.004 \\
& (1.0,0.1) & 0.750 $\pm$ 0.054 & 0.934 $\pm$ 0.003 & 0.949 $\pm$ 0.004 \\
& (1.0,0.5) & 0.831 $\pm$ 0.043 & 0.920 $\pm$ 0.005 & 0.939 $\pm$ 0.005 \\
& (1.1,0.0) & 0.779 $\pm$ 0.048 & 0.933 $\pm$ 0.007 & 0.951 $\pm$ 0.001 \\
& (1.1,0.01) & 0.779 $\pm$ 0.048 & 0.933 $\pm$ 0.007 & 0.951 $\pm$ 0.001 \\
& (1.1,0.05) & 0.817 $\pm$ 0.053 & 0.937 $\pm$ 0.005 & 0.951 $\pm$ 0.005 \\
& (1.1,0.1) & 0.817 $\pm$ 0.053 & 0.937 $\pm$ 0.005 & 0.951 $\pm$ 0.005 \\
& (1.1,0.5) & 0.771 $\pm$ 0.030 & 0.918 $\pm$ 0.006 & 0.941 $\pm$ 0.004 \\
& (1.2,0.0) & 0.745 $\pm$ 0.059 & 0.936 $\pm$ 0.007 & 0.952 $\pm$ 0.005 \\
& (1.2,0.01) & 0.745 $\pm$ 0.059 & 0.936 $\pm$ 0.007 & 0.952 $\pm$ 0.005 \\
& (1.2,0.05) & 0.749 $\pm$ 0.042 & 0.936 $\pm$ 0.007 & 0.952 $\pm$ 0.004 \\
& (1.2,0.1) & 0.749 $\pm$ 0.042 & 0.936 $\pm$ 0.007 & 0.952 $\pm$ 0.004 \\
& (1.2,0.5) & 0.736 $\pm$ 0.024 & 0.920 $\pm$ 0.003 & 0.942 $\pm$ 0.006 \\
\bottomrule
\end{tabular}
\end{small}
\end{center}
\end{table}

\begin{table}
\caption{The detail of accuracy (mean$\pm$std) in Fig~\ref{fig:robust_pmu-ud}.}
\label{tab:PMU-UD robust}
\begin{center}
\begin{small}
\begin{tabular}{lcccc}
\toprule
Dataset & ($\epsilon,\sigma$) & Unbiased & ReLU & ABS \\
\midrule
& (0.8,0.0) & 0.873 $\pm$ 0.066 & 0.973 $\pm$ 0.008 & 0.975 $\pm$ 0.010 \\
& (0.8,0.01) & 0.873 $\pm$ 0.066 & 0.973 $\pm$ 0.008 & 0.975 $\pm$ 0.010 \\
& (0.8,0.05) & 0.870 $\pm$ 0.064 & 0.953 $\pm$ 0.020 & 0.971 $\pm$ 0.009 \\
& (0.8,0.1) & 0.870 $\pm$ 0.064 & 0.953 $\pm$ 0.020 & 0.971 $\pm$ 0.009 \\
& (0.8,0.5) & 0.862 $\pm$ 0.058 & 0.950 $\pm$ 0.019 & 0.964 $\pm$ 0.011 \\
& (0.9,0.0) & 0.875 $\pm$ 0.085 & 0.960 $\pm$ 0.032 & 0.975 $\pm$ 0.009 \\
& (0.9,0.01) & 0.875 $\pm$ 0.085 & 0.960 $\pm$ 0.032 & 0.975 $\pm$ 0.009 \\
& (0.9,0.05) & 0.872 $\pm$ 0.083 & 0.959 $\pm$ 0.017 & 0.977 $\pm$ 0.009 \\
& (0.9,0.1) & 0.872 $\pm$ 0.083 & 0.959 $\pm$ 0.017 & 0.977 $\pm$ 0.009 \\
& (0.9,0.5) & 0.863 $\pm$ 0.072 & 0.955 $\pm$ 0.006 & 0.967 $\pm$ 0.009 \\
& (1.0,0.0) & 0.892 $\pm$ 0.092 & 0.970 $\pm$ 0.018 & 0.978 $\pm$ 0.007 \\
& (1.0,0.01) & 0.892 $\pm$ 0.092 & 0.970 $\pm$ 0.018 & 0.978 $\pm$ 0.007 \\
PMU-UD & (1.0,0.05) & 0.898 $\pm$ 0.069 & 0.978 $\pm$ 0.005 & 0.976 $\pm$ 0.007 \\
& (1.0,0.1) & 0.898 $\pm$ 0.069 & 0.978 $\pm$ 0.005 & 0.976 $\pm$ 0.007 \\
& (1.0,0.5) & 0.872 $\pm$ 0.068 & 0.955 $\pm$ 0.023 & 0.965 $\pm$ 0.014 \\
& (1.1,0.0) & 0.904 $\pm$ 0.074 & 0.978 $\pm$ 0.009 & 0.974 $\pm$ 0.012 \\
& (1.1,0.01) & 0.904 $\pm$ 0.074 & 0.978 $\pm$ 0.009 & 0.974 $\pm$ 0.012 \\
& (1.1,0.05) & 0.901 $\pm$ 0.080 & 0.979 $\pm$ 0.008 & 0.980 $\pm$ 0.007 \\
& (1.1,0.1) & 0.901 $\pm$ 0.080 & 0.979 $\pm$ 0.008 & 0.980 $\pm$ 0.007 \\
& (1.1,0.5) & 0.889 $\pm$ 0.075 & 0.959 $\pm$ 0.011 & 0.968 $\pm$ 0.014 \\
& (1.2,0.0) & 0.912 $\pm$ 0.045 & 0.981 $\pm$ 0.009 & 0.977 $\pm$ 0.014 \\
& (1.2,0.01) & 0.912 $\pm$ 0.045 & 0.981 $\pm$ 0.009 & 0.977 $\pm$ 0.014 \\
& (1.2,0.05) & 0.900 $\pm$ 0.052 & 0.978 $\pm$ 0.002 & 0.980 $\pm$ 0.006 \\
& (1.2,0.1) & 0.900 $\pm$ 0.052 & 0.978 $\pm$ 0.002 & 0.980 $\pm$ 0.006 \\
& (1.2,0.5) & 0.906 $\pm$ 0.072 & 0.950 $\pm$ 0.011 & 0.967 $\pm$ 0.013 \\
\bottomrule
\end{tabular}
\end{small}
\end{center}
\end{table}

\newpage

\section{Geometric Interpretation of Sconf and ConfDiff}\label{appendix:geometry}
In this section, we provide a geometric interpretation of similarity-confidence and confidence-difference. Based on the definition of similarity-confidence, the following equation holds.
\begin{align*}
s(\bm{x},\bm{x}')=&p(y=y' \mid \bm{x},\bm{x}')\\
=&p(y=y'=+1 \mid \bm{x},\bm{x}')+p(y=y'=-1 \mid \bm{x},\bm{x}')\\
=&\frac{p(\bm{x},y=+1,\bm{x}',y'=+1)+p(\bm{x},y=-1,\bm{x}',y'=-1)}{p(\bm{x},\bm{x}')}\\
=&\frac{p(\bm{x},y=+1)p(\bm{x}',y'=+1)+p(\bm{x},y=-1)p(\bm{x}',y'=-1)}{p(\bm{x})p(\bm{x}')}\\
=&p(y=+1 \mid \bm{x})p(y'=+1 \mid \bm{x}')+p(y=-1 \mid \bm{x})p(y'=-1 \mid \bm{x}')\\
=&p(y=+1 \mid \bm{x})p(y'=+1 \mid \bm{x}')+(1-p(y=+1 \mid \bm{x}))(1-p(y'=+1 \mid \bm{x}'))
\end{align*}

According to this result, in Fig.~\ref{fig:geometry}, similarity-confidence computes the inner product of their confidences between two samples using the inner product of their confidences. In contrast, confidence-difference is based on the difference along the horizontal axis in Fig.~\ref{fig:geometry}. Therefore, both similarity-confidence and confidence-difference can be interpreted as measuring the relative position of confidence values between two samples. However, since they measure it in different ways, the label information encoded in the data pairs is considered to be fundamentally different.

\begin{figure}
    \centering
    \includegraphics[width=0.8\linewidth]{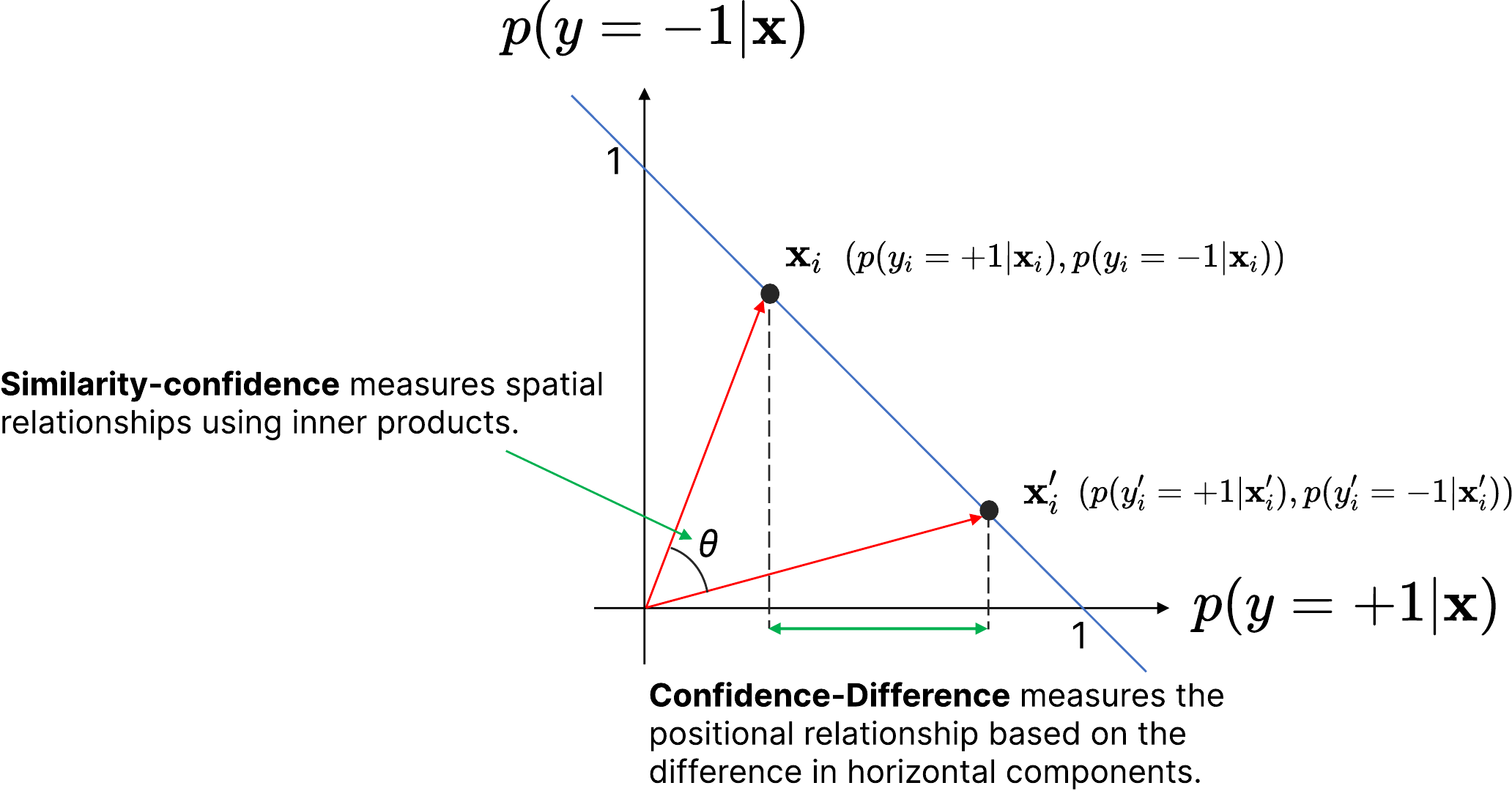}
    \caption{Geometric Interpretation of Similarity-Confidence and Confidence-Difference.}
    \label{fig:geometry}
\end{figure}

\end{document}